\def\shownotes{1}  \ifnum\shownotes=1
\newcommand{\authnote}[2]{[ #2 --#1 ]}
\newcommand{\authnote}[2]{}
\title{Robust Optimization for Fairness \\ with Noisy Protected Groups}
\newcommand*\samethanks[1][\value{footnote}]{\footnotemark[#1]}
\author{%
  Serena Wang
  \thanks{First two authors have equal contributions.} \\
  UC Berkeley\\
  Google Research \\
  \texttt{serenalwang@berkeley.edu} \\
  \And
  Wenshuo Guo 
  \samethanks  \\
  UC Berkeley\\
  \texttt{wsguo@berkeley.edu} \\
  \AND
  Harikrishna Narasimhan \\
  Google Research\\
  \texttt{hnarasimhan@google.com} \\
   \And
  Andrew Cotter \\
  Google Research\\
  \texttt{acotter@google.com} \\
   \AND
  Maya Gupta\\
  Google Research\\
  \texttt{mayagupta@google.com} \\
  \And
  Michael I. Jordan \\
  UC Berkeley\\
  \texttt{jordan@berkeley.edu} \\
}
\begin{document}

\maketitle

\begin{abstract}
Many existing fairness criteria for machine learning involve equalizing some metric across \textit{protected groups} such as race or gender. However, practitioners trying to audit or enforce such group-based criteria can easily face the problem of noisy or biased protected group information. First, we study the consequences of na{\"i}vely relying on noisy protected group labels: we provide an upper bound on the fairness violations on the true groups $G$ when the fairness criteria are satisfied on noisy groups $\hat{G}$. Second, we introduce two new approaches using robust optimization that, unlike the na{\"i}ve approach of only relying on $\hat{G}$, are guaranteed to satisfy fairness criteria on the true protected groups $G$ while minimizing a training objective. We provide theoretical guarantees that one such approach converges to an optimal feasible solution. Using two case studies, we show empirically that the robust approaches achieve better true group fairness guarantees than the na{\"i}ve approach. 

\end{abstract}

\section{Introduction}

As machine learning becomes increasingly pervasive in real-world decision making, the question of ensuring \textit{fairness} of ML models becomes increasingly important. The definition of what it means to be ``fair'' is highly context dependent. Much work has been done on developing mathematical fairness criteria according to various societal and ethical notions of fairness, as well as methods for building machine-learning models that satisfy those fairness criteria \citep[see, e.g.,][]{dwork:2012,Hardt:2016,Russell:2017, Kusner:2017,Zafar:2017,cotter2019optimization,Friedler:2019,WangGupta:2020}. 

Many of these mathematical fairness criteria are \textit{group-based}, where a target metric is equalized or enforced over subpopulations in the data, also known as \textit{protected groups}. For example, the \textit{equality of opportunity} criterion introduced by \citet{Hardt:2016} specifies that the true positive rates for a binary classifier are equalized across protected groups. The \emph{demographic parity}~\citep{dwork:2012} criterion requires that a classifier's positive prediction rates are equal for all protected groups. 

One important practical question is whether or not these fairness notions can be reliably measured or enforced if the protected group information is noisy, missing, or unreliable. For example, survey participants may be incentivized to obfuscate their responses for fear of disclosure or discrimination, or may be subject to other forms of response bias. Social desirability response bias may affect participants' answers regarding religion, political affiliation, or sexual orientation~\citep{Krumpal:2011}. The collected data may also be outdated: census data collected ten years ago may not an accurate representation for measuring fairness today. 

Another source of noise arises from estimating the labels of the protected groups.  For various image recognition tasks (e.g., face detection), one may want to measure fairness across protected groups such as gender or race. However, many large image corpora do not include protected group labels, and one might instead use a separately trained classifier to estimate group labels, which is likely to be noisy \cite{Joy:2018}. Similarly, zip codes can act as a noisy indicator for socioeconomic groups. 

In this paper, we focus on the problem of training binary classifiers with fairness constraints when only noisy labels, $\hat{G} \in \{1,...,\hat{m}\}$, are available for $m$ true protected groups, $G \in \{1,...,m\}$, of interest. We study two aspects: First, if one satisfies fairness constraints for noisy protected groups $\hat{G}$, what can one say with respect to those fairness constraints for the true groups $G$? Second, how can side information about the noise model between $\hat{G}$ and $G$ be leveraged to better enforce fairness with respect to the true groups $G$?

\textbf{Contributions:} Our contributions are three-fold:
\begin{enumerate}[noitemsep, topsep=0em]
  \item We provide a bound on the fairness violations with respect to the true groups $G$ when the fairness criteria are satisfied for the noisy groups $\hat{G}$. 
  \item We introduce two new robust-optimization methodologies that satisfy fairness criteria on the true protected groups $G$ while minimizing a training objective. These methodologies differ in convergence properties, conservatism, and noise model specification. 
  %
  \item We show empirically that unlike the na{\"i}ve approach, our two proposed approaches are able to satisfy fairness criteria with respect to the true groups $G$ on average. 
\end{enumerate}

The first approach we propose (Section \ref{sec:dro}) is based on distributionally robust optimization (DRO) \citep{Duchi:2018, ben2013robust}. Let $p$ denotes the full distribution of the data $X,Y \sim p$. Let $p_j$ be the distribution of the data conditioned on the true groups being $j$, so $X, Y | G = j \sim p_j$; and $\hat{p}_j$ be the distribution of $X, Y$ conditioned on the noisy groups. Given an upper bound on the total variation (TV) distance $\gamma_j \geq TV(p_j, \hat{p}_j)$ for each $j \in \{1,...,m\}$, we define $\tilde{p}_j$ such that the conditional distributions ($X,Y | \tilde{G}=j \sim \tilde{p}_j$) fall within the bounds $\gamma_i$ with respect to $\hat{G}$. Therefore, the set of all such $\tilde{p}_j$ is guaranteed to include the unknown true group distribution $p_j, \forall j\in\mathcal{G}$. Because it is based on the well-studied DRO setting, this approach has the advantage of being easy to analyze. However, the results may be overly conservative unless tight bounds $\{\gamma_j\}_{j=1}^m$ can be given.

 
Our second robust optimization strategy (Section \ref{sec:softweights}) uses a robust re-weighting of the data from soft protected group assignments, inspired by criteria proposed by \citet{Kallus:2020} for auditing the fairness of ML models given imperfect group information. Extending their work, we \emph{optimize} a constrained problem to achieve their robust fairness criteria, and provide a theoretically ideal algorithm that is guaranteed to converge to an optimal feasible point, as well as an alternative practical version that is more computationally tractable. 
Compared to DRO, this second approach uses a more precise noise model, $P(\hat{G} = k | G = j)$, between $\hat{G}$ and $G$ for all pairs of group labels $j,k$, that can be estimated from a small auxiliary dataset containing ground-truth labels for both $G$ and $\hat{G}$. An advantage of this more detailed noise model is that a practitioner can incorporate knowledge of any bias in the relationship between $G$ and $\hat{G}$ (for instance, survey respondents favoring one socially preferable response over others), which causes it to be less likely than DRO to result in an overly-conservative model. Notably, this approach does \emph{not} require that $\hat{G}$ be a direct approximation of $G$---in fact, $G$ and $\hat{G}$ can represent distinct (but related) groupings, or even groupings of different sizes, with the noise model tying them together. For example, if $G$ represents ``language spoken at home,'' then $\hat{G}$ could be a noisy estimate of ``country of residence.''

\section{Related work}


\textbf{Constrained optimization for group-based fairness metrics:}
The simplest techniques for enforcing group-based constraints apply a post-hoc correction of an existing classifier \cite{Hardt:2016,Woodworth:2017}. For example, one can enforce \textit{equality of opportunity} by choosing different decision thresholds for an existing binary classifier for each protected group~\citep{Hardt:2016}. However, the classifiers resulting from these post-processing techniques may not necessarily be optimal in terms of accuracy. Thus, constrained optimization techniques have emerged to train machine-learning models that can more optimally satisfy the fairness constraints while minimizing a training objective~\citep{Goh:2016,Cotter:ALT, cotter2019optimization, Zafar:2017, Agarwal:2018, Donini:2018, Narasimhan2019:3players}. 


\textbf{Fairness with noisy protected groups:}
Group-based fairness notions rely on the knowledge of \textit{protected group} labels. However, practitioners may only have access to noisy or unreliable protected group information.  One may na{\"i}vely try to enforce fairness constraints with respect to these noisy protected groups using the above constrained optimization techniques, but there is no guarantee that the resulting classifier will satisfy the fairness criteria with respect to the true protected groups \citep{gupta2018proxy}.

Under the conservative assumption that a practitioner has no information about the protected groups, \citet{Hashimoto:2018} applied DRO to enforce what \citet{lahoti2020fairness} refer to as \textit{Rawlsian Max-Min fairness}. In contrast, here we assume some knowledge of a noise model for the noisy protected groups, and are thus able to provide tighter results with DRO: we provide a practically meaningful maximum total variation distance bound to enforce in the DRO procedure.
We further extend \citet{Hashimoto:2018}'s work by applying DRO to problems equalizing fairness metrics over groups, which may be desired in some practical applications~\citep{Kolodny:2019}. 

Concurrently, \citet{lahoti2020fairness} proposed an adversarial reweighting approach to improve group fairness by assuming that non-protected features and task labels are correlated with unobserved groups. Like \citet{Hashimoto:2018}, \citet{lahoti2020fairness} also enforce \textit{Rawlsian Max-Min fairness} with unknown protected groups, whereas our setup includes constraints for parity based fairness notions.

\citet{Kallus:2020} considered the problem of \textit{auditing} fairness criteria given noisy groups. They propose a ``robust'' fairness criteria using soft group assignments and show that if a given model satisfies those fairness criteria with respect to the noisy groups, then the model will satisfy the fairness criteria with respect to the true groups.
Here, we build on that work by providing an algorithm for training a model that satisfies their robust fairness criteria while minimizing a training objective. 


\citet{Lamy:2019} showed that when there are only two protected groups, one need only tighten the ``unfairness tolerance'' when enforcing fairness with respect to the noisy groups. \citet{mozannar2020fair} showed that if the predictor is independent of the protected attribute, then fairness with respect to the noisy groups is the same as fairness with respect to the true groups. When there are more than two groups, and when the noisy groups are included as an input to the classifier, other robust optimization approaches may be necessary. 
When using post-processing instead of constrained optimization, \citet{Awasthi:2020} showed that under certain conditional independence assumptions, post-processing using the noisy groups will not be worse in terms of fairness violations than not post-processing at all. In our work, we consider the problem of training the model subject to fairness constraints, rather than taking a trained model as given and only allowing post-processing, and we do not rely on conditional independence assumptions. Indeed, the model may include the noisy protected attribute as a feature.

\textbf{Robust optimization:} We use a minimax set-up of a two-player game where the uncertainty is adversarial, and one minimizes a worst-case objective over a feasible set~\citep{BEN:09, bertsimas2011theory}; e.g., the noise is contained in a unit-norm ball around the input data. As one such approach, we apply a recent line of work on DRO which assumes that the uncertain distributions of the data are constrained to belong to a certain set~\citep{Namkoong:2016, Duchi:2018, Li:2019}. 


\section{Optimization problem setup}
We begin with the training problem for incorporating group-based fairness criteria in a learning setting \citep{Goh:2016,Hardt:2016,Donini:2018,Agarwal:2018,cotter2019optimization}. Let $X \in \mathcal{X} \subseteq \mathbb{R}^D$ be a random variable representing a feature vector, with a random binary label $Y \in \mathcal{Y} = \{0,1\}$ and random protected group membership $G \in \mathcal{G} = \{1,...,m\}$. In addition, let $\hat{G} \in \hat{\mathcal{G}} = \{1,...,\hat{m}\}$ be a random variable representing the noisy protected group label for each $(X, Y)$, which we assume we have access to during training. For simplicity, assume that $\hat{\mathcal{G}} = \mathcal{G}$ (and $\hat{m} = m$). Let $\phi(X;\theta)$ represent a binary classifier with parameters $\theta \in \Theta$ where $\phi(X;\theta) > 0$ indicates a positive classification.

Then, training with fairness constraints \citep{Goh:2016,Hardt:2016,Donini:2018,Agarwal:2018,cotter2019optimization} is:
\begin{equation} \label{eq:orig_short}
\min_{\theta} \quad f(\theta)\quad \textrm{ s.t. } \quad g_j(\theta) \leq 0, \forall j\in\mathcal{G},
\end{equation}
The objective function $f(\theta) = \E[l(\theta, X, Y)]$, where $l(\theta, X, Y)$ is any standard binary classifier training loss. The constraint functions $g_j(\theta) = \E[h(\theta, X, Y) | G = j]$ for $j \in \mathcal{G}$, where $h(\theta, X, Y)$
is the target fairness metric, e.g. $h(\theta,X,Y) = \Ind\big(\phi(X;\theta) > 0\big) - \E[\Ind\big(\phi(X;\theta) > 0\big)]$ when equalizing positive rates for the \textit{demographic parity} \citep{dwork:2012} criterion (see \cite{cotter2019optimization} for more examples). Algorithms have been studied for problem (\ref{eq:orig_short}) when the true protected group labels $G$ are given~\citep[see, e.g.,][]{Eban:2017,Agarwal:2018,cotter2019optimization}.


\section{Bounds for the na{\"i}ve approach}\label{sec:naive}
When only given the noisy groups $\hat{G}$, one na{\"i}ve approach to solving problem (\ref{eq:orig_short}) is to simply re-define the constraints using the noisy groups~\citep{gupta2018proxy}: 
\begin{align}
\min_{\theta} \quad f(\theta) \quad
\textrm{s.t. } \quad  \hat{g}_j(\theta) \leq 0, \;\; \forall j \in \mathcal{G},
\label{eq:naiveproxy}
\end{align}
where $\hat{g}_j(\theta) = \E[h(\theta, X, Y) | \hat{G} = j], \;\;j \in \mathcal{G} $. 

This introduces a practical question: if a model was constrained to satisfy fairness criteria on the noisy groups, how far would that model be from satisfying the constraints on the true groups? We show that the fairness violations on the true groups $G$ can at least be bounded when the fairness criteria are satisfied on the noisy groups $\hat{G}$, provided that $\hat{G}$ does not deviate too much from $G$.


\subsection{Bounding fairness constraints using TV distance}
\label{sec:bound_TV}
Recall that $X,Y | G = j \sim p_j$ and $X,Y | \hat{G} = j \sim \hat{p}_j$. We use the TV distance $TV(p_j, \hat{p}_j)$ to measure the distance between the probability distributions $p_j$ and $\hat{p}_j$ (see Appendix \ref{app:proofs-TV} and \citet{optimal_transport}). Given a bound on $TV(p_j, \hat{p}_j)$, we obtain a bound on fairness violations for the true groups when na{\"i}vely solving the optimization problem (\ref{eq:naiveproxy}) using only the noisy groups:

\begin{theorem}\label{thm:general} (proof in Appendix \ref{app:proofs-TV}.) Suppose a model with parameters $\theta$ satisfies fairness criteria with respect to the noisy groups $\hat{G}$:
$\hat{g}_j(\theta) \leq 0, \;\;\forall j \in \mathcal{G}. $
Suppose $|h(\theta,x_1,y_1) - h(\theta, x_2, y_2)| \leq 1$ for any $(x_1, y_1) \neq (x_2, y_2)$. If $TV(p_j, \hat{p}_j) \leq \gamma_j$ for all $j \in \mathcal{G}$, then the fairness criteria with respect to the true groups $G$ will be satisfied within slacks $\gamma_j$ for each group:
$g_j(\theta) \leq \gamma_j, \;\; \forall j \in \mathcal{G}.$
\end{theorem}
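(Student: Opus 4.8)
The plan is to reduce the statement to a single standard inequality: that the difference between the expectations of a bounded function under two distributions is controlled by their total variation distance. Fix an arbitrary group $j \in \mathcal{G}$. By definition $g_j(\theta) = \E[h(\theta,X,Y)\mid G=j] = \E_{p_j}[h]$ and $\hat g_j(\theta) = \E[h(\theta,X,Y)\mid \hat G=j] = \E_{\hat p_j}[h]$. I would write the identity $g_j(\theta) = \hat g_j(\theta) + \big(g_j(\theta) - \hat g_j(\theta)\big)$ and use the hypothesis $\hat g_j(\theta) \le 0$, so that the whole theorem collapses to showing the one-sided bound $g_j(\theta) - \hat g_j(\theta) = \E_{p_j}[h] - \E_{\hat p_j}[h] \le \gamma_j$. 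The fairness assumption and the TV bound then enter only through this reduction and a final trivial addition.

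The substantive step is to bound this difference of expectations by $TV(p_j,\hat p_j)$, and this is exactly where the oscillation hypothesis $|h(\theta,x_1,y_1) - h(\theta,x_2,y_2)| \le 1$ is used. I would invoke the coupling characterization of total variation: since the sample space $\mathcal{X}\times\mathcal{Y} \subseteq \mathbb{R}^D \times \{0,1\}$ is Polish, there is a coupling $\pi$ of $p_j$ and $\hat p_j$, with marginals $(X,Y)\sim p_j$ and $(X',Y')\sim \hat p_j$, attaining $\Pr_\pi\big[(X,Y) \neq (X',Y')\big] = TV(p_j,\hat p_j)$. Then $\E_{p_j}[h] - \E_{\hat p_j}[h] = \E_\pi\big[h(\theta,X,Y) - h(\theta,X',Y')\big]$, and because the integrand vanishes on the event $\{(X,Y) = (X',Y')\}$ and is bounded in absolute value by $1$ on its complement (by the oscillation bound), the difference is at most $\Pr_\pi\big[(X,Y)\neq(X',Y')\big] = TV(p_j,\hat p_j) \le \gamma_j$. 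Combining with $\hat g_j(\theta) \le 0$ yields $g_j(\theta) \le \gamma_j$, and since $j$ was arbitrary the conclusion holds for every group.

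I expect the only delicate point to be justifying this expectation–TV inequality, and even that is mild. If one prefers to avoid appealing to the existence of an optimal coupling, the same bound follows from the dual form $TV(P,Q) = \sup_{0 \le f \le 1}\big(\E_P[f] - \E_Q[f]\big)$: setting $f = h(\theta,\cdot,\cdot) - \inf h(\theta,\cdot,\cdot)$, the oscillation hypothesis gives $0 \le f \le 1$, while shifting by a constant leaves $\E_{p_j}[h] - \E_{\hat p_j}[h]$ unchanged, so the difference is at most $TV(p_j,\hat p_j)$. Either route reduces the theorem to this standard fact, and the hypotheses $\hat g_j(\theta) \le 0$ and $TV(p_j,\hat p_j) \le \gamma_j$ contribute only the final arithmetic; I do not anticipate any further obstacle.
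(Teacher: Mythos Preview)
Your proposal is correct and follows essentially the same approach as the paper: decompose $g_j(\theta) = \hat{g}_j(\theta) + (g_j(\theta) - \hat{g}_j(\theta))$, use $\hat{g}_j(\theta) \le 0$, and bound the difference of expectations by $TV(p_j,\hat{p}_j)$ via the oscillation hypothesis on $h$. The only cosmetic difference is that the paper invokes the Kantorovich--Rubinstein theorem (the dual characterization of TV as a Wasserstein distance for the discrete metric) to obtain $|\E_{p_j}[h] - \E_{\hat p_j}[h]| \le TV(p_j,\hat p_j)$, whereas you supply a direct coupling argument for the same inequality and note the dual route as an alternative; these are two standard faces of the same fact.
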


Theorem \ref{thm:general} is tight for the family of functions $h$ that satisfy $|h(\theta,x_1,y_1) - h(\theta, x_2, y_2)| \leq 1$ for any $(x_1, y_1) \neq (x_2, y_2)$. This condition holds for any fairness metrics based on rates such as demographic parity, where $h$ is simply some scaled combination of indicator functions. \citet{cotter2019optimization} list many such rate-based fairness metrics. Theorem \ref{thm:general} can be generalized to functions $h$ whose differences are not bounded by 1 by looking beyond the TV distance to more general Wasserstein distances between $p_j$ and $\hat{p}_j$. We show this in Appendix \ref{app:wass}, but for all fairness metrics referenced in this work, formulating Theorem \ref{thm:general} with the TV distance is sufficient. 

\subsection{Estimating the TV distance bound in practice}

Theorem \ref{thm:general} bounds the fairness violations of the na{\"i}ve approach in terms of the TV distance between the conditional distributions $p_j$ and $\hat{p}_j$, which assumes knowledge of $p_j$ and is not always possible to estimate. Instead, we can estimate an upper bound on $TV(p_j, \hat{p}_j)$ from metrics that are easier to obtain in practice. Specifically, the following lemma shows that shows that if the prior on class $j$ is unaffected by the noise, $P(G \neq \hat{G} | G = j)$ directly translates into an upper bound on $TV(p_j, \hat{p}_j)$. 

\begin{lemma}\label{lem:tv_bound} (proof in Appendix \ref{app:proofs-TV}.) Suppose $P(G = j) = P(\hat{G} = j)$ for a given $j \in \mathcal{G}$. Then $TV(p_j, \hat{p}_j) \leq P(G \neq \hat{G} | G = j) $.
\end{lemma}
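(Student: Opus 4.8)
The plan is to work directly from the variational definition of total variation, $TV(p_j,\hat p_j)=\sup_A |p_j(A)-\hat p_j(A)|$, where the supremum ranges over measurable subsets $A$ of the $(X,Y)$ space, and to reduce the claimed bound to a statement about the off-diagonal mass of the joint law of $(G,\hat G)$. Writing $\pi_j := P(G=j) = P(\hat G=j)$, where the two agree by hypothesis, the idea is that the conditioning can be undone by multiplying through by $\pi_j$: for any measurable $A$,
\[
\pi_j\big(p_j(A)-\hat p_j(A)\big) = P\big((X,Y)\in A,\, G=j\big) - P\big((X,Y)\in A,\, \hat G=j\big).
\]

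First I would decompose each joint probability on the right according to whether the other group label also equals $j$. The ``diagonal'' term $P((X,Y)\in A,\, G=j,\, \hat G=j)$ appears in both expansions and cancels, leaving a difference of two nonnegative off-diagonal terms, $P((X,Y)\in A,\, G=j,\, \hat G\neq j) - P((X,Y)\in A,\, G\neq j,\, \hat G=j)$. Since both are nonnegative, the absolute value of their difference is at most their maximum, and each is in turn bounded by dropping the event $\{(X,Y)\in A\}$, yielding the marginal off-diagonal masses $P(G=j,\hat G\neq j)$ and $P(G\neq j,\hat G=j)$.

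The key step is to invoke the prior-matching hypothesis. Expanding $\pi_j = P(G=j)$ and $\pi_j = P(\hat G=j)$ each as a diagonal plus an off-diagonal piece and subtracting shows that the two off-diagonal masses coincide, $P(G=j,\hat G\neq j)=P(G\neq j,\hat G=j)$. Hence the bound above collapses to the single quantity $P(G=j,\hat G\neq j)$, which equals $P(\hat G\neq j\mid G=j)\,\pi_j = P(G\neq\hat G\mid G=j)\,\pi_j$, since on the event $G=j$ the condition $\hat G\neq j$ is the same as $\hat G\neq G$. Dividing through by $\pi_j$ and taking the supremum over $A$ then gives $TV(p_j,\hat p_j)\le P(G\neq\hat G\mid G=j)$, as claimed; the degenerate case $\pi_j=0$ is vacuous, since the conditional distributions are then undefined.

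The main obstacle is less a technical hurdle than identifying the right bookkeeping: the whole argument hinges on recognizing that the prior-matching assumption $P(G=j)=P(\hat G=j)$ is precisely the condition that forces the two off-diagonal joint masses to be equal, so that the crude bound $|a-b|\le\max(a,b)$ tightens to the clean conditional noise rate $P(G\neq\hat G\mid G=j)$. I would also want to confirm that the $\sup_A |p_j(A)-\hat p_j(A)|$ definition of $TV$ used here matches the normalization adopted in Appendix \ref{app:proofs-TV}, as a factor-of-$\tfrac12$ convention would alter the constant in the statement.
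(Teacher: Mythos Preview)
Your proposal is correct and follows essentially the same approach as the paper's proof: both use the variational characterization $TV(p_j,\hat p_j)=\sup_A|p_j(A)-\hat p_j(A)|$, decompose according to whether the \emph{other} group label equals $j$, cancel the diagonal contribution, and invoke the prior-matching hypothesis $P(G=j)=P(\hat G=j)$ to handle the off-diagonal terms. The only cosmetic difference is that you multiply through by $\pi_j$ and work with joint probabilities (so the diagonal term cancels automatically and the hypothesis is used once, to equate $P(G=j,\hat G\ne j)$ with $P(G\ne j,\hat G=j)$), whereas the paper stays with conditional probabilities and invokes the hypothesis via $P(\hat G=i\mid G=i)=P(G=i\mid \hat G=i)$; your bookkeeping is arguably a touch cleaner, but the substance is identical, and the paper does use the $\sup_A$ normalization you were worried about.
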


In practice, an estimate of $P(G \neq \hat{G} | G = j)$ may come from a variety of sources. As assumed by \citet{Kallus:2020}, a practitioner may have access to an \textit{auxiliary} dataset containing $G$ and $\hat{G}$, but not $X$ or $Y$. Or, practitioners may have some prior estimate of $P(G \neq \hat{G} | G = j)$: if $\hat{G}$ is estimated by mapping zip codes to the most common socioeconomic group for that zip code, then census data provides a prior for how often $\hat{G}$ produces an incorrect socioeconomic group. 

By relating Theorem \ref{thm:general} to realistic noise models, Lemma \ref{lem:tv_bound} allows us to bound the fairness violations of the na{\"i}ve approach using quantities that can be estimated empirically. In the next section we show that Lemma \ref{lem:tv_bound} can also be used to produce a \textit{robust} approach that will actually guarantee full satisfaction of the fairness violations on the true groups $G$.

\section{Robust Approach 1: Distributionally robust optimization (DRO)}\label{sec:dro}

While Theorem \ref{thm:general} provides an upper bound on the performance of the na{\"i}ve approach, it fails to provide a guarantee that the constraints on the true groups are satisfied, i.e. $g_j(\theta) \leq 0$. Thus, it is important to find other ways to do better than the na{\"i}ve optimization problem (\ref{eq:naiveproxy}) in terms of satisfying the constraints on the true groups. In particular, suppose in practice we are able to assert that $P(G \neq \hat{G} | G = j) \leq \gamma_j$ for all groups $j \in \mathcal{G}$. Then Lemma \ref{lem:tv_bound} implies a bound on TV distance between the conditional distributions on the true groups and the noisy groups: $TV(p_j, \hat{p}_j) \leq \gamma_j$. 
Therefore, any feasible solution to the following constrained optimization problem is guaranteed to satisfy the fairness constraints on the true groups:
\begin{align}
\min_{\theta \in \Theta} \quad f(\theta) \quad
\text{s.t.} \quad \max_{\substack{\tilde{p}_j: TV(\tilde{p}_j, \hat{p}_j) \leq \gamma_j \\ \tilde{p}_j \ll p}}  \tilde{g}_j(\theta) \leq 0, \;\; \forall j\in \mathcal{G},
\label{eq:dro}
\end{align}
where $\tilde{g}_j(\theta)= \E_{X,Y \sim \tilde{p}_j}[h(\theta, X, Y)]$, and $\tilde{p}_j \ll p$ denotes absolute continuity.

\subsection{General DRO formulation}
A DRO problem is a minimax optimization~\citep{Duchi:2018}: 
\begin{equation}\label{eq:general_dro}
    \min_{\theta \in \Theta} \max_{q: D(q,p) \leq \gamma} \; \E_{X, Y \sim q}[l(\theta, X,Y)],
\end{equation}
where $D$ is some divergence metric between the distributions $p$ and $q$, and $l:\Theta \times \mathcal{X} \times \mathcal{Y} \to \mathbb{R}$. Much existing work on DRO focuses on how to solve the DRO problem for different divergence metrics $D$. \citet{Namkoong:2016} provide methods for efficiently and optimally solving the DRO problem for $f$-divergences, and other work has provided methods for solving the DRO problem for Wasserstein distances~\citep{Li:2019, Esfahani:2018}. \citet{Duchi:2018} further provide finite-sample convergence rates for the empirical version of the DRO problem.

\subsection{Solving the DRO problem}
An important and often difficult aspect of using DRO is specifying a divergence $D$ and bound $\gamma$ that are meaningful. In this case, Lemma \ref{lem:tv_bound} gives us the key to formulating a DRO problem that is guaranteed to satisfy the fairness criteria with respect to the true groups $G$. 

The optimization problem (\ref{eq:dro}) can be written in the form of a DRO problem (\ref{eq:general_dro}) with TV distance by using the Lagrangian formulation. Adapting a simplified version of a gradient-based algorithm provided by \citet{Namkoong:2016}, we are able to solve the empirical formulation of problem (\ref{eq:general_dro}) efficiently. Details of our empirical Lagrangian formulation and pseudocode are in Appendix \ref{app:dro}.

\section{Robust Approach 2: Soft group assignments}\label{sec:softweights}

While any feasible solution to the distributionally robust constrained optimization problem (\ref{eq:dro}) is guaranteed to satisfy the constraints on the true groups $G$, 
choosing each $\gamma_j = P(G \neq \hat{G} | G = j)$ as an upper bound on $TV(p_j, \hat{p}_j)$ may be rather conservative. Therefore, as an alternative to the DRO constraints in (\ref{eq:dro}), in this section we show how to optimize using the robust fairness criteria proposed by \citet{Kallus:2020}. 

\subsection{Constraints with soft group assignments}

Given a trained binary predictor, $\hat{Y}(\theta) = \Ind(\phi(\theta ;X) > 0)$, \citet{Kallus:2020} proposed a set of robust fairness criteria that can be used to audit the fairness of the given trained model with respect to the true groups $G \in \mathcal{G}$ using the noisy groups $\hat{G} \in \hat{\mathcal{G}}$, where $\mathcal{G} = \hat{\mathcal{G}}$ is not required in general.
They assume access to a \textit{main dataset} with the noisy groups $\hat{G}$, true labels $Y$, and features $X$, as well an \textit{auxiliary dataset} containing both the noisy groups $\hat{G}$ and the true groups $G$. From the main dataset, one obtains estimates of the joint distributions $(\hat{Y}(\theta), Y, \hat{G})$; from the auxiliary dataset, one obtains estimates of the joint distributions $(\hat{G}, G)$ and a noise model $P(G = j | \hat{G} = k)$ $ \forall j \in \mathcal{G}, k \in \hat{\mathcal{G}}$. 


These estimates are used to associate each example with a vector of weights, where each weight is an estimated probability that the example belongs to the true group $j$. Specifically, suppose that we have a function $w: \mathcal{G} \times \{0,1\} \times \{ 0,1\} \times \hat{\mathcal{G}} \to [0,1]$, where $w(j \mid \hat{y}, y, k)$ estimates $P(G = j| \hat{Y}(\theta) = \hat{y}, Y = y, \hat{G} = k)$. We rewrite the fairness constraint $E[h(\theta, X, Y) | G = j] = \frac{E[h(\theta, X, Y) P( G = j| \hat{Y}(\theta), Y, \hat{G})]}{P(G=j)}$ (derivation in Appendix \ref{app:tower_property}), and estimate this using $w$. We also show how $h$ can be adapted to the \textit{equality of opportunity} setting in Appendix \ref{app:tpr_sa}.

Given the main dataset and auxiliary dataset, we limit the possible values of the function $w(j \mid \hat{y}, y, k)$ using the law of total probability (as in \cite{Kallus:2020}).
The set of possible functions $w$ is given by:
\begin{align}
\begin{split}
    \mathcal{W}(\theta) = 
    \left\{w: \substack{ \sum_{\hat{y}, y \in \{0,1\}} w(j \mid \hat{y}, y, k) P(\hat{Y}(\theta) = \hat{y}, Y = y| \hat{G} = k) = P(G = j | \hat{G} = k), \\ 
    \sum_{j=1}^m w(j \mid \hat{y}, y, k) = 1, w(j \mid \hat{y}, y, k) \geq 0 \quad \forall \hat{y}, y \in \{0,1\}, j \in \mathcal{G}, k \in \hat{\mathcal{G}}} \right\}. 
\end{split}\label{eq:W}
\end{align}

The robust fairness criteria can now be written in terms of $\mathcal{W}(\theta)$ as:
\begin{equation}\label{eq:w_constraint}
    \max_{w \in \mathcal{W}(\theta)} g_j(\theta, w) \leq 0, \;\; \forall j \in \mathcal{G} \quad \text{ where }\quad g_j(\theta, w) = \frac{\E[h(\theta, X, Y) w(j |\hat{Y}(\theta), Y, \hat{G})]}{P(G=j)}.
\end{equation}

\subsection{Robust optimization with soft group assignments}
We extend \citet{Kallus:2020}'s work by formulating a robust optimization problem using soft group assignments. Combining the robust fairness criteria above with the training objective, we propose:
\begin{align}
\min_{\theta \in \Theta} \quad f(\theta) \quad 
\textrm{s.t.} \max_{w \in \mathcal{W}(\theta)} g_j(\theta, w)  \leq 0, \;\; \forall j \in \mathcal{G},
\label{eq:softweights}
\end{align}
where $\Theta$ denotes the space of model parameters.
Any feasible solution is guaranteed to satisfy the original fairness criteria with respect to the true groups. Using a Lagrangian, problem (\ref{eq:softweights}) can be rewritten as:
%
\begin{equation}
  \label{eq:softweights_lagrangian}
  \min_{\theta\in\Theta} \max_{\lambda \in \Lambda} \cL(\theta, \lambda)
\end{equation}
where the Lagrangian $\cL(\theta, \lambda) = f(\theta) + \sum_{j=1}^m \lambda_j \max_{w \in \mathcal{W}(\theta)} g_j(\theta, w)$, and $\Lambda \subseteq \R_{+}^{m}$.
 
When solving this optimization problem, we use the empirical finite-sample versions of each expectation. As described in Proposition 9 of \citet{Kallus:2020}, the inner maximization (\ref{eq:w_constraint}) over $w \in \mathcal{W}(\theta)$ can be solved as a linear program for a given fixed $\theta$. However, the Lagrangian problem (\ref{eq:softweights_lagrangian}) is not as straightforward to optimize, since the feasible set $\mathcal{W}(\theta)$ depends on $\theta$ through $\hat{Y}$. While in general the pointwise maximum of convex functions is convex, the dependence of $\mathcal{W}(\theta)$ on $\theta$ means that even if $g_j(\theta, w)$ were convex, $\max_{w \in \mathcal{W}(\theta)} g_j(\theta, w)$ is not necessarily convex. We first introduce a theoretically ideal algorithm that we prove converges to an optimal, feasible solution. This ideal algorithm relies on a 
minimization oracle, which is not always computationally tractable. Therefore, we further provide a practical algorithm using gradient methods that mimics the ideal algorithm in structure and computationally tractable, but does not share the same convergence guarantees. 

\subsection{Ideal algorithm}
The minimax problem in \eqref{eq:softweights_lagrangian}
can be interpreted as a zero-sum game between the $\theta$-player and $\lambda$-player. 
In Algorithm \ref{algo:ideal}, we provide an iterative procedure for solving \eqref{eq:softweights_lagrangian}, where at each step, the $\theta$-player performs a full optimization, i.e., a \textit{best response} over $\theta$, and the $\lambda$-player responds with a gradient ascent update on $\lambda$. 

For a fixed $\theta$, the gradient of the Lagrangian $\cL$ with respect to $\lambda$  is given by
$
\partial \cL(\theta, \lambda) / \partial \lambda_j \,=\, \max_{w \in \cW(\theta)} g_j(\theta, w)
$,
which is a linear program in $w$. 
The challenging part, however, is the best response over $\theta$; that is, finding a solution $\min_{\theta}\,\cL(\theta, \lambda)$ for a given $\lambda$, as this involves a max over constraints $\cW(\theta)$ which  depend  on $\theta$. To implement this best response, we formulate a nested minimax problem that decouples this intricate dependence on $\theta$, by introducing Lagrange multipliers for the constraints in $\cW(\theta)$. We then solve this problem with an  oracle that jointly minimizes over both $\theta$ and the newly introduced Lagrange multipliers (details in Algorithm \ref{algo:best-response-theta} in Appendix \ref{app:ideal-alg-opt-feas}). 

The output of the best-response step is a stochastic classifier  with a distribution $\hat{\theta}^{(t)}$ over a finite set of $\theta$s. Algorithm \ref{algo:ideal} then returns the average of these distributions, $\bar{\theta} = \frac{1}{T} \sum_{t=1}^{T} \hat{\theta}^{t}$, over $T$ iterations.  By extending recent results on constrained optimization~\citep{Cotter:ALT}, we show in Appendix \ref{app:ideal-alg-opt-feas}  that the  output $\bar{\theta}$ is near-optimal and near-feasible for the  robust optimization problem in \eqref{eq:softweights}. 
That is, for a given $\epsilon > 0$, by picking $T$ to be large enough, we have that the objective $\E_{\theta \sim \bar{\theta}}\left[f(\theta)\right] \leq f(\theta^{*}) + \epsilon,$ for any $\theta^{*}$ that is feasible, and the expected violations in the robust  constraints are also no more than $\epsilon$. 

\if 0
In the algorithm, we use $\Delta_\Theta$ to denote the space of distributions over models in $\theta$ and  $\Pi_{\mathcal{A}}(\cdot)$ to denotes the $L_2$-projection onto a set $\mathcal{A}$. Algorithm \ref{algo:ideal} seeks to find an approximate saddle point of a min-max game between $\theta$ and $\lambda$ where the game objective is given by $\mathcal{L}(\cdot, \cdot)$. The algorithm  performs a best response on $\theta$ (step 2) and a gradient ascent on $\lambda$ (steps 3--7). Note that step 4 in the algorithm requires solving a linear program.

The best-response strategy for the $\theta$-player in Algorithm \ref{algo:ideal} is executed using Algorithm \ref{algo:best-response-1}. This inner algorithm takes a $\lambda'$ as input seeks to find an approximate saddle point of  a min-max game between $\theta$ and $W$, where the game objective is given by $\min_{\mu_j}\,\ell(\cdot, \lambda', \cdot, \mu_j)$, and does so by performing best response for $\theta$ and gradient ascent on the $W_j$'s. The best response strategy for the $\theta$-player in Algorithm \ref{algo:best-response-1} is performed
using an oracle. Here again step 3 involves solving a linear program.
\fi

\begin{figure}[!ht]
\vspace{-10pt}
\begin{algorithm}[H]
\caption{\textit{Ideal} Algorithm}
\label{algo:ideal}
\begin{algorithmic}[1]
\REQUIRE{learning rate $\eta_\lambda > 0$, estimates of $P( G = j | \hat{G} = k)$ to specify $\mathcal{W}(\theta)$, $\rho$, $\rho'$}
\FOR{$t = 1, \ldots, T$}
\STATE \textit{Best response on $\theta$}: run the oracle-based Algorithm \ref{algo:best-response-theta}
to find a distribution $\hat{\theta}^{(t)}$ over $\Theta$
s.t.
       $\E_{\theta \sim \hat{\theta}^{(t)}}\left[\cL(\theta, \lambda^{(t)})\right] \,\leq\, \min_{\theta \in \Theta}\, \cL(\theta, \lambda^{(t)}) \,+\, \rho$.
\STATE \textit{Estimate gradient $\nabla_\lambda \cL(\hat{\theta}^{(t)}, \lambda^{(t)})$}: for each $j \in \mathcal{G}$, choose $\delta^{(t)}_j$ s.t.

$\delta^{(t)}_j \,\leq\, \E_{\theta \sim \hat{\theta}^{(t)}}\left[\max_{w \in \cW(\theta)} g_j(\theta, w)\right] \,\leq\, \delta^{(t)}_j \,+\, \rho'$
    \STATE \textit{Ascent step on $\lambda$}: $\tilde{\lambda}_j^{(t+1)}  \gets \lambda_j^{(t)} + \eta_\lambda\,\delta^{(t)}_j, \;\; \forall j \in \mathcal{G}; \quad \lambda^{(t+1)}  \gets \Pi_{\Lambda}(\tilde{\lambda}^{(t+1)})$\\
\ENDFOR
\STATE \textbf{return}{ $\bar{\theta} = \frac{1}{T} \sum_{t=1}^{T} \hat{\theta}^{(t)}$}
\end{algorithmic}
\end{algorithm}
\vspace{-15pt}
\end{figure}

\subsection{Practical algorithm}
Algorithm \ref{algo:ideal} is guaranteed to converge to a near-optimal, near-feasible solution, but 
may be computationally intractable and impractical  for the following reasons. First, the algorithm needs a nonconvex minimization oracle to compute a best response over $\theta$. Second, there are multiple levels of nesting, making it difficult to scale the algorithm with mini-batch or stochastic updates. Third,  the output is 
a distribution over multiple models, which can be be difficult to use in practice \cite{Narasimhan:2019}.
%

Therefore, we supplement Algorithm \ref{algo:ideal} with a practical algorithm, Algorithm \ref{algo:heuristic} (see Appendix \ref{app:practical}) that is similar in structure, but
approximates the inner best response routine with two simple steps: a maximization over $w \in \cW(\theta^{(t)})$ using a linear program for the current iterate $\theta^{(t)}$, and a gradient step on $\theta$ at the maximizer $w^{(t)}$. 
Algorithm \ref{algo:heuristic} leaves room for other practical modifications such as using stochastic gradients. We provide further discussion in Appendix \ref{app:practical}.

\section{Experiments}\label{sec:experiments}
We compare the performance of the na{\"i}ve approach and the two robust optimization approaches (DRO and soft group assignments) empirically using two datasets from UCI \cite{Dua:2019} with different constraints. For both datasets, we stress-test the performance of the different algorithms under different amounts of noise between the true groups $G$ and the noisy groups $\hat{G}$. We take $l$ to be the hinge loss. The specific constraint violations measured and additional training details can be found in Appendix \ref{app:experiment_details}. All experiment code is available on GitHub at \url{https://github.com/wenshuoguo/robust-fairness-code}.

\textbf{Generating noisy protected groups:}
Given the true protected groups, we synthetically generate noisy protected groups by selecting a fraction $\gamma$ of data uniformly at random. For each selected example, we perturb the group membership to a different group also selected uniformly at random from the remaining groups. 
This way, for a given $\gamma$, $P(\hat{G} \neq G) \approx P(\hat{G} \neq G | G =j) \approx \gamma$ for all groups $j,k \in \mathcal{G}$. We evaluate the performance of the different algorithms ranging from small to large amounts of noise: $\gamma \in \{0.1, 0.2, 0.3, 0.4, 0.5\}$.

\subsection{Case study 1 (Adult): equality of opportunity}
We use the Adult dataset from UCI~\citep{Dua:2019} collected from 1994 US Census, which has 48,842 examples and 14 features (details in Appendix \ref{app:experiments}). The classification task is to determine whether an individual makes over \$50K per year. For the true groups, we use $m=3$ race groups of ``white,'' ``black,'' and ``other.'' 
As done by \cite{cotter2019optimization,Friedler:2019,Zafar:2019}, we enforce \textit{equality of opportunity} by equalizing true positive rates (TPRs). Specifically, we enforce that the TPR conditioned on each group is greater than or equal to the overall TPR on the full dataset with some slack $\alpha$, which produces $m$ true group fairness criteria, $\{g_j^{\text{TPR}}(\theta) \leq 0 \} \;\; \forall j \in \mathcal{G}$ (details on the constraint function $h$ in Appendix \ref{app:tpr_dro} and \ref{app:tpr_sa}).





\subsection{Case study 2 (Credit): equalized odds}



We consider another application of group-based fairness constraints to credit default prediction.
\citet{fourcade2013classification} provide an in depth study of the effect of credit scoring techniques on the credit market, showing that this scoring system can perpetuate inequity.
Enforcing group-based fairness with credit default predictions has been considered in a variety of prior works \citep{Hardt:2016, berk2017convex, WangGupta:2020, aghaei2019learning, bera2019fair, grariachieving, Friedler:2019,barocas-hardt-narayanan}. Following \citet{Hardt:2016} and \citet{grariachieving}, we enforce \textit{equalized odds} \citep{Hardt:2016} by equalizing both true positive rates (TPRs) and false positive rates (FPRs) across groups. 

We use the ``default of credit card clients'' dataset from UCI~\cite{Dua:2019} collected by a company in Taiwan~\cite{Yeh:2009}, which contains 30,000 examples and 24 features (details in Appendix \ref{app:experiments}). The classification task is to determine whether an individual defaulted on a loan. 
We use $m=3$ groups based on education levels: ``graduate school,'' ``university,'' and ``high school/other'' (the use of education in credit lending has previously been studied in the algorithmic fairness and economics literature~\citep{gillis2020false, bera2019fair, lazar2020resolution}). We constrain the TPR conditioned on each group to be greater than or equal to the overall TPR on the full dataset with a slack $\alpha$, and the FPR conditioned on each group to be less than or equal to the overall FPR on the full dataset. This produces $2m$ true group-fairness criteria, $\{g_j^{\text{TPR}}(\theta) \leq 0, g_j^{\text{FPR}}(\theta) \leq 0 \} \;\; \forall j \in \mathcal{G}$ (details on constraint functions $h$ in Appendix \ref{app:tpr_dro} and \ref{app:tpr_sa}).

\begin{figure}[!ht]
\begin{center}
\centerline{\begin{tabular}{ccc}
 \includegraphics[width=0.31\textwidth]{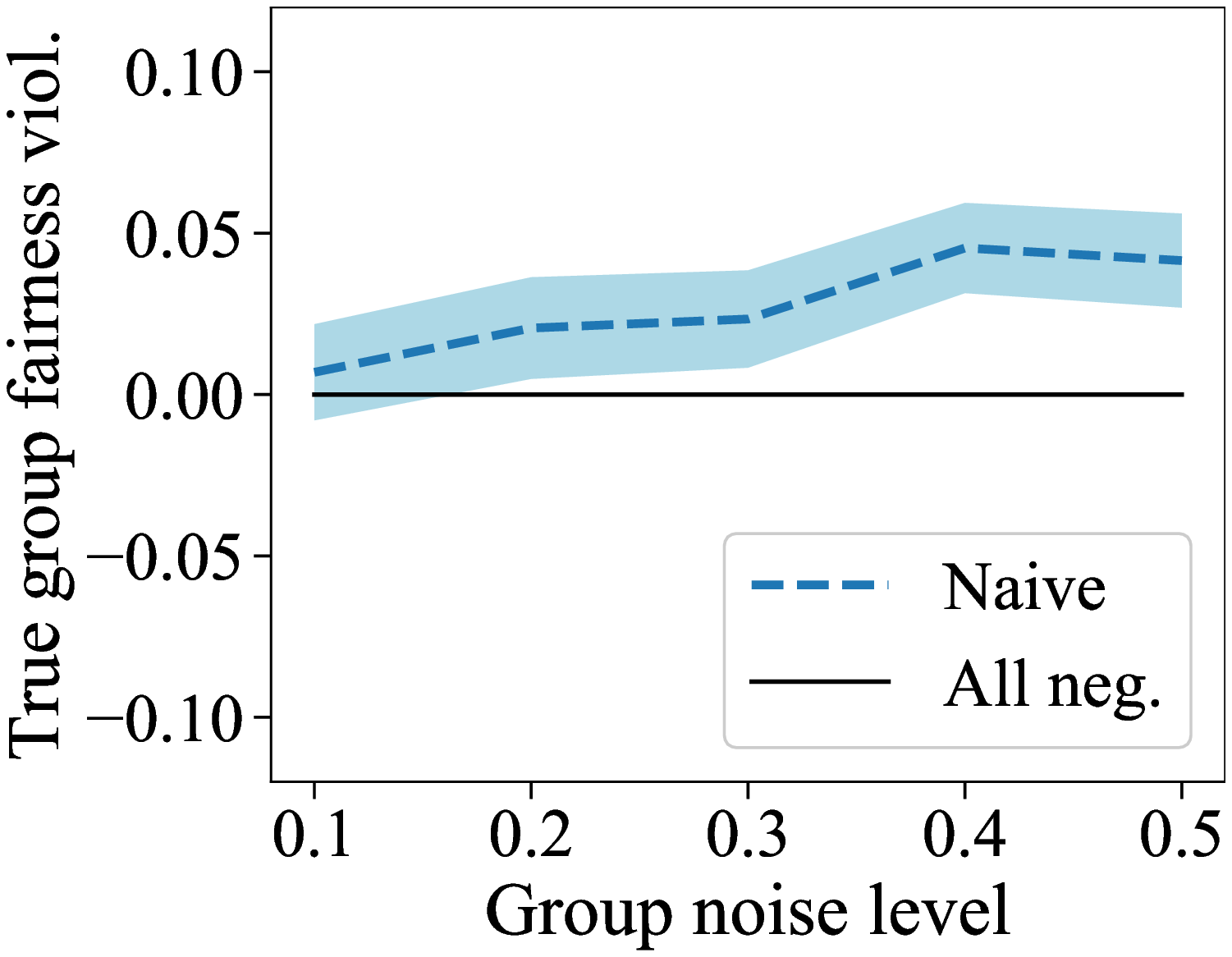} &
  \includegraphics[width=0.31\textwidth]{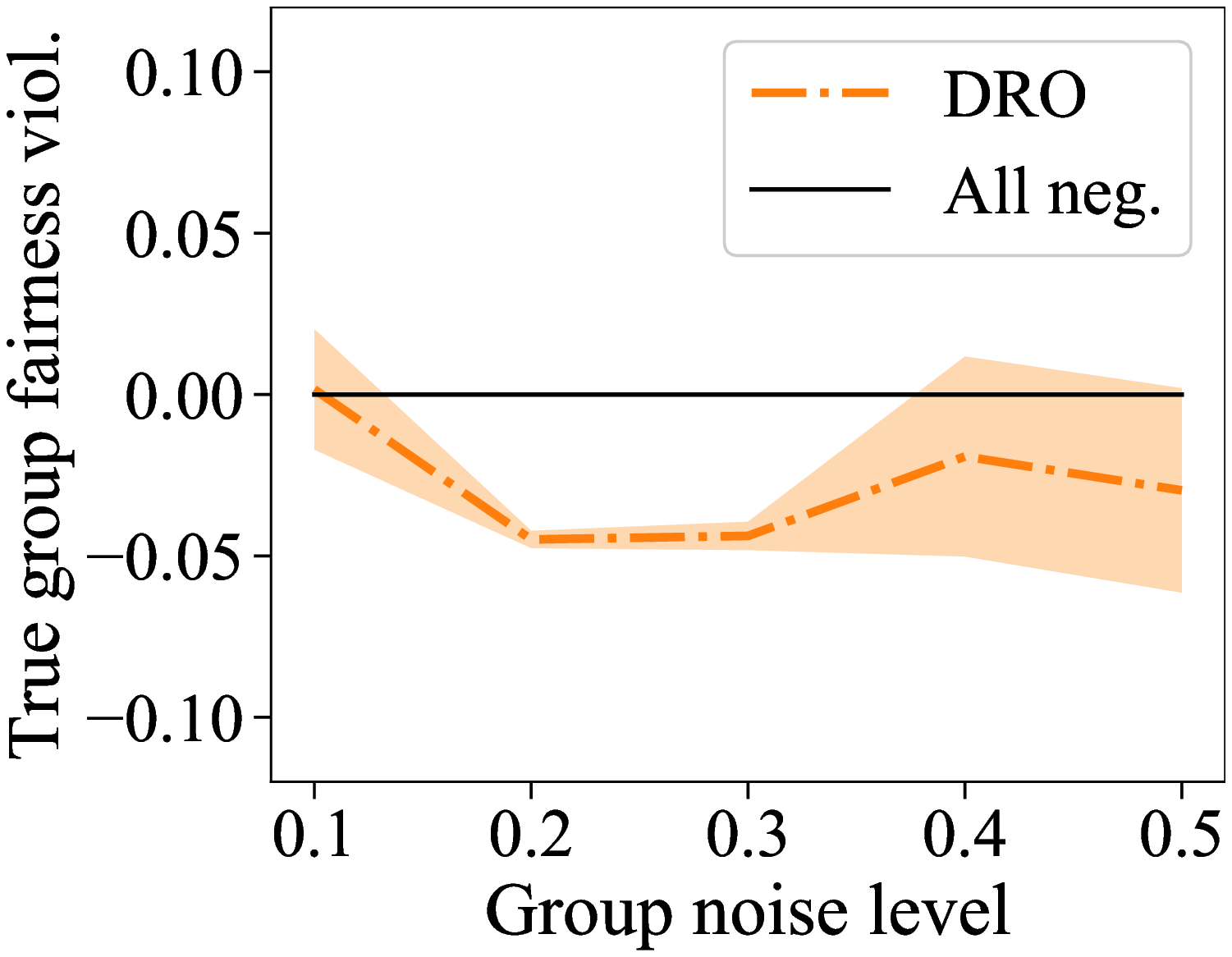} & \includegraphics[width=0.31\textwidth]{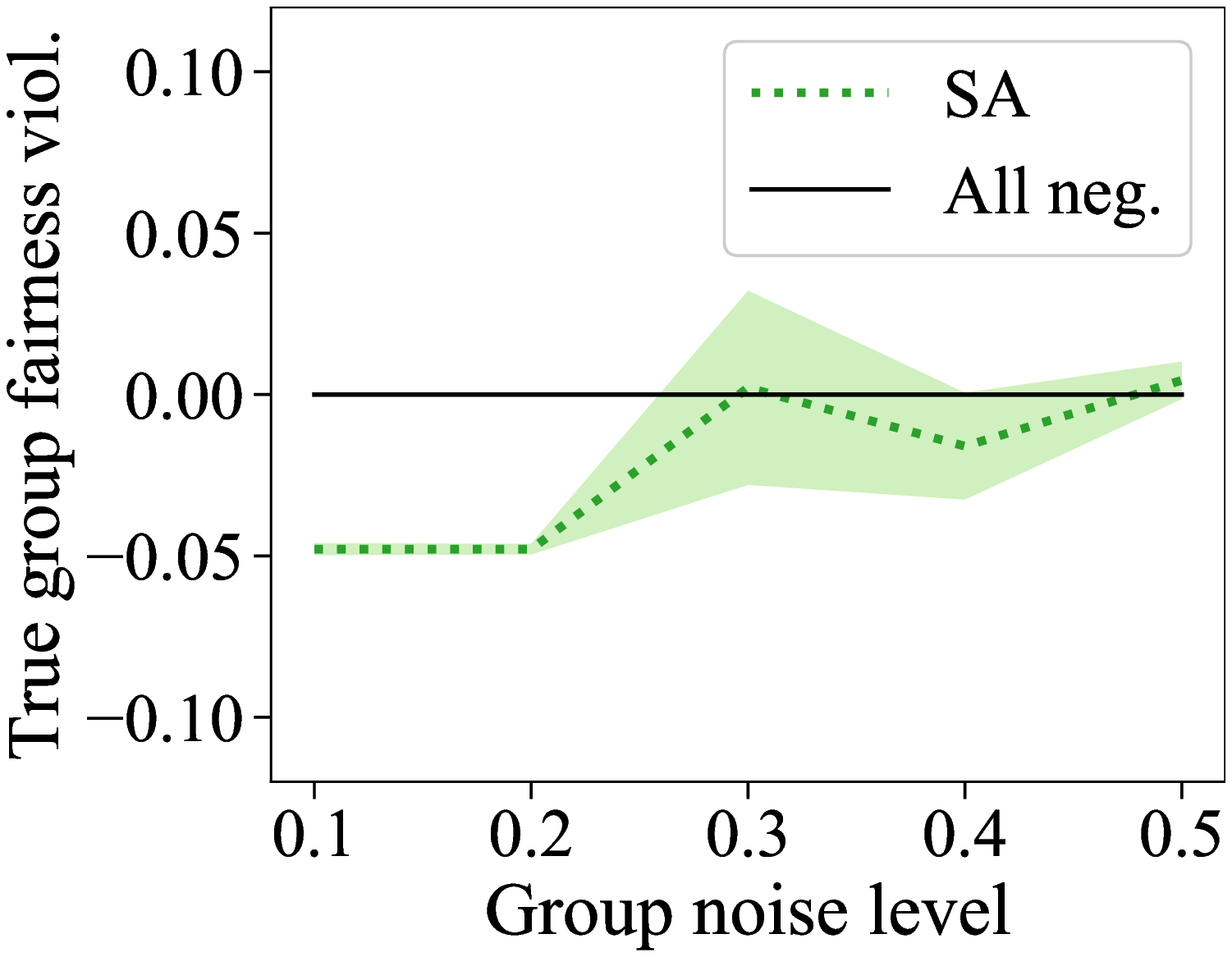} \\
\end{tabular}}
\caption{Case study 1 (Adult): maximum true group constraint violations on test set for the Naive, DRO, and soft assignments (SA) approaches for different group noise levels $\gamma$ on the Adult dataset (mean and standard error over 10 train/val/test splits). The black solid line represents the performance of the trivial ``all negatives'' classifier, which has constraint violations of 0. A negative violation indicates satisfaction of the fairness constraints on the true groups.}
\label{fig:adult_constraints} 
\end{center}
\vskip -0.2in
\end{figure}


\begin{figure}[!ht]
\begin{center}
\centerline{\begin{tabular}{ccc}
 \includegraphics[width=0.31\textwidth]{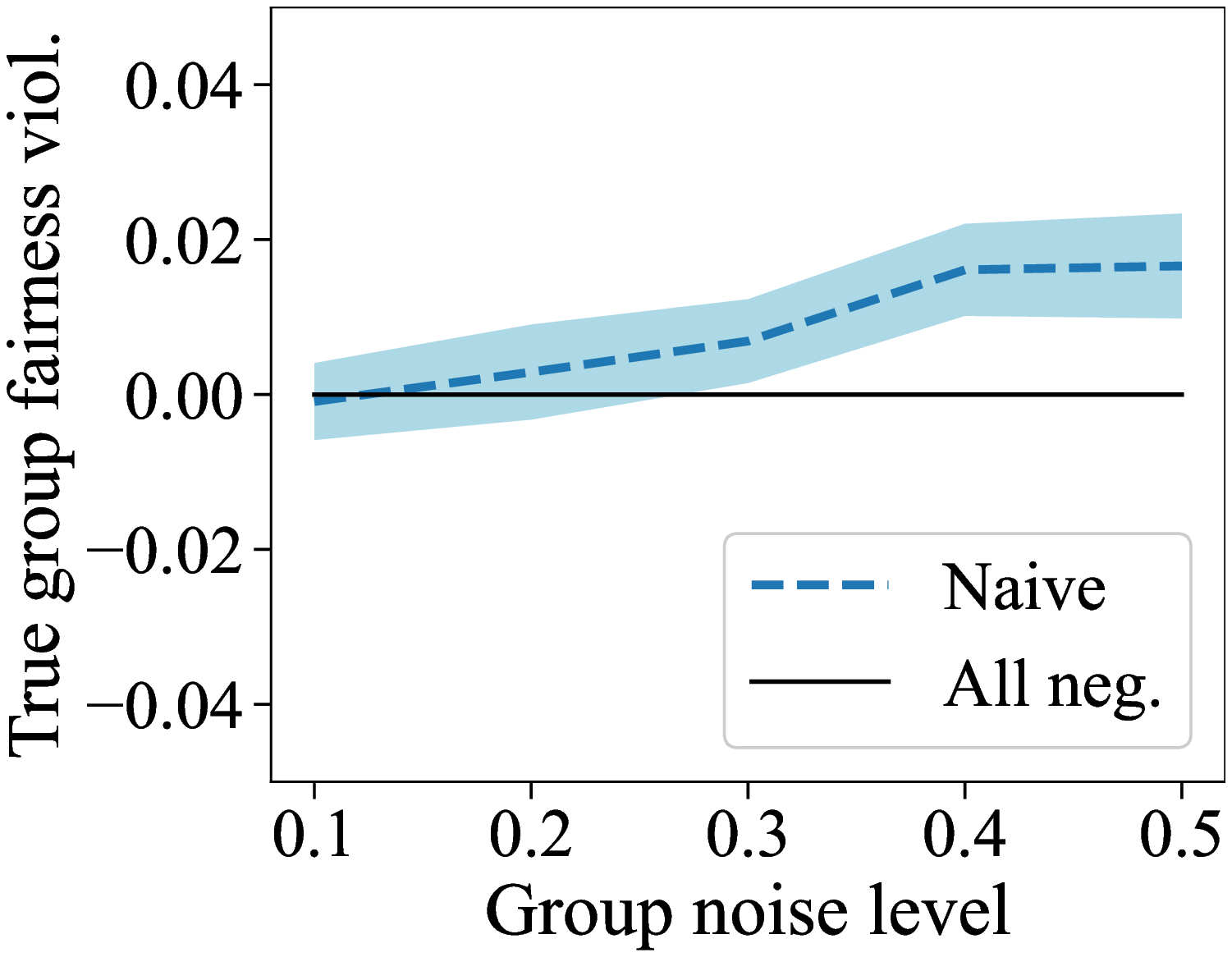} &
  \includegraphics[width=0.31\textwidth]{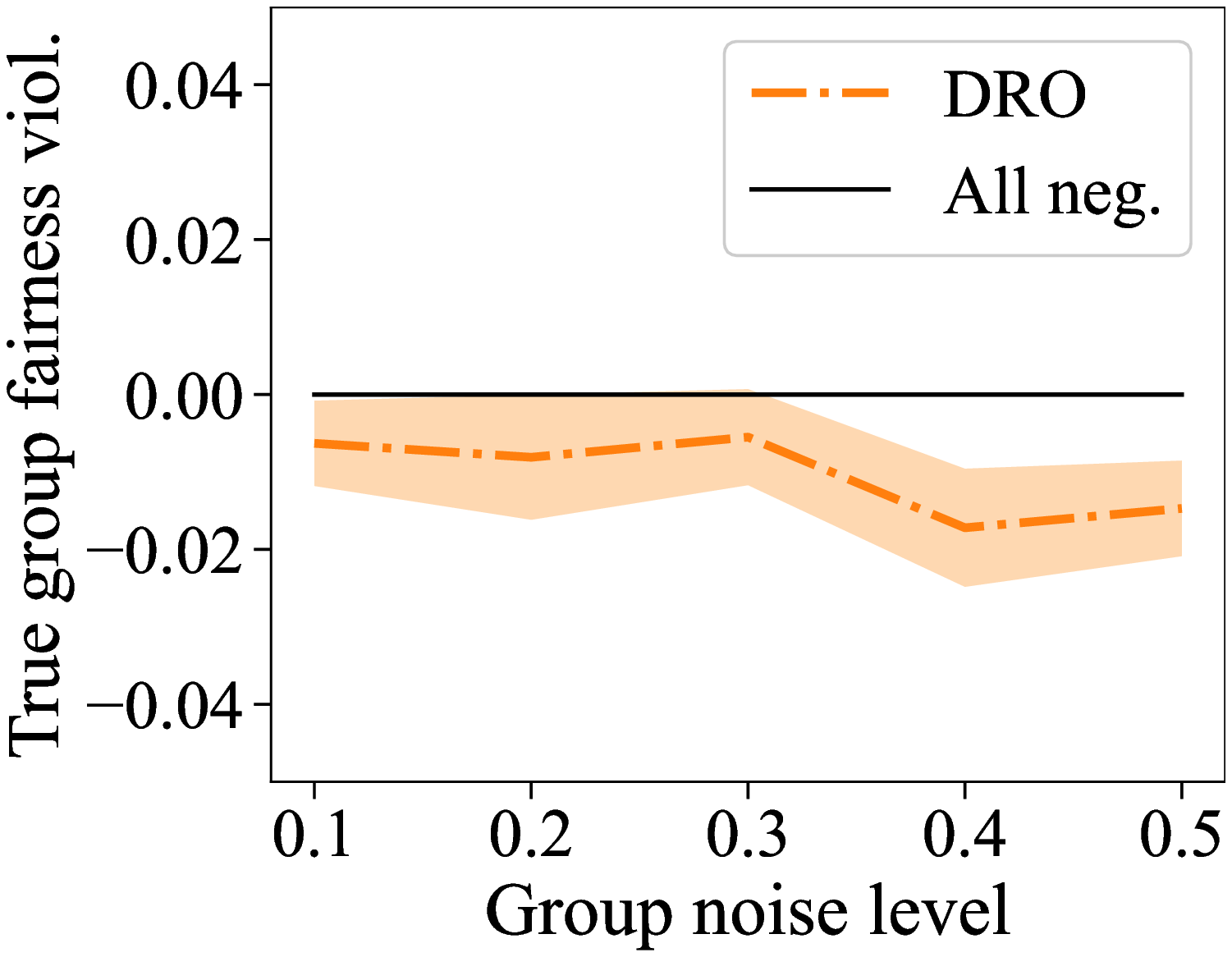} & \includegraphics[width=0.31\textwidth]{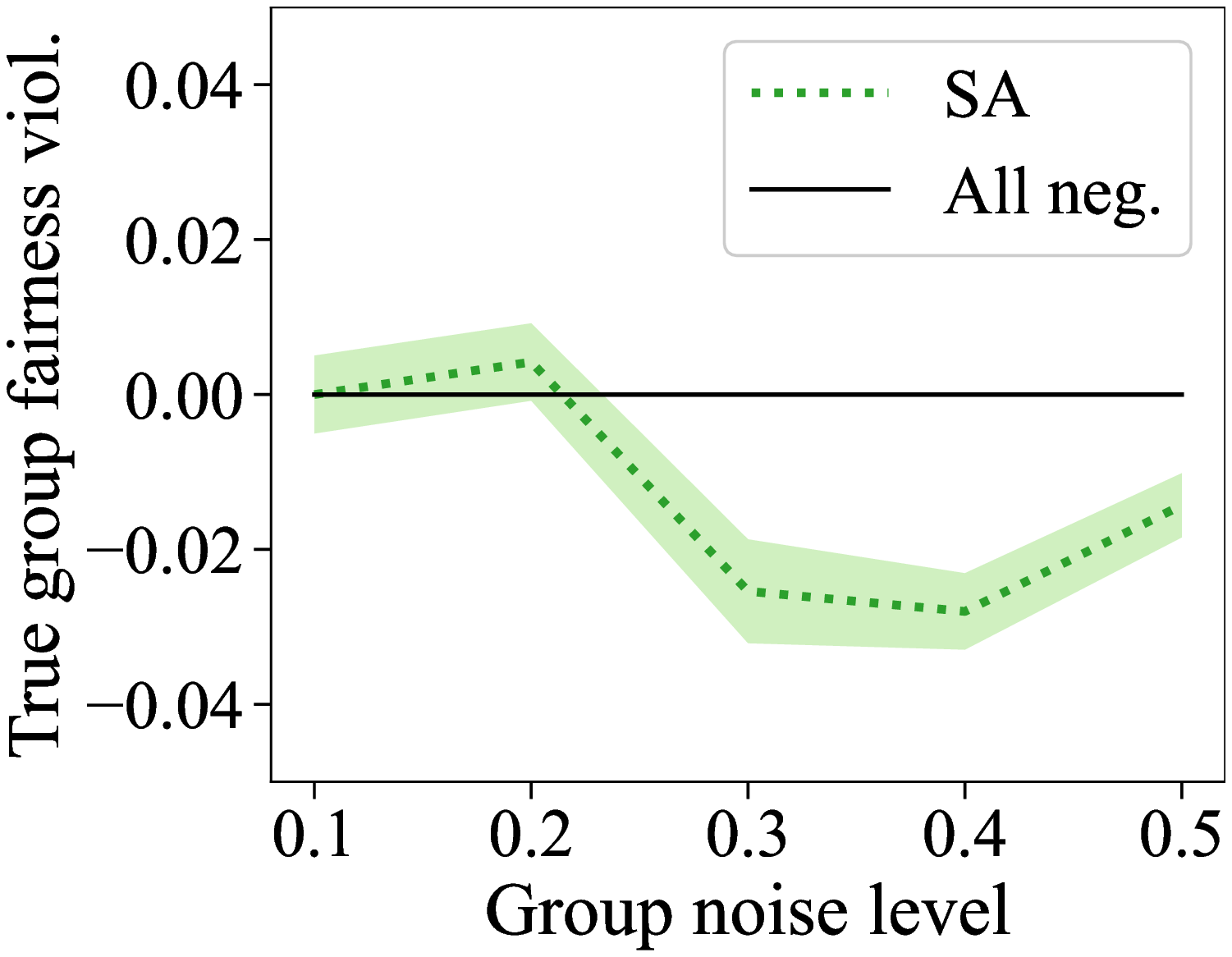} \\
\end{tabular}}
\caption{Case study 2 (Credit): maximum true group constraint violations on test set for the Naive, DRO, and soft assignments (SA) approaches for different group noise levels $\gamma$ on the Credit dataset (mean and standard error over 10 train/val/test splits). 
This figure shows the max constraint violation over all TPR and FPR constraints, and Figure \ref{fig:credit_constraints_tpr_fpr} in Appendix \ref{app:experiment_results} shows the breakdown of these constraint violations into the max TPR and the max FPR constraint violations. }
\label{fig:credit_constraints} 
\end{center}
\vskip -0.2in
\end{figure}


\begin{figure}[!ht]
\vskip -0.1in
\begin{center}
\centerline{\begin{tabular}{cc}
Adult error rates & Credit error rates \\
 \includegraphics[width=0.31\textwidth]{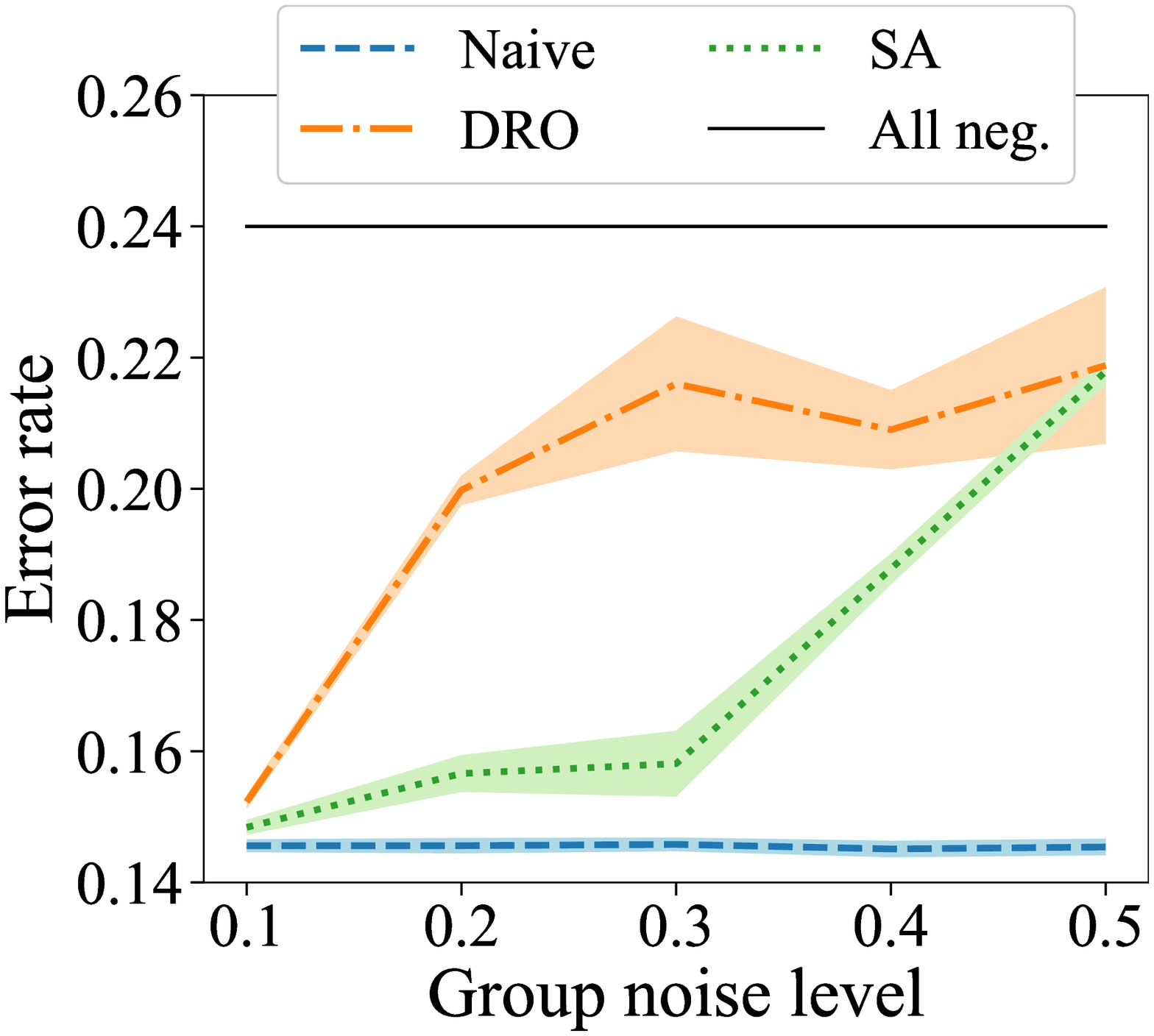} &
  \includegraphics[width=0.31\textwidth]{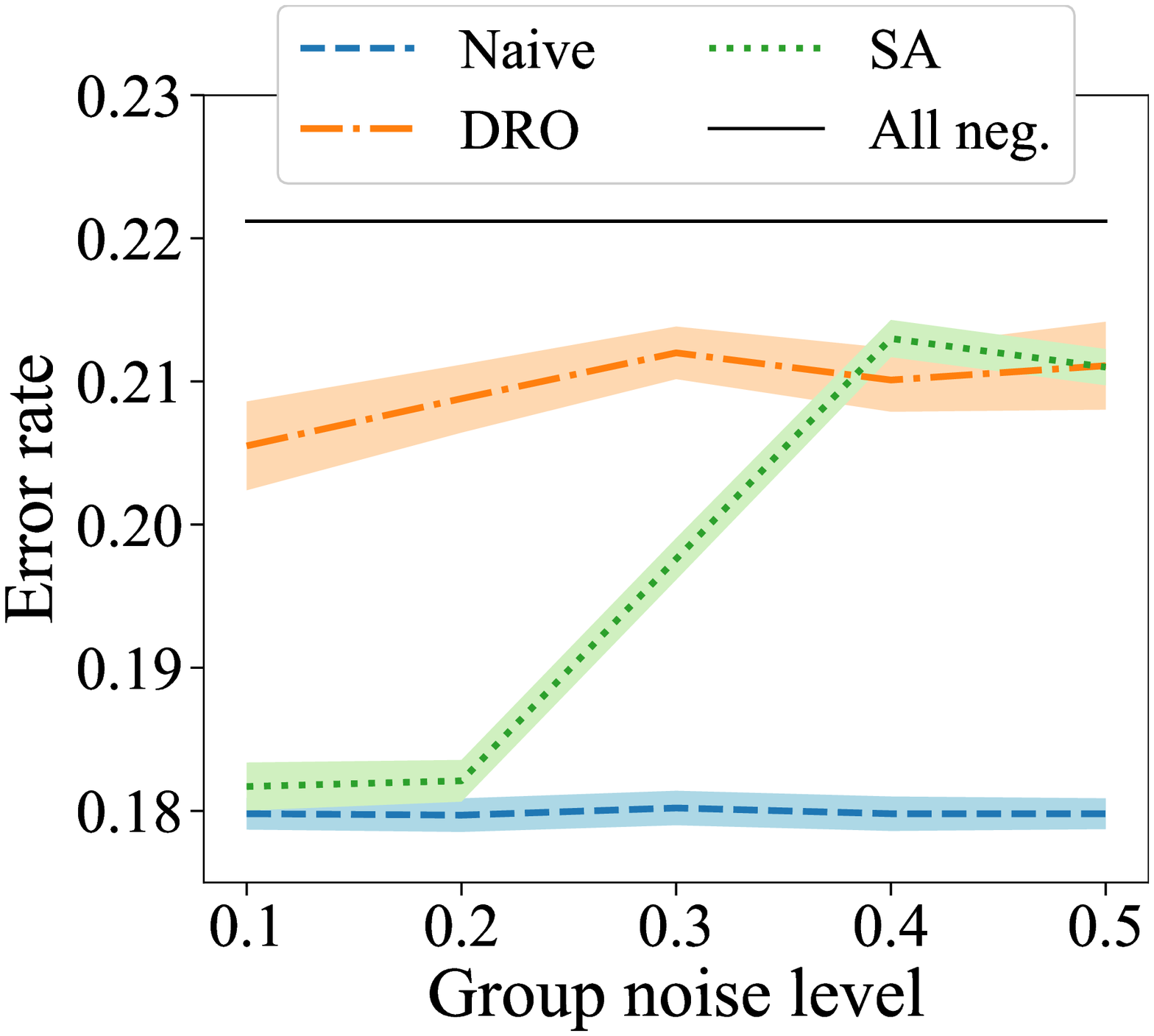} \\
\end{tabular}}
\caption{Error rates on test set for different group noise levels $\gamma$ on the Adult dataset (\textit{left}) and the Credit dataset (\textit{right}) (mean and standard error over 10 train/val/test splits). The black solid line represents the performance of the trivial ``all negatives'' classifier. The soft assignments (SA) approach achieves lower error rates than DRO, and as the noise level increases, the gap in error rate between the naive approach and each robust approach increases.}
\label{fig:objectives} 
\end{center}
\vskip -0.3in
\end{figure}

\subsection{Results}

In case study 1 (Adult), the unconstrained model achieves an error rate of $0.1447 \pm 0.0012$ (mean and standard error over 10 splits) and a maximum constraint violation of $0.0234 \pm 0.0164$ on test set with respect to the true groups. The model that assumes knowledge of the true groups achieves an error rate of $0.1459 \pm 0.0012$ and a maximum constraint violation of $-0.0469 \pm 0.0068$ on test set with respect to the true groups. As a sanity check, this demonstrates that when given access to the true groups, it is possible to satisfy the constraints on the test set with a reasonably low error rate.

In case study 2 (Credit), the unconstrained model achieves an error rate of $0.1797 \pm 0.0013$ (mean and standard error over 10 splits) and a maximum constraint violation of $0.0264 \pm 0.0071$ on the test set with respect to the true groups. The constrained model that assumes knowledge of the true groups achieves an error rate of $0.1796 \pm  0.0011$ and a maximum constraint violation of $-0.0105 \pm 0.0070$ on the test set with respect to the true groups. For this dataset, it was possible to satisfy the constraints with approximately the same error rate on test as the unconstrained model. Note that the unconstrained model achieved a lower error rate on the train set than the constrained model ($0.1792 \pm 0.0015$ unconstrained vs. $0.1798 \pm 0.0024$ constrained).  

For both case studies, Figures \ref{fig:adult_constraints} and \ref{fig:credit_constraints} show that the robust approaches DRO (\textit{center}) and soft group assignments (SA) (\textit{right}) satisfy the constraints on average for all noise levels. As the noise level increases, the na{\"i}ve approach (\textit{left}) has increasingly higher true group constraint violations.  The DRO and SA approaches come at a cost of a higher error rate than the  na{\"i}ve approach (Figure \ref{fig:objectives}).  The error rate of the na{\"i}ve approach is close to the model optimized with constraints on the true groups $G$, regardless of the noise level $\gamma$. However, as the noise increases, the na{\"i}ve approach no longer controls the fairness violations on the true groups $G$, even though it does satisfy the constraints on the noisy groups $\hat{G}$ (Figures \ref{fig:adult_proxy_constraint_volations} and \ref{fig:credit_proxy_constraint_volations} in Appendix \ref{app:experiment_results}). DRO generally suffers from a higher error rate compared to SA (Figure \ref{fig:objectives}), illustrating the conservatism of the DRO approach.

\section{Conclusion}
We explore the practical problem of enforcing group-based fairness for binary classification given noisy protected group information. In addition to providing new theoretical analysis of the na{\"i}ve approach of only enforcing fairness on the noisy groups, we also propose two new robust approaches that guarantee satisfaction of the fairness criteria on the true groups. For the DRO approach, Lemma \ref{lem:tv_bound} gives a theoretical bound on the TV distance to use in the optimization problem. For the soft group assignments approach, we provide a theoretically ideal algorithm and a practical alternative algorithm for satisfying the robust fairness criteria proposed by \citet{Kallus:2020} while minimizing a training objective. We empirically show that both of these approaches managed to satisfy the constraints with respect to the true groups, even under difficult noise models. 

In follow-up work, \citet{Narasimhan2020manyconstraints} provide a general method for enforcing a large number of constraints at once, and 
enforce constraints concurrently on many possible realizations of noisy protected groups under a given noise model. This can be seen as an extension of the Soft Group Assignments approach that we propose in Section \ref{sec:softweights}, which \citet{Narasimhan2020manyconstraints} describe in their Appendix.

One additional avenue of future work is to empirically compare the robust approaches when the noisy groups have different dimensionality from the true groups (Appendix \ref{app:dro-multi}). Second, the looseness of the bound in Lemma \ref{lem:tv_bound} can lead to over-conservatism of the DRO approach, suggesting a need to better calibrate the DRO neighborhood.
Finally, it would be valuable to study the impact of distribution mismatch between the main dataset and the auxiliary dataset. 

\newpage
\clearpage

\section*{Broader Impact}
\vskip -0.05in

As machine learning is increasingly employed
in high stakes environments,
any potential application has to be scrutinized
to ensure that it will not perpetuate, exacerbate, or create new injustices.
Aiming to make machine learning algorithms themselves intrinsically
fairer, more inclusive, and more equitable plays an important role in
achieving that goal. Group-based fairness \citep{Hardt:2016, Friedler:2019} is a popular approach that the machine learning community has used to define and evaluate fair machine learning algorithms. Until recently, such work has generally assumed access to clean, correct protected group labels in the data.
Our work addresses the technical challenge of enforcing group-based fairness criteria under noisy, unreliable, or outdated group information.
However, we emphasize that this technical improvement alone does not necessarily lead to an algorithm having positive societal impact, for reasons that we now delineate. 




\textbf{Choice of fairness criteria} 

First, our work does not address the choice of the group-based fairness criteria. Many different algorithmic fairness criteria have been proposed, with varying connections to prior sociopolitical framing \cite{Narayanan:2018,Hutchinson:2019}. From an algorithmic standpoint, these different choices of fairness criteria have been shown to lead to very different prediction outcomes and tradeoffs \cite{Friedler:2019}.  
Furthermore, even if a mathematical criterion may seem reasonable (e.g., equalizing positive prediction rates with \textit{demographic parity}), \citet{liu2018} show that the long-term impacts may not always be desirable, and the choice of criteria should be heavily influenced by domain experts, along with awareness of tradeoffs.

\textbf{Choice of protected groups}

In addition to the specification of fairness criteria, our work also assumes that the true protected group labels have been pre-defined by the practitioner. However, in real applications, the selection of appropriate true protected group labels is itself a nontrivial issue.

First, the measurement and delineation of these protected groups should not be overlooked, as ``the process of drawing boundaries around distinct social groups for fairness research is fraught; the construction of categories has a
long history of political struggle and legal argumentation''~\cite{hanna2020}.
Important considerations include the context in which the group labels were collected, who chose and collected them, and what implicit assumptions are being made by choosing these group labels. 
One example is the operationalization of race in the context of algorithmic fairness. \citet{hanna2020} critiques ``treating race as an attribute,
rather than a structural, institutional, and relational phenomenon.'' 
The choice of categories surrounding gender identity and sexual orientation have strong implications and consequences as well \cite{williams2014}, with entire fields dedicated to critiquing these constructs. \citet{jacobs2019} provide a general framework for understanding measurement issues for these sensitive attributes in the machine-learning setting, building on foundational work from the social sciences \cite{bandalos}.

Another key consideration when defining protected groups is problems of \textit{intersectionality} \cite{Crenshaw:1990, Hooks:1992}. Group-based fairness criteria inherently do not consider within-group inequality \cite{Kasy:2020}. Even if we are able to enforce fairness criteria robustly for a given set of groups, the intersections of groups may still suffer~\cite{Joy:2018}. 

\textbf{Domain specific considerations}

Finally,
we emphasize that group-based fairness criteria simply may not be sufficient to mitigate problems of significant background injustice in certain domains. 
\citet{abebe2020roles} argue that computational methods have mixed roles in addressing social problems, where they can serve as \textit{diagnostics}, \textit{formalizers}, and \textit{rebuttals}, and also that ``computing acts as synecdoche when it makes long-standing social problems newly salient in the public eye.'' Moreover, the use of the algorithm itself may perpetuate inequity, and in the case of credit scoring, create stratifying effects of economic classifications that shape life-chances \cite{fourcade2013classification}. We emphasize the importance of domain specific considerations ahead of time before applying any algorithmic solutions (even ``fair'' ones) in sensitive and impactful settings. 


\begin{ack}
We thank Rediet Abebe, Timnit Gebru, and Margaret Mitchell for many useful pointers into the broader socio-technical literature that provides essential context for this work. We thank Jacob Steinhardt for helpful technical discussions. We also thank Stefania Albanesi and Domonkos Vámossy for an inspiring early discussion of practical scenarios when noisy protected groups occur. Finally, we thank Kush Bhatia, Collin Burns, Mihaela Curmei, Frances Ding, Sara Fridovich-Keil, Preetum Nakkiran, Adam Sealfon, and Alex Zhao for their helpful feedback on an early version of the paper.
This material is based upon work supported by the National Science Foundation Graduate Research Fellowship Program under Grant No. DGE 1752814. 



\end{ack}

\bibliography{neurips-ref}
\bibliographystyle{plainnat}

\newpage
\clearpage

\appendix

\section{Proofs for Section \ref{sec:naive}}\label{app:proofs}
This section provides proofs and definitions details for the theorems and lemmas presented in Section \ref{sec:naive}.
\subsection{Proofs for TV distance}\label{app:proofs-TV}
\begin{definition}(TV distance)
Let $c(x,y) = \Ind(x \neq y)$ be a metric, and let $\pi$ be a coupling between probability distributions $p$ and $q$. Define the total variation (TV) distance between two distributions $p,q$ as
\begin{align*}
    TV(p, q) &= \inf_{\pi} \E_{X,Y\sim \pi} [c(X, Y)] \\
&\text{s.t.} \int \pi(x, y)dy = p(x), \int\pi(x, y)dx = q(y).
\end{align*}
\end{definition}

\begin{reptheorem}{thm:general}
Suppose a model with parameters $\theta$ satisfies fairness criteria with respect to the noisy groups $\hat{G}$:
$$\hat{g}_j(\theta) \leq 0 \;\;\forall j \in \mathcal{G}. $$
Suppose $|h(\theta,x_1,y_1) - h(\theta, x_2, y_2)| \leq 1$ for any $(x_1, y_1) \neq (x_2, y_2)$. If $TV(p_j, \hat{p}_j) \leq \gamma_j$ for all $j \in \mathcal{G}$, then the fairness criteria with respect to the true groups $G$ will be satisfied within slacks $\gamma_j$ for each group:
$$ g_j(\theta)  \leq \gamma_j \quad \forall j \in \mathcal{G}.$$
\end{reptheorem}
\begin{proof}
For any group label $j$,
\begin{align*}
    g_j(\theta) = g_j(\theta) - \hat{g}_j(\theta) + \hat{g}_j(\theta) \leq |g_j(\theta) - \hat{g}_j(\theta)| + \hat{g}_j(\theta).
\end{align*}
By Kantorovich-Rubenstein theorem (provided here as Theorem \ref{thm:KR}), we also have
\begin{align*}
    |\hat{g}_j(\theta) - g_j(\theta)| &=|\E_{X,Y \sim \hat{p}_j}[h(\theta,X,Y)] - \E_{X,Y \sim p_j}[h(\theta,X,Y)]| \leq TV(p_j, \hat{p}_j).
\end{align*}
By assumption that $\theta$ satisifes fairness constraints with respect to the noisy groups $\hat{G}$, $\hat{g}_j(\theta) \leq 0$. Thus, we have the desired result that $g_j(\theta) \leq TV(p_j, \hat{p}_j) \leq \gamma_j$.

Note that if $p_j$ and $\hat{p}_j$ are discrete, then the TV distance $TV(p_j, \hat{p_j})$ could be very large. In that case, the bound would still hold, but would be loose. 
\end{proof}

\begin{theorem}\label{thm:KR} (Kantorovich-Rubinstein).\footnote{Edwards, D.A. On the Kantorovich–Rubinstein theorem. \textit{Expositiones Mathematicae}, 20(4):387-398, 2011.} Call a function $f$ Lipschitz in $c$ if $|f(x) - f(y)| \leq c(x, y)$ for all
$x, y$, and let $\mathcal{L}(c)$ denote the space of such functions. If $c$ is a metric, then we have
$$W_c(p, q) = \sup_{f\in \mathcal{L}(c)} \E_{X \sim p}[f(X)] - \E_{X \sim q}[f(X)].$$

As a special case, take $c(x, y) = \mathbb{I}(x \neq y)$ (corresponding to TV distance). Then $f \in \mathcal{L}(c)$ if and only if
$|f(x) - f(y)| \leq 1$ for all $x \neq y$. By translating $f$, we can equivalently take the supremum over all $f$ mapping
to $[0, 1]$. This says that
$$TV(p, q) = \sup_{f:\mathcal{X} \to [0,1]} \E_{X \sim p}[f(X)] - \E_{X \sim q}[f(X)]$$
\end{theorem}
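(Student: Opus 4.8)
The plan is to establish the Kantorovich--Rubinstein identity in two stages: first prove the duality for the optimal transport cost $W_c$ with an arbitrary metric $c$, and then specialize to the discrete metric $c(x,y)=\mathbb{I}(x\neq y)$ to recover the TV characterization. The starting point is the general Kantorovich duality for optimal transport, which asserts that for a lower semicontinuous cost $c$,
\[
W_c(p,q)=\inf_{\pi\in\Pi(p,q)}\int c\,d\pi=\sup_{\phi(x)+\psi(y)\le c(x,y)}\Big(\int\phi\,dp+\int\psi\,dq\Big),
\]
where $\Pi(p,q)$ is the set of couplings with marginals $p$ and $q$ and the supremum ranges over admissible integrable pairs $(\phi,\psi)$ satisfying the pointwise constraint $\phi(x)+\psi(y)\le c(x,y)$. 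I would invoke this as a known result (it is the content of the cited work of Edwards, and of standard references such as Villani).

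The crux is to show that when $c$ is a metric the dual reduces to potentials of the special form $(\phi,-\phi)$ with $\phi\in\mathcal{L}(c)$. For this I would use the $c$-transform $\phi^c(y)=\inf_x[c(x,y)-\phi(x)]$ and record two facts. First, for any $\phi$, the transform $\phi^c$ is Lipschitz-in-$c$: using the triangle inequality, $\phi^c(y)-\phi^c(y')\le\sup_x[c(x,y)-c(x,y')]\le c(y,y')$, and symmetrically. Second, if $\phi$ is already Lipschitz-in-$c$, then $\phi^c=-\phi$: the Lipschitz bound $\phi(x)-\phi(y)\le c(x,y)$ gives $c(x,y)-\phi(x)\ge-\phi(y)$ for all $x$, with equality at $x=y$ since $c(y,y)=0$. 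Now, given any admissible pair, replacing $\psi$ by $\phi^c$ keeps admissibility and can only increase $\int\psi\,dq$ (since $\phi^c\ge\psi$), and replacing $\phi$ by $\phi^{cc}$ increases the objective again while landing in the Lipschitz class, where by the second fact the two coordinates are negatives of each other. After this bookkeeping the supremum is attained on $\{(\phi,-\phi):\phi\in\mathcal{L}(c)\}$, yielding $W_c(p,q)=\sup_{\phi\in\mathcal{L}(c)}\big(\int\phi\,dp-\int\phi\,dq\big)$.

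For the special case $c(x,y)=\mathbb{I}(x\neq y)$, I would first observe that $W_c$ is exactly the TV distance under the definition given, since the transport cost literally counts the mass moved between distinct points, and that $f\in\mathcal{L}(c)$ iff $|f(x)-f(y)|\le 1$ for all $x\neq y$, i.e. $f$ has oscillation at most $1$. Because the dual objective $\int f\,dp-\int f\,dq$ is invariant under adding a constant to $f$, each such $f$ can be shifted to take values in $[0,1]$ without changing the objective, and conversely every $f:\mathcal{X}\to[0,1]$ lies in $\mathcal{L}(c)$. This gives $TV(p,q)=\sup_{f:\mathcal{X}\to[0,1]}\big(\int f\,dp-\int f\,dq\big)$, completing the claim.

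The main obstacle is the general Kantorovich duality invoked at the outset: establishing \emph{strong} duality (rather than merely weak duality, which follows from the admissibility constraint) requires either a minimax/Hahn--Banach argument combined with a primal attainment result, or the measure-theoretic machinery handling non-compact $\mathcal{X}$ and merely integrable potentials. I would treat this as a cited black box and make the self-contained part of the proof the $c$-transform reduction and the discrete specialization, both of which are elementary. A minor point to verify is the measurability and integrability of $\phi$ and $\phi^c$, so that all the dual integrals are well defined throughout the reduction.
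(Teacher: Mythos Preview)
Your proposal is a correct and standard route to the Kantorovich--Rubinstein identity: invoke general Kantorovich duality, reduce the dual potentials via $c$-transforms to the Lipschitz class when $c$ is a metric, and then specialize to the discrete metric. The reduction steps and the TV specialization are accurate as stated, and you rightly flag strong duality as the nontrivial ingredient to be cited.

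However, there is nothing to compare against: the paper does not prove Theorem~\ref{thm:KR}. It is stated in the appendix purely as a cited result (with the Edwards reference in the footnote) and is then used as a black box inside the proofs of Theorems~\ref{thm:general} and~\ref{thm:wasserstein}. So your proposal goes strictly beyond what the paper does---the paper's ``proof'' is simply the citation. If anything, your write-up could be trimmed to match the paper's treatment by stating the result and citing Edwards or Villani, unless you specifically want a self-contained derivation.
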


\begin{replemma}{lem:tv_bound} Suppose $P(G = i) = P(\hat{G} = i)$ for a given $i \in \{1,2,...,m\}$. Then $TV(p_i, \hat{p}_i) \leq P(G \neq \hat{G} | G = i) $.
\end{replemma}
\begin{proof}
For probability measures $p_i$ and $\hat{p}_i$, the TV distance is given by
$$TV(p_i, \hat{p}_i) = \sup\{|p_i(A) - \hat{p}_i(A)| : A \text{ is a measurable event}\}.$$
Fix $A$ to be any measurable event for both $p_i$ and $\hat{p}_i$. This means that $A$ is also a measurable event for $p$, the distribution of the random variables $X,Y$. By definition of $p_i$, $p_i(A) = P(A | G = i)$. Then
\begin{align*}
    |p_i(A) - \hat{p}_i(A)| &= |P(A | G = i) - P(A | \hat{G} = i)| \\
    &= |P(A | G = i, \hat{G} = i)P(\hat{G} = i| G = i) \\
    & \quad + P(A | G = i, \hat{G} \neq i)P(\hat{G} \neq i| G = i) \\
    &\quad - P(A | \hat{G} = i, G = i) P(G = i | \hat{G} = i) \\ 
    &\quad - P(A | \hat{G} = i, G \neq i)P(G \neq i | \hat{G} = i)| \\
    &= |P(A | G = i, \hat{G} = i)\left(P(\hat{G} = i| G = i) - P(G = i | \hat{G} = i)\right) \\
    &\quad - P(\hat{G} \neq G | G = i)\left(P(A | G = i, \hat{G} \neq i) - P(A | \hat{G} = i, G \neq i) \right)| \\
    &= |0 - P(\hat{G} \neq G | G = i)\left(P(A | G = i, \hat{G} \neq i) - P(A | \hat{G} = i, G \neq i) \right)| \\
    &\leq P(\hat{G} \neq G | G = i)
\end{align*}
The second equality follows from the law of total probability. The third and the fourth equalities follow from the assumption that $P(G = i) = P(\hat{G} = i)$, which implies that $P(\hat{G} = G | G = i) = P(G = \hat{G}| \hat{G} = i)$ since $$P(G = \hat{G} | G = i) = \frac{P(G = \hat{G}, G = i)}{P(G = i)} = \frac{P(G = \hat{G}, \hat{G} = i)}{P(\hat{G} = i)} = P(G = \hat{G} | \hat{G} = i).$$
This further implies that $P(\hat{G} \neq i | G = i) = P(G \neq i| \hat{G} = i)$.

Since $|p_i(A) - \hat{p}_i(A)| \leq P(\hat{G} \neq G | G = i)$ for any measurable event $A$, the supremum over all events $A$ is also bounded by $P(\hat{G} \neq G | G = i)$. This gives the desired bound on the TV distance.
\end{proof}




\subsection{Generalization to Wasserstein distances}
\label{app:wass}

Theorem \ref{thm:general} can be directly extended to loss functions that are Lipschitz in other metrics. To do so, we first provide a more general definition of Wasserstein distances:

\begin{definition}(Wasserstein distance)
Let $c(x,y)$ be a metric, and let $\pi$ be a coupling between $p$ and $q$. Define the Wasserstein distance between two distributions $p,q$ as
\begin{align*}
    W_c(p, q) = \inf_{\pi}\;\; &\E_{X,Y\sim \pi} [c(X, Y)] \\
\text{s.t.} &\int \pi(x, y)dy = p(x), \int\pi(x, y)dx = q(y).
\end{align*}
\end{definition}

As a familiar example, if $c(x,y) = ||x-y||_2$, then $W_c$ is the earth-mover distance, and $\mathcal{L}(c)$ is the class of 1-Lipschitz functions. Using the Wasserstein distance $W_c$ under different metrics $c$, we can bound the fairness violations for constraint functions $h$ beyond those specified for the TV distance in Theorem \ref{thm:general}. 

\begin{theorem}\label{thm:wasserstein}

Suppose a model with parameters $\theta$ satisfies fairness criteria with respect to the noisy groups $\hat{G}$:
$$\hat{g}_j(\theta) \leq 0 \;\;\forall j \in \mathcal{G}. $$
Suppose the function $h$ satisfies $|h(\theta,x_1,y_1) - h(\theta, x_2, y_2)| \leq c((x_1,y_1), (x_2,y_2))$ for any $(x_1, y_1) \neq (x_2, y_2)$ w.r.t a metric $c$. 
If $W_c(p_j, \hat{p}_j) \leq \gamma_j$ for all $j \in \mathcal{G}$, then the fairness criteria with respect to the true groups $G$ will be satisfied within slacks $\gamma_j$ for each group:
$$  g_j(\theta)  \leq  \gamma_j \quad \forall j \in \mathcal{G}.$$
\end{theorem}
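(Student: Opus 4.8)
The plan is to reprise the argument used for Theorem~\ref{thm:general} almost verbatim, substituting the general Wasserstein distance $W_c$ for the TV distance and invoking the full Kantorovich--Rubinstein duality (Theorem~\ref{thm:KR}) in place of its TV special case. The only structural change is that the Lipschitz hypothesis on $h$ is now phrased with respect to an arbitrary metric $c$ rather than the discrete metric $\Ind(x \neq y)$, so the relevant dual class is $\mathcal{L}(c)$.

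First I would fix a group label $j \in \mathcal{G}$ and decompose, exactly as before,
\begin{align*}
g_j(\theta) = \big(g_j(\theta) - \hat{g}_j(\theta)\big) + \hat{g}_j(\theta) \leq |g_j(\theta) - \hat{g}_j(\theta)| + \hat{g}_j(\theta).
\end{align*}
Next I would observe that the hypothesis $|h(\theta,x_1,y_1) - h(\theta,x_2,y_2)| \leq c((x_1,y_1),(x_2,y_2))$ is precisely the statement that, for fixed $\theta$, the map $(x,y) \mapsto h(\theta,x,y)$ belongs to the class $\mathcal{L}(c)$ appearing in Theorem~\ref{thm:KR}. Applying the Kantorovich--Rubinstein identity to $p_j$ and $\hat{p}_j$ then yields
\begin{align*}
|\hat{g}_j(\theta) - g_j(\theta)| = \big|\E_{X,Y\sim \hat{p}_j}[h(\theta,X,Y)] - \E_{X,Y\sim p_j}[h(\theta,X,Y)]\big| \leq W_c(p_j, \hat{p}_j),
\end{align*}
where the passage from the signed supremum in Theorem~\ref{thm:KR} to the absolute value uses that $\mathcal{L}(c)$ is closed under negation, so that both $\E_{\hat p_j}[h] - \E_{p_j}[h]$ and its negative are dominated by $W_c(p_j,\hat{p}_j)$.

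Finally, combining the two displays with the feasibility assumption $\hat{g}_j(\theta) \leq 0$ and the hypothesis $W_c(p_j,\hat{p}_j) \leq \gamma_j$ gives $g_j(\theta) \leq W_c(p_j,\hat{p}_j) \leq \gamma_j$ for every $j \in \mathcal{G}$, as claimed. I do not anticipate a genuine obstacle: the entire content is inherited from Theorem~\ref{thm:general}, and the one point requiring care is verifying that the Lipschitz-in-$c$ condition on $h$ coincides with membership in $\mathcal{L}(c)$, which is immediate from the definitions. The generalization is essentially ``free'' once the Kantorovich--Rubinstein theorem is stated in its metric form, which is exactly how Theorem~\ref{thm:KR} has been presented.
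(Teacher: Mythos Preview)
Your proposal is correct and matches the paper's own proof essentially line for line: decompose $g_j(\theta) \leq |g_j(\theta) - \hat{g}_j(\theta)| + \hat{g}_j(\theta)$, bound the first term by $W_c(p_j,\hat{p}_j)$ via Kantorovich--Rubinstein using $h(\theta,\cdot,\cdot) \in \mathcal{L}(c)$, and conclude with $\hat{g}_j(\theta) \leq 0$. If anything, your remark that $\mathcal{L}(c)$ is closed under negation (justifying the absolute value) is slightly more careful than the paper's presentation.
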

\begin{proof}

By the triangle inequality, for any group label $j$,
\begin{align*}
    |g_j(\theta)  -  g(\theta)| \leq |g_j(\theta) - \hat{g}_j(\theta)| + \hat{g}_j(\theta)
\end{align*}
By Kantorovich-Rubenstein theorem (provided here as Theorem \ref{thm:KR}), we also have
\begin{align*}
    |\hat{g}_j(\theta) - g_j(\theta)| &=|\E_{X,Y \sim \hat{p}_j}[h(\theta,X,Y)] - \E_{X,Y \sim p_j}[h(\theta,X,Y)]|\\
    &\leq W_c(p_j, \hat{p}_j).
\end{align*}
By the assumption that $\theta$ satisifes fairness constraints with respect to the noisy groups $\hat{G}$, $ \hat{g}_j(\theta) \leq 0 $.
Therefore, combining these with the triangle inequality, we get the desired result.
\end{proof}

\section{Details on DRO formulation for TV distance}\label{app:dro}
Here we describe the details on solving the DRO problem (\ref{eq:dro}) with TV distance using the empirical Lagrangian formulation. We also provide the pseudocode we used for the projected gradient-based algorithm to solve it. 

\subsection{Empirical Lagrangian Formulation}\label{app:dro-emp-lag-form}
We rewrite the constrained optimization problem (\ref{eq:dro}) as a minimax problem using the Lagrangian formulation. We also convert all expectations into expectations over empirical distributions given a dataset of $n$ samples $(X_1,Y_1,G_1),...,(X_n,Y_n,G_n)$.

Let $n_j$ denote the number of samples that belong to a true group $G = j$. Let the empirical distribution $\hat{p}_j \in \mathbb{R}^n$ be a vector with $i$-th entry $\hat{p}_j^i = \frac{1}{n_j}$ if the $i$-th example has a noisy group membership $\hat{G}_i = j$, and $0$ otherwise. Replacing all expectations with expectations over the appropriate empirical distributions, the empirical form of (\ref{eq:dro}) can be written as:
\begin{align}
\begin{split}
\min_{\theta} \quad &\frac{1}{n}\sum_{i=1}^n l(\theta, X_i, Y_i) \\
\text{s.t.} \quad & \max_{\tilde{p}_j \in \mathbb{B}_{\gamma_j}(\hat{p}_j)}  \sum_{i=1}^n\tilde{p}_j^i h(\theta, X_i, Y_i)  \leq 0 \quad \forall j \in \mathcal{G}
\end{split}\label{eq:dro_emp}
\end{align}
where $\mathbb{B}_{\gamma_j}(\hat{p}_j) = \{\tilde{p}_j \in \mathbb{R}^n : \frac{1}{2}\sum_{i=1}^n |\tilde{p}_j^i - \hat{p}_j^i| \leq \gamma_j, \sum_{i=1}^n \tilde{p}_j^i = 1, \tilde{p}_j^i \geq 0 \quad \forall i = 1,...,n\}$. \\

For ease of notation, for $j \in \{1, 2, ..., m\}$, let 
$$f(\theta) = \frac{1}{n}\sum_{i=1}^n l(\theta, X_i, Y_i)$$
$$f_j(\theta, \tilde{p}_j) = \sum_{i=1}^n\tilde{p}_j^i h(\theta, X_i, Y_i). $$
Then the Lagrangian of the empirical formulation (\ref{eq:dro_emp}) is
$$\mathcal{L}(\theta, \lambda) = f(\theta) + \sum_{j=1}^m \lambda_j \max_{\tilde{p}_j \in \mathbb{B}_{\gamma}(\hat{p}_j)} f_j(\theta, \tilde{p}_j)$$
and problem (\ref{eq:dro_emp}) can be rewritten as
$$\min_{\theta} \max_{\lambda \geq 0} f(\theta) + \sum_{j=1}^m \lambda_j \max_{\tilde{p}_j \in \mathbb{B}_{\gamma}(\hat{p}_j)} f_j(\theta, \tilde{p}_j) $$
Moving the inner max out of the sum and rewriting the constraints as $\ell_1$-norm constraints:
\begin{align}
    \begin{split}
        \min_{\theta} \max_{\lambda \geq 0}\max_{\substack{\tilde{p}_j \in \mathbb{R}^n, \tilde{p}_j \geq 0,\\ j = 1,...,m}} &f(\theta) + \sum_{j=1}^m \lambda_j f_j(\theta, \tilde{p}_j) \\
        \text{s.t. } & ||\tilde{p}_j - \hat{p}_j||_1 \leq 2\gamma_j,\;\; || \tilde{p}_j ||_1 = 1 \quad \forall j \in \{1,...,m\}
    \end{split}\label{eq:dro-emp-app}
\end{align}
Since projections onto the $\ell_1$-ball can be done efficiently~\citep{duchi2008efficient}, we can solve problem (\ref{eq:dro-emp-app}) using a projected gradient descent ascent (GDA) algorithm. This is a simplified version of the algorithm introduced by \citet{Namkoong:2016} for solving general classes of DRO problems. We provide pseudocode in Algorithm \ref{alg:dro}, as well as an actual implementation in the attached code.

\subsection{Projected GDA Algorithm for DRO}
\label{app:dro-alg}
\begin{algorithm}[H] 
\caption{Project GDA Algorithm}
\label{alg:dro}
\begin{algorithmic}[1]
\REQUIRE learning rates $\eta_\theta > 0$, $\eta_\lambda > 0$, $\eta_p > 0$, estimates of $P( G \neq \hat{G} | \hat{G} = j)$ to specify $\gamma_j$.
\vskip 0.05in
\FOR{$t = 1, \ldots, T$}
\vskip 0.05in

\STATE \textit{Descent step on $\theta$:} \\
 $\theta^{(t+1)} \gets \theta^{(t)} - \eta_\theta\nabla_\theta f(\theta^{(t)}) - \eta_\theta \sum_{j=1}^m \lambda_j^{(t)} \nabla_\theta f_j(\theta^{(t)}, \tilde{p}_j^{(t)})$
\vskip 0.05in
\STATE \textit{Ascent step on $\lambda$}:\\

$\lambda^{(t+1)}_j \gets \lambda^{(t)}_j + \eta_\lambda f_j(\theta, \tilde{p}_j^{(t)})$\\
\vskip 0.05in
\FOR{$j = 1,...,m$}
        \STATE \textit{Ascent step on} $\tilde{p}_j$: $\tilde{p}_j^{(t+1)} \gets \tilde{p}_j^{(t)} + \eta_p \lambda_j^{(t)}\nabla_{\tilde{p}_j} f_j(\theta^{(t)}, \tilde{p}_j^{(t)})$
        \STATE \textit{Project $\tilde{p}_j^{(t+1)}$ onto $\ell_1$-norm constraints}: $||\tilde{p}_j^{(t+1)} - \hat{p}_j||_1 \leq 2\gamma_j, || \tilde{p}_j^{(t+1)} ||_1 = 1$
\ENDFOR   
\ENDFOR
\STATE \textbf{return} { $\theta^{(t^{*})}$ where $t^{*}$ denotes the \textit{best} iterate that satisfies the constraints in (\ref{eq:dro}) with the lowest objective.}
\end{algorithmic}
\end{algorithm}

\subsection{Equalizing TPRs and FPRs using DRO}\label{app:tpr_dro} 
In the two case studies in Section \ref{sec:experiments}, we enforce \textit{equality of opportunity} and \textit{equalized odds} \cite{Hardt:2016} by equalizing true positive rates (TPRs) and/or false positive rates (FPRs) within some slack $\alpha$. In this section, we describe in detail the implementation of the constraints for equalizing TPRs and FPRs under the DRO approach.

To equalize TPRs with slack $\alpha$ under the DRO approach, we set 
\begin{equation}\label{eq:dro_tpr}
    \tilde{g}_j^{\text{TPR}}(\theta) = \frac{\E_{X,Y \sim p}[\Ind(Y=1) \Ind(\hat{Y}=1)]}{\E_{X,Y \sim p}[\Ind(Y=1)]} - \frac{ \E_{X,Y \sim \tilde{p}_j}[ \Ind(Y=1) \Ind(\hat{Y}=1)] }{ \E_{X,Y \sim \tilde{p}_j}[\Ind(Y=1)] } - \alpha.
\end{equation}
The first term corresponds to the TPR for the full population.
The second term estimates the TPR for group $j$. Setting $\alpha = 0$ exactly equalizes true positive rates.

To equalize FPRs with slack $\alpha$ under the DRO approach, we set 
\begin{equation}\label{eq:dro_fpr}
    \tilde{g}_j^{\text{FPR}}(\theta) = \frac{ \E_{X,Y \sim \tilde{p}_j}[ \Ind(Y=0) \Ind(\hat{Y}=1)] }{ \E_{X,Y \sim \tilde{p}_j}[\Ind(Y=0)] } - \frac{\E_{X,Y \sim p}[\Ind(Y=0) \Ind(\hat{Y}=1)]}{\E_{X,Y \sim p}[\Ind(Y=0)]} - \alpha.
\end{equation}
The first term estimates the FPR for group $j$. The second term corresponds to the FPR for the full population. Setting $\alpha = 0$ exactly equalizes false positive rates.

To equalize TPRs for Case Study 1, we apply $m$ constraints, \\$\left\{\max_{\tilde{p}_j: TV(\tilde{p}_j, \hat{p}_j) \leq \gamma_j, \tilde{p}_j \ll p} \tilde{g}_j^{\text{TPR}}(\theta) \leq 0\right\} \; \forall j \in \mathcal{G}$. 

To equalize both TPRs and FPRs simultaneously for Case Study 2, we apply $2m$ constraints, $\left\{\max_{\tilde{p}_j: TV(\tilde{p}_j, \hat{p}_j) \leq \gamma_j, \tilde{p}_j \ll p} \tilde{g}_j^{\text{TPR}}(\theta) \leq 0, \max_{\tilde{p}_j: TV(\tilde{p}_j, \hat{p}_j) \leq \gamma_j, \tilde{p}_j \ll p} \tilde{g}_j^{\text{FPR}}(\theta) \leq 0\right\} \; \forall j \in \mathcal{G}$.

\subsubsection{$h(\theta, X, Y)$ for equalizing TPRs and FPRs}
 
Since the notation in Section \ref{sec:dro} and in the rest of the paper uses generic functions $h$ to express the group-specific constraints, we show in Lemma \ref{lem:dro_tpr_l1} that the constraint using $\tilde{g}_j^{\text{TPR}}(\theta)$ in Equation (\ref{eq:dro_tpr}) can also be written as an equivalent constraint in the form of Equation (\ref{eq:dro}), as
$$\tilde{g}_j^{\text{TPR}}(\theta) = \E_{X,Y\sim \tilde{p}_j}[h^{\text{TPR}}(\theta, X, Y)]$$
for some function $h^{\text{TPR}}: \Theta \times \mathcal{X} \times \mathcal{Y} \to \mathbb{R}$.

\begin{lemma}\label{lem:dro_tpr_l1}
 Denote $\hat{Y}$ as $\Ind(\phi(X;\theta) > 0)$. Let $h^{\text{TPR}}(\theta,X,Y)$ be given by

$$h^{\text{TPR}}(\theta,X,Y) = \frac{1}{2} \left(-\Ind(\hat{Y}=1, Y = 1) -  \Ind(Y=1) \left(\alpha - \frac{\E_{X,Y \sim p}[\Ind(Y=1,\hat{Y}=1)]}{\E_{X,Y \sim p}[\Ind(Y=1)]}  \right)\right).$$

Then \begin{align*}
    \frac{\E_{X,Y \sim p}[\Ind(Y=1) \Ind(\hat{Y}=1)]}{\E_{X,Y \sim p}[\Ind(Y=1)]} - \frac{ \E_{X,Y \sim \tilde{p}_j}[ \Ind(Y=1) \Ind(\hat{Y}=1)] }{ \E_{X,Y \sim \tilde{p}_j}[\Ind(Y=1)]}  - \alpha \leq 0 \\
        \iff \E_{X,Y \sim \tilde{p}_j}[h^{\text{TPR}}(\theta, X, Y)] \leq 0.
\end{align*}
\end{lemma}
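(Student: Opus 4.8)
The plan is to collapse both sides of the claimed equivalence onto one common inequality by clearing the denominator that makes $\tilde{g}_j^{\text{TPR}}$ a \emph{fractional}-linear (rather than linear) functional of $\tilde{p}_j$. First I would fix shorthand for the three quantities involved: write $A = \frac{\E_{X,Y \sim p}[\Ind(Y=1,\hat{Y}=1)]}{\E_{X,Y \sim p}[\Ind(Y=1)]}$ for the overall TPR, which is a \emph{constant} not depending on $\tilde{p}_j$, and write $P_1 = \E_{X,Y \sim \tilde{p}_j}[\Ind(Y=1,\hat{Y}=1)]$ and $Q_1 = \E_{X,Y \sim \tilde{p}_j}[\Ind(Y=1)]$ for the numerator and denominator of the group-$j$ TPR, so that the group TPR equals $P_1/Q_1$. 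In this notation the first inequality (i.e.\ $\tilde{g}_j^{\text{TPR}}(\theta)\leq 0$ from Equation (\ref{eq:dro_tpr})) reads $A - P_1/Q_1 - \alpha \leq 0$.

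The key step is to note that $Q_1 = \E_{X,Y \sim \tilde{p}_j}[\Ind(Y=1)] \geq 0$, and is strictly positive whenever the group TPR is well defined, so multiplying through by $Q_1$ preserves the direction of the inequality and yields the equivalent statement $Q_1(A-\alpha) \leq P_1$. This clearing of the denominator is precisely what allows the constraint to be represented as the expectation under $\tilde{p}_j$ of a single fixed function: since $A$ and $\alpha$ are constants, the map $\tilde{p}_j \mapsto Q_1(A-\alpha) - P_1$ is \emph{linear} in $\tilde{p}_j$, which is what the DRO formulation (\ref{eq:dro}) requires.

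I would then expand $\E_{X,Y \sim \tilde{p}_j}[h^{\text{TPR}}(\theta,X,Y)]$ by linearity of expectation, pulling the constant $(\alpha - A)$ out of the expectation, to obtain
\[
\E_{X,Y \sim \tilde{p}_j}\bigl[h^{\text{TPR}}(\theta,X,Y)\bigr] = \tfrac{1}{2}\bigl(-P_1 - Q_1(\alpha - A)\bigr) = \tfrac{1}{2}\bigl(Q_1(A-\alpha) - P_1\bigr).
\]
Since $\tfrac{1}{2} > 0$, the sign of this expectation matches the sign of $Q_1(A-\alpha) - P_1$, so $\E_{X,Y \sim \tilde{p}_j}[h^{\text{TPR}}] \leq 0$ is equivalent to $Q_1(A-\alpha) \leq P_1$, which is exactly the reduced form of the first inequality. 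Chaining the two equivalences closes the argument.

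I do not expect a genuine obstacle here: the whole proof is clearing a denominator and applying linearity of expectation. The only two points needing a word of care are (i) the positivity $Q_1 > 0$, which is required both for the group TPR to be well defined and for the division step to be reversible, and (ii) the observation that the factor $\tfrac{1}{2}$ is immaterial to the equivalence—it is present only to rescale $h^{\text{TPR}}$ so that its pairwise differences are bounded by $1$, matching the hypothesis of Theorem \ref{thm:general}. The analogous FPR identity in Equation (\ref{eq:dro_fpr}) would follow by the identical computation with $\Ind(Y=0)$ in place of $\Ind(Y=1)$ and the inequality reversed.
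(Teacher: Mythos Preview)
Your proposal is correct and follows essentially the same approach as the paper: both arguments clear the denominator $\E_{X,Y\sim\tilde p_j}[\Ind(Y=1)]$ using its nonnegativity and then appeal to linearity of expectation to rewrite the constraint as $\E_{\tilde p_j}[h^{\text{TPR}}]\leq 0$. Your shorthand $A,P_1,Q_1$ and the remark about the role of the $\tfrac12$ factor are helpful additions but do not change the underlying argument.
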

\begin{proof}
Substituting the given function $h^{\text{TPR}}(\theta, X, Y)$, and using the fact that \\ $\E_{X,Y \sim \tilde{p}_j}[\Ind(Y=1)] \geq 0$:
\begin{align*}
    &\E_{X,Y \sim \tilde{p}_j}[h^{\text{TPR}}(\theta, X, Y)] \leq 0 \\
    &\iff \E_{X,Y \sim \tilde{p}_j}\left[\frac{1}{2} \left(-\Ind(\hat{Y}=1, Y = 1) -  \Ind(Y=1) \left(\alpha - \frac{\E_{X,Y \sim p}[\Ind(Y=1,\hat{Y}=1)]}{\E_{X,Y \sim p}[\Ind(Y=1)]}  \right)\right)\right] \leq 0 \\
    &\iff -\E_{X,Y \sim \tilde{p}_j}[\Ind(\hat{Y}=1, Y = 1)] -\E_{X,Y \sim \tilde{p}_j}\left[\Ind(Y=1) \left(\alpha - \frac{\E_{X,Y \sim p}[\Ind(Y=1,\hat{Y}=1)]}{\E_{X,Y \sim p}[\Ind(Y=1)]}  \right)\right] \leq 0 \\
    &\iff -\E_{X,Y \sim \tilde{p}_j}[\Ind(\hat{Y}=1, Y = 1)] -\alpha\E_{X,Y \sim \tilde{p}_j}[\Ind(Y=1)] \\
    & \quad\quad\quad + \frac{\E_{X,Y \sim p}[\Ind(Y=1,\hat{Y}=1)]}{\E_{X,Y \sim p}[\Ind(Y=1)]} \E_{X,Y \sim \tilde{p}_j}[\Ind(Y=1)] \leq 0 \\
    &\iff \frac{\E_{X,Y \sim p}[\Ind(Y=1,\hat{Y}=1)]}{\E_{X,Y \sim p}[\Ind(Y=1)]} -\frac{\E_{X,Y \sim \tilde{p}_j}[\Ind(\hat{Y}=1, Y = 1)]}{\E_{X,Y \sim \tilde{p}_j}[\Ind(Y=1)]} -\alpha \leq 0 
\end{align*}
\end{proof}
By similar proof, we also show in Lemma \ref{lem:dro_fpr_l1} that the constraint using $\tilde{g}_j^{\text{FPR}}(\theta)$ in Equation (\ref{eq:dro_fpr}) can also be written as an equivalent constraint in the form of Equation (\ref{eq:dro}), as
$$\tilde{g}_j^{\text{FPR}}(\theta) = \E_{X,Y\sim \tilde{p}_j}[h^{\text{FPR}}(\theta, X, Y)]$$
for some function $h^{\text{FPR}}: \Theta \times \mathcal{X} \times \mathcal{Y} \to \mathbb{R}$.

\begin{lemma}\label{lem:dro_fpr_l1}
 Denote $\hat{Y}$ as $\Ind(\phi(X;\theta) > 0)$. Let $h^{\text{FPR}}(\theta,X,Y)$ be given by

$$h^{\text{FPR}}(\theta,X,Y) = \frac{1}{2} \left(\Ind(\hat{Y}=1, Y = 0) -  \Ind(Y=0) \left(\alpha + \frac{\E_{X,Y \sim p}[\Ind(Y=0,\hat{Y}=1)]}{\E_{X,Y \sim p}[\Ind(Y=0)]}  \right)\right).$$

Then \begin{align*}
    \frac{ \E_{X,Y \sim \tilde{p}_j}[ \Ind(Y=0) \Ind(\hat{Y}=1)] }{ \E_{X,Y \sim \tilde{p}_j}[\Ind(Y=0)]} - \frac{\E_{X,Y \sim p}[\Ind(Y=0) \Ind(\hat{Y}=1)]}{\E_{X,Y \sim p}[\Ind(Y=0)]}   - \alpha \leq 0 \\
        \iff \E_{X,Y \sim \tilde{p}_j}[h^{\text{FPR}}(\theta, X, Y)] \leq 0.
\end{align*}
\end{lemma}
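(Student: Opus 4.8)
The plan is to mirror the proof of Lemma \ref{lem:dro_tpr_l1}, since the FPR constraint shares the same algebraic skeleton as the TPR constraint, with the event $\{Y=1\}$ replaced by $\{Y=0\}$ and the group and population terms interchanged in the difference (here the group rate is the positive term and the population rate is subtracted, rather than the reverse). First I would substitute the stated $h^{\text{FPR}}$ into $\E_{X,Y\sim\tilde{p}_j}[h^{\text{FPR}}(\theta,X,Y)]$ and expand using linearity of expectation, pulling the constant population ratio $R := \frac{\E_{X,Y\sim p}[\Ind(Y=0,\hat{Y}=1)]}{\E_{X,Y\sim p}[\Ind(Y=0)]}$ outside the expectation over $\tilde{p}_j$. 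This yields
\[
\E_{X,Y\sim\tilde{p}_j}[h^{\text{FPR}}] = \tfrac12\Big(\E_{X,Y\sim\tilde{p}_j}[\Ind(\hat{Y}=1,Y=0)] - \alpha\,\E_{X,Y\sim\tilde{p}_j}[\Ind(Y=0)] - R\,\E_{X,Y\sim\tilde{p}_j}[\Ind(Y=0)]\Big).
\]

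Next, since $\tfrac12 > 0$, the sign of this quantity agrees with the sign of the bracketed expression, so $\E_{X,Y\sim\tilde{p}_j}[h^{\text{FPR}}] \leq 0$ is equivalent to the bracket being $\leq 0$. I would then divide the bracket through by the normalizer $\E_{X,Y\sim\tilde{p}_j}[\Ind(Y=0)]$, which is nonnegative, to recover the rate-based form
\[
\frac{\E_{X,Y\sim\tilde{p}_j}[\Ind(Y=0)\Ind(\hat{Y}=1)]}{\E_{X,Y\sim\tilde{p}_j}[\Ind(Y=0)]} - \frac{\E_{X,Y\sim p}[\Ind(Y=0)\Ind(\hat{Y}=1)]}{\E_{X,Y\sim p}[\Ind(Y=0)]} - \alpha \leq 0,
\]
establishing both directions of the claimed equivalence through a single chain of $\iff$ steps, exactly as in the TPR case.

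The only point requiring care---the analog of the one flagged in Lemma \ref{lem:dro_tpr_l1}---is the division by $\E_{X,Y\sim\tilde{p}_j}[\Ind(Y=0)]$, which multiplies the second and third terms of the bracket. Its \emph{nonnegativity} is precisely what guarantees that multiplying or dividing the inequality by it preserves the inequality direction in both directions of the $\iff$, so the equivalence goes through provided this factor is positive, i.e.\ whenever group $j$ places positive mass on the $\{Y=0\}$ event under $\tilde{p}_j$. Since the indicator-based $h^{\text{FPR}}$ and the reweighted distributions $\tilde{p}_j$ in our setting always assign positive mass to negative-labeled examples, this poses no obstacle, and beyond the substitution and the sign-preservation argument no further work is needed.
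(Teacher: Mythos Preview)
Your proposal is correct and follows essentially the same approach as the paper's own proof: substitute $h^{\text{FPR}}$, expand by linearity (pulling the population-rate constant outside the $\tilde{p}_j$-expectation), drop the harmless factor $\tfrac12$, and divide through by the nonnegative normalizer $\E_{X,Y\sim\tilde{p}_j}[\Ind(Y=0)]$ to recover the rate form via a chain of $\iff$ steps. If anything, you are slightly more careful than the paper in flagging that the division step needs this normalizer to be strictly positive for the equivalence to go through.
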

\begin{proof}
Substituting the given function $h^{\text{FPR}}(\theta, X, Y)$, and using the fact that \\ $\E_{X,Y \sim \tilde{p}_j}[\Ind(Y=0)] \geq 0$:
\begin{align*}
    &\E_{X,Y \sim \tilde{p}_j}[h^{\text{FPR}}(\theta, X, Y)] \leq 0 \\
    &\iff \E_{X,Y \sim \tilde{p}_j}\left[\frac{1}{2} \left(\Ind(\hat{Y}=1, Y = 0) -  \Ind(Y=0) \left(\alpha + \frac{\E_{X,Y \sim p}[\Ind(Y=0,\hat{Y}=1)]}{\E_{X,Y \sim p}[\Ind(Y=0)]}  \right)\right)\right] \leq 0 \\
    &\iff \E_{X,Y \sim \tilde{p}_j}[\Ind(\hat{Y}=1, Y = 0)] -\E_{X,Y \sim \tilde{p}_j}\left[\Ind(Y=0) \left(\alpha + \frac{\E_{X,Y \sim p}[\Ind(Y=0,\hat{Y}=1)]}{\E_{X,Y \sim p}[\Ind(Y=0)]}  \right)\right] \leq 0 \\
    &\iff \E_{X,Y \sim \tilde{p}_j}[\Ind(\hat{Y}=1, Y = 0)] -\alpha\E_{X,Y \sim \tilde{p}_j}[\Ind(Y=0)] \\
    & \quad\quad\quad - \frac{\E_{X,Y \sim p}[\Ind(Y=0,\hat{Y}=1)]}{\E_{X,Y \sim p}[\Ind(Y=0)]} \E_{X,Y \sim \tilde{p}_j}[\Ind(Y=0)] \leq 0 \\
    &\iff \frac{\E_{X,Y \sim \tilde{p}_j}[\Ind(\hat{Y}=1, Y = 0)]}{\E_{X,Y \sim \tilde{p}_j}[\Ind(Y=0)]} - \frac{\E_{X,Y \sim p}[\Ind(Y=0,\hat{Y}=1)]}{\E_{X,Y \sim p}[\Ind(Y=0)]} - \alpha \leq 0 
\end{align*}
\end{proof}

\subsection{DRO when $\hat{G}$ and $G$ have different dimensionalities}\label{app:dro-multi}
The soft assignments approach is naturally formulated to be able to handle $G \in \mathcal{G} = \{1,...,m\}$ and $\hat{G} \in \hat{\mathcal{G}} = \{1,...,\hat{m}\}$ when $\hat{m} \neq m$. 
The DRO approach can be extended to handle this case by generalizing Lemma \ref{lem:tv_bound} 
to $TV(p_j, \hat{p}_i) \leq P(\hat{G} \neq i | G = j), j \in \mathcal{G}, i \in \hat{\mathcal{G}}$, and generalizing the DRO formulation to have the true group distribution $p_j$ bounded in a TV distance ball centered at $\hat{p}_i$. Empirically comparing this generalized DRO approach to the soft group assignments approach when $\hat{m} \neq m$ is an interesting avenue of future work.


\section{Further details for soft group assignments approach}
\label{app:soft}
Here we provide additional technical details regarding the soft group assignments approach introduced in Section \ref{eq:softweights}.

\subsection{Derivation for $\E[h(\theta, X, Y) | G = j]$}\label{app:tower_property}
Here we show $\E[h(\theta, X, Y) | G = j] = \frac{\E[h(\theta, X, Y) P( G = j| \hat{Y}, Y, \hat{G})]}{P(G=j)} $, assuming that $h(\theta, X, Y)$ depends on $X$ through $\hat{Y}$, i.e. $\hat{Y} = \Ind(\phi(\theta, X) > 0)$. Using the tower property and the definition of conditional expectation:
\begin{align}
    \begin{split}
    \E[h(\theta, X, Y) | G = j] &= \frac{\E[h(\theta, X, Y) \Ind( G = j)]}{P(G = j)} \\
    &= \frac{\E[\E[h(\theta, X, Y) \Ind( G = j)| \hat{Y}, Y, \hat{G}]]}{P(G=j)}  \\
    &= \frac{\E[h(\theta, X, Y) \E[\Ind( G = j)| \hat{Y}, Y, \hat{G}]]}{P(G=j)}  \\
    &= \frac{\E[h(\theta, X, Y) P( G = j| \hat{Y}, Y, \hat{G})]}{P(G=j)} 
    \end{split} \label{eq:kallus_tower_property}
\end{align}

\subsection{Equalizing TPRs and FPRs using soft group assignments}\label{app:tpr_sa} 
In the two case studies in Section \ref{sec:experiments}, we enforce \textit{equality of opportunity} and \textit{equalized odds} \cite{Hardt:2016} by equalizing true positive rates (TPRs) and/or false positive rates (FPRs) within some slack $\alpha$. In this section, we describe in detail the implementation of the constraints for equalizing TPRs and FPRs under the soft group assignments approach.

To equalize TPRs with slack $\alpha$ under the soft group assignments approach, we set 
\begin{equation}\label{eq:sw_tpr_kallus}
    g_j^{\text{TPR}}(\theta, w) = \frac{\E[\Ind(Y=1) \Ind(\hat{Y}=1)]}{\E[\Ind(Y=1)]} - \frac{ \E[ \Ind(Y=1) \Ind(\hat{Y}=1) w(j | \hat{Y}, Y, \hat{G} )] }{ \E[ \Ind(Y=1) w(j | \hat{Y}, Y, \hat{G}) ] } - \alpha.
\end{equation}
The first term corresponds to the TPR for the full population.
The second term estimates the TPR for group $j$ as done by \citet{Kallus:2020} in Equation (5) and Proposition 8. Setting $\alpha = 0$ exactly equalizes true positive rates.

To equalize FPRs with slack $\alpha$ under the soft group assignments approach, we set 
\begin{equation}\label{eq:sw_fpr_kallus}
    g_j^{\text{FPR}}(\theta, w) =  \frac{ \E[ \Ind(Y=0) \Ind(\hat{Y}=1) w(j | \hat{Y}, Y, \hat{G} )] }{ \E[ \Ind(Y=0) w(j | \hat{Y}, Y, \hat{G}) ] } - \frac{\E[\Ind(Y=0) \Ind(\hat{Y}=1)]}{\E[\Ind(Y=0)]} - \alpha.
\end{equation}

The first term estimates the FPR for group $j$ as done previously for the TPR. The second term corresponds to the FPR for the full population. Setting $\alpha = 0$ exactly equalizes false positive rates.

To equalize TPRs for Case Study 1, we apply $m$ constraints, $\left\{\max_{w \in \mathcal{W}(\theta)} g_j^{\text{TPR}}(\theta, w) \leq 0\right\} \; \forall j \in \mathcal{G}$. To equalize both TPRs and FPRs simultaneously for Case Study 2, we apply $2m$ constraints, $\left\{\max_{w \in \mathcal{W}(\theta)} g_j^{\text{TPR}}(\theta, w) \leq 0, \max_{w \in \mathcal{W}(\theta)} g_j^{\text{FPR}}(\theta, w) \leq 0\right\} \; \forall j \in \mathcal{G}$.

\subsubsection{$h(\theta, X, Y)$ for equalizing TPRs and FPRs}
 
Since the notation in Section \ref{sec:softweights} and in the rest of the paper uses generic functions $h$ to express the group-specific constraints, we show in Lemma \ref{lem:sw_tpr_l1} that the constraint using $g_j^{\text{TPR}}(\theta, w)$ in Equation (\ref{eq:sw_tpr_kallus}) can also be written as an equivalent constraint in the form of Equation (\ref{eq:w_constraint}), as
$$g_j^{\text{TPR}}(\theta, w) = \frac{\E[h^{\text{TPR}}(\theta, X, Y) w(j |\hat{Y}, Y, \hat{G})]}{P(G=j)}$$
for some function $h^{\text{TPR}}: \Theta \times \mathcal{X} \times \mathcal{Y} \to \mathbb{R}$.

\begin{lemma}\label{lem:sw_tpr_l1}
 Denote $\hat{Y}$ as $\Ind(\phi(X;\theta) > 0)$. Let $h^{\text{TPR}}(\theta,X,Y)$ be given by

$$h^{\text{TPR}}(\theta,X,Y) = \frac{1}{2} \left(-\Ind(\hat{Y}=1, Y = 1) -  \Ind(Y=1) \left(\alpha - \frac{\E[\Ind(Y=1,\hat{Y}=1)]}{\E[\Ind(Y=1)]}  \right)\right).$$

Then \begin{align*}
    \frac{\E[\Ind(Y=1) \Ind(\hat{Y}=1)]}{\E[\Ind(Y=1)]} - \frac{ \E[ \Ind(Y=1) \Ind(\hat{Y}=1) w(j | \hat{Y}, Y, \hat{G} )] }{ \E[ \Ind(Y=1)  w(j | \hat{Y}, Y, \hat{G}) ] } - \alpha \leq 0 \\
        \iff \frac{\E[h^{\text{TPR}}(\theta, X, Y) w(j |\hat{Y}, Y, \hat{G})]}{P(G=j)} \leq 0.
\end{align*}
for all $j \in \mathcal{G}, P(G=j) > 0$.
\end{lemma}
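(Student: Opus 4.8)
The plan is to mirror the proof of Lemma~\ref{lem:dro_tpr_l1}, replacing the expectations under $\tilde{p}_j$ with the reweighted expectations $\E[\,\cdot\,w(j\mid\hat{Y},Y,\hat{G})]$. First I would note that since $P(G=j)>0$, dividing by it preserves signs, so the right-hand inequality $\frac{\E[h^{\text{TPR}}(\theta,X,Y)\,w(j\mid\hat{Y},Y,\hat{G})]}{P(G=j)}\leq 0$ is equivalent to $\E[h^{\text{TPR}}(\theta,X,Y)\,w(j\mid\hat{Y},Y,\hat{G})]\leq 0$. This reduces the claim to a purely algebraic rearrangement of a single weighted expectation.

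Next I would substitute the closed form of $h^{\text{TPR}}$ and expand using linearity of expectation. The leading $\tfrac12$ is a positive constant and can be dropped without changing the direction of the inequality. The crucial simplification is that $\alpha-\frac{\E[\Ind(Y=1,\hat{Y}=1)]}{\E[\Ind(Y=1)]}$ is a deterministic scalar (it does not depend on the outcome of $X,Y,\hat{G}$), so it factors out of the expectation, leaving
$$-\E[\Ind(\hat{Y}=1,Y=1)\,w(j\mid\cdot)]-\Bigl(\alpha-\tfrac{\E[\Ind(Y=1,\hat{Y}=1)]}{\E[\Ind(Y=1)]}\Bigr)\E[\Ind(Y=1)\,w(j\mid\cdot)]\leq 0.$$

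The final step is to divide through by $\E[\Ind(Y=1)\,w(j\mid\hat{Y},Y,\hat{G})]$, which is nonnegative since $\Ind(Y=1)\geq 0$ and $w\in[0,1]$; rearranging the two resulting TPR-like ratios recovers exactly the left-hand inequality, closing the chain of equivalences. The one point requiring care — the analog of the closing remark in the DRO proof — is this division: for the biconditional to hold the denominator $\E[\Ind(Y=1)\,w(j\mid\cdot)]$ must be strictly positive, i.e. group $j$ must carry nonvanishing soft mass among the positively-labeled examples, which is the only real obstacle and is implicit in $P(G=j)>0$ together with the normalization constraints defining $\mathcal{W}(\theta)$. The corresponding FPR lemma then follows by an identical argument with $Y=0$ replacing $Y=1$ throughout and the sign of the indicator term flipped as in $h^{\text{FPR}}$.
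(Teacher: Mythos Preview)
Your proposal is correct and mirrors the paper's own proof essentially step for step: drop $P(G=j)>0$ from the denominator, substitute $h^{\text{TPR}}$, discard the $\tfrac12$, factor out the deterministic scalar $\alpha-\frac{\E[\Ind(Y=1,\hat{Y}=1)]}{\E[\Ind(Y=1)]}$, and divide through by $\E[\Ind(Y=1)\,w(j\mid\hat{Y},Y,\hat{G})]\geq 0$. Your added remark about needing that last denominator to be strictly positive for the biconditional is a valid caveat that the paper glosses over (it only states nonnegativity), so if anything your write-up is slightly more careful on that point.
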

\begin{proof}
Substituting the given function $h^{\text{TPR}}(\theta, X, Y)$, and using the fact that $P(G=j) > 0$ and $\E[\Ind(Y=1)w(j | \hat{Y}, Y, \hat{G})] \geq 0$:
\begin{align*}
    &\frac{\E[h^{\text{TPR}}(\theta, X, Y) w(j | \hat{Y}, Y, \hat{G})]}{P(G=j)} \leq 0 \\
    &\iff \E[h^{\text{TPR}}(\theta, X, Y) w(j | \hat{Y}, Y, \hat{G})] \leq 0 \\
    &\iff \E\left[\frac{1}{2} \left(-\Ind(\hat{Y}=1, Y = 1) -  \Ind(Y=1) \left(\alpha - \frac{\E[\Ind(Y=1,\hat{Y}=1)]}{\E[\Ind(Y=1)]}  \right)\right) w(j | \hat{Y}, Y, \hat{G})\right] \leq 0 \\
    &\iff -\E[\Ind(\hat{Y}=1, Y = 1)w(j | \hat{Y}, Y, \hat{G})] \\
    &\quad\quad\quad -\E\left[\Ind(Y=1) \left(\alpha - \frac{\E[\Ind(Y=1,\hat{Y}=1)]}{\E[\Ind(Y=1)]}  \right) w(j | \hat{Y}, Y, \hat{G})\right] \leq 0 \\
    &\iff -\E[\Ind(\hat{Y}=1, Y = 1)w(j | \hat{Y}, Y, \hat{G})] -\alpha\E[\Ind(Y=1)w(j | \hat{Y}, Y, \hat{G})] \\
    & \quad\quad\quad + \frac{\E[\Ind(Y=1,\hat{Y}=1)]}{\E[\Ind(Y=1)]} \E[\Ind(Y=1) w(j | \hat{Y}, Y, \hat{G})] \leq 0 \\
    &\iff \frac{\E[\Ind(Y=1,\hat{Y}=1)]}{\E[\Ind(Y=1)]} -\frac{\E[\Ind(\hat{Y}=1, Y = 1)w(j | \hat{Y}, Y, \hat{G})]}{\E[\Ind(Y=1)w(j | \hat{Y}, Y, \hat{G})]} -\alpha \leq 0
\end{align*}
\end{proof}

By similar proof, we also show in Lemma \ref{lem:sw_fpr_l1} that the constraint using $g_j^{\text{FPR}}(\theta, w)$ in Equation (\ref{eq:sw_fpr_kallus}) can also be written as an equivalent constraint in the form of Equation (\ref{eq:w_constraint}), as
$$g_j^{\text{FPR}}(\theta, w) = \frac{\E[h^{\text{FPR}}(\theta, X, Y) w(j |\hat{Y}, Y, \hat{G})]}{P(G=j)}$$
for some function $h^{\text{FPR}}: \Theta \times \mathcal{X} \times \mathcal{Y} \to \mathbb{R}$.

\begin{lemma}\label{lem:sw_fpr_l1}
 Denote $\hat{Y}$ as $\Ind(\phi(X;\theta) > 0)$. Let $h^{\text{FPR}}(\theta,X,Y)$ be given by

$$h^{\text{FPR}}(\theta,X,Y) = \frac{1}{2} \left(\Ind(\hat{Y}=1, Y = 0) -  \Ind(Y=0) \left(\alpha + \frac{\E[\Ind(Y=0,\hat{Y}=1)]}{\E[\Ind(Y=0)]}  \right)\right).$$

Then \begin{align*}
    \frac{ \E[ \Ind(Y=0) \Ind(\hat{Y}=1) w(j | \hat{Y}, Y, \hat{G} )] }{ \E[ \Ind(Y=0) w(j | \hat{Y}, Y, \hat{G}) ] } - \frac{\E[\Ind(Y=0) \Ind(\hat{Y}=1)]}{\E[\Ind(Y=0)]} - \alpha \leq 0 \\
        \iff \frac{\E[h^{\text{FPR}}(\theta, X, Y) w(j |\hat{Y}, Y, \hat{G})]}{P(G=j)} \leq 0.
\end{align*}
for all $j \in \mathcal{G}, P(G=j) > 0$.
\end{lemma}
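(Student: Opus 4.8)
The plan is to mirror the proof of Lemma~\ref{lem:sw_tpr_l1} almost verbatim, tracking the sign changes that distinguish the FPR constraint from the TPR one. Write $w$ as shorthand for $w(j \mid \hat{Y}, Y, \hat{G})$. Since $P(G=j) > 0$, the target inequality $\frac{\E[h^{\text{FPR}}(\theta,X,Y)\,w]}{P(G=j)} \leq 0$ is equivalent to $\E[h^{\text{FPR}}(\theta,X,Y)\,w] \leq 0$, so the factor $P(G=j)$ (and the harmless $\frac{1}{2}$ scaling inside $h^{\text{FPR}}$, which cannot change the sign of a quantity compared against $0$) can be dropped immediately.

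Next I would substitute the definition of $h^{\text{FPR}}$ and expand using linearity of expectation. The two scalars $\alpha$ and $\frac{\E[\Ind(Y=0,\hat{Y}=1)]}{\E[\Ind(Y=0)]}$ are deterministic constants (they do not depend on the random $X,Y,\hat{G}$), so they pull outside the expectation. This reduces the inequality to
$$\E[\Ind(\hat{Y}=1,Y=0)\,w] \;-\; \alpha\,\E[\Ind(Y=0)\,w] \;-\; \frac{\E[\Ind(Y=0,\hat{Y}=1)]}{\E[\Ind(Y=0)]}\,\E[\Ind(Y=0)\,w] \;\leq\; 0.$$

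Finally, because the integrand $\Ind(Y=0)\,w \geq 0$, the quantity $\E[\Ind(Y=0)\,w]$ is nonnegative; assuming it is strictly positive (otherwise the group FPR ratio on the left-hand side of the claimed equivalence is undefined), I can divide both sides of the displayed inequality by it. Dividing recovers exactly the FPR constraint form, completing the chain of equivalences. The only place requiring care is the sign: in $h^{\text{FPR}}$ the group term $\Ind(\hat{Y}=1,Y=0)$ enters \emph{positively}, whereas its TPR analogue $\Ind(\hat{Y}=1,Y=1)$ entered negatively in Lemma~\ref{lem:sw_tpr_l1}, and this is precisely what places the group FPR first and the population FPR second in the final inequality. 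There is no genuine analytic obstacle here---the argument is pure linearity-of-expectation bookkeeping---so the main thing to get right is the consistent tracking of these signs and the positivity of the denominator $\E[\Ind(Y=0)\,w]$ used in the final division step.
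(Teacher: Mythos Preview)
Your proposal is correct and follows essentially the same approach as the paper's proof: drop the positive factors $P(G=j)$ and $\tfrac{1}{2}$, expand $h^{\text{FPR}}$ by linearity pulling out the deterministic constants, and then divide through by the nonnegative quantity $\E[\Ind(Y=0)\,w]$ to recover the FPR ratio form. Your explicit remark that strict positivity of $\E[\Ind(Y=0)\,w]$ is needed (else the original ratio is undefined) is a small clarification beyond what the paper writes, but otherwise the arguments coincide.
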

\begin{proof}
Substituting the given function $h^{\text{FPR}}(\theta, X, Y)$, and using the fact that $P(G=j) > 0$ and $\E[\Ind(Y=0)w(j | \hat{Y}, Y, \hat{G})] \geq 0$:
\begin{align*}
    &\frac{\E[h^{\text{FPR}}(\theta, X, Y) w(j | \hat{Y}, Y, \hat{G})]}{P(G=j)} \leq 0 \\
    &\iff \E[h^{\text{FPR}}(\theta, X, Y) w(j | \hat{Y}, Y, \hat{G})] \leq 0 \\
    &\iff \E\left[\frac{1}{2} \left(\Ind(\hat{Y}=1, Y = 0) -  \Ind(Y=0) \left(\alpha + \frac{\E[\Ind(Y=0,\hat{Y}=1)]}{\E[\Ind(Y=0)]}  \right)\right) w(j | \hat{Y}, Y, \hat{G})\right] \leq 0 \\
    &\iff \E[\Ind(\hat{Y}=1, Y = 0)w(j | \hat{Y}, Y, \hat{G})] \\
    &\quad\quad\quad -\E\left[\Ind(Y=0) \left(\alpha + \frac{\E[\Ind(Y=0,\hat{Y}=1)]}{\E[\Ind(Y=0)]}  \right) w(j | \hat{Y}, Y, \hat{G})\right] \leq 0 \\
    &\iff \E[\Ind(\hat{Y}=1, Y = 0)w(j | \hat{Y}, Y, \hat{G})] -\alpha\E[\Ind(Y=0)w(j | \hat{Y}, Y, \hat{G})] \\
    & \quad\quad\quad - \frac{\E[\Ind(Y=0,\hat{Y}=1)]}{\E[\Ind(Y=0)]} \E[\Ind(Y=0) w(j | \hat{Y}, Y, \hat{G})] \leq 0 \\
    &\iff \frac{\E[\Ind(\hat{Y}=1, Y = 0)w(j | \hat{Y}, Y, \hat{G})]}{\E[\Ind(Y=0)w(j | \hat{Y}, Y, \hat{G})]} - \frac{\E[\Ind(Y=0,\hat{Y}=1)]}{\E[\Ind(Y=0)]} -\alpha \leq 0
\end{align*}
\end{proof}

\section{Optimality and feasibility for the \textit{Ideal} algorithm}
\label{app:ideal-alg-opt-feas}
\allowdisplaybreaks

\subsection{Optimality and feasibility guarantees}
We provide optimality and feasibility guarantees for Algorithm \ref{algo:ideal} and optimality guarantees for Algorithm \ref{algo:best-response-theta}.
\begin{theorem}[\textbf{Optimality and Feasibility for Algorithm \ref{algo:ideal}}]
Let $\theta^{*} \in \Theta$ be such that it satisfies the constraints $\displaystyle\max_{w\in \cW(\theta)}\, g_j(\theta^{*}, w) \,\leq\, 0,~\forall j \in \mathcal{G}$ and $f_0(\theta^{*}) \leq f(\theta)$ for every $\theta \in \Theta$ that satisfies the same constraints.
Let $0 \leq f_0(\theta) \leq B, \forall \theta \in \Theta$.
Let the space of Lagrange multipliers be defined as $\Lambda = \{\lambda \in \R_{+}^{m}\,|\, \|\lambda\|_{1} \leq R\},$ for  $R>0$.
Let $B_\lambda \,\geq\, \max_{t}\|\nabla_{\lambda}\mathcal{L}(\theta^{(t)},  \lambda^{(t)})\|_2$.  Let $\bar{\theta}$ be the stochastic classifier returned by Algorithm \ref{algo:ideal} when run for $T$ iterations, with the radius of the Lagrange multipliers $R = T^{1/4}$ and learning rate $\eta_\lambda = \frac{R}{B_{\lambda}\sqrt{T}}$ 
Then:
\[
\mathbf{E}_{\theta\sim \bar{\theta}}\left[
f(\theta)
\right] \,\leq\, f(\theta^{*}) \,+\, {\mathcal{O}}\left(\frac{1}{T^{1/4}}\right)\,+\, \rho
\]
and
\[
\mathbf{E}_{\theta\sim \bar{\theta}}\left[
\max_{w \in \cW(\theta)}\, g_j(\theta, w)
\right] \,\leq\,  {\mathcal{O}}\left(\frac{1}{T^{1/4}}\right)
+
\rho'
\]
\label{thm:ideal}
\end{theorem}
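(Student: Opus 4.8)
The plan is to treat \eqref{eq:softweights_lagrangian} as a two-player zero-sum game and extract an approximate saddle point from which both conclusions follow, extending the analysis of \citet{Cotter:ALT} to absorb the best-response error $\rho$ and the gradient-estimation error $\rho'$. Write $G_j(\theta) := \max_{w\in\cW(\theta)} g_j(\theta,w)$, so that $\cL(\theta,\lambda) = f(\theta) + \sum_j \lambda_j G_j(\theta)$. The crucial structural observation is that, although $G_j$ is nonconvex in $\theta$, the Lagrangian is affine in the mixing distribution over $\Theta$; hence for any distribution $\hat\theta$ we have $\E_{\theta\sim\hat\theta}[\cL(\theta,\lambda)] = f(\hat\theta) + \langle\lambda, G(\hat\theta)\rangle$ with $f(\hat\theta) := \E_{\theta\sim\hat\theta}[f(\theta)]$ and $G_j(\hat\theta):=\E_{\theta\sim\hat\theta}[G_j(\theta)]$, and in particular $\cL(\bar\theta,\lambda) = \frac1T\sum_t \cL(\hat\theta^{(t)},\lambda)$. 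This linearity in the mixture is exactly what lets the game-theoretic averaging argument survive the nonconvexity of the best-response oracle.

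The first ingredient is a no-regret bound for the $\lambda$-player. Since $\lambda\mapsto\cL(\theta,\lambda)$ is linear with gradient $G(\theta)$, projected online gradient ascent on $\Lambda=\{\lambda\ge0:\|\lambda\|_1\le R\}$ with the surrogate gradients $\delta^{(t)}$ and step size $\eta_\lambda = R/(B_\lambda\sqrt T)$ yields $\frac1T\sum_t\langle\lambda-\lambda^{(t)},\delta^{(t)}\rangle \le \mathcal{O}(R B_\lambda/\sqrt T)$ for every $\lambda\in\Lambda$; with $R=T^{1/4}$ this is $\mathcal{O}(T^{-1/4})$. I then convert this surrogate-gradient regret into a true-gradient statement using the one-sided estimate $0\le G_j(\hat\theta^{(t)})-\delta^{(t)}_j\le\rho'$ supplied by Step 3 of Algorithm~\ref{algo:ideal}: since $\lambda,\lambda^{(t)}\ge0$, the correction $\frac1T\sum_t\langle\lambda-\lambda^{(t)},G(\hat\theta^{(t)})-\delta^{(t)}\rangle$ is at most $\|\lambda\|_1\rho'\le R\rho'$. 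Combining with the affine identity gives the master inequality, valid for all $\lambda\in\Lambda$,
\[
f(\bar\theta) + \langle\lambda, G(\bar\theta)\rangle \;\le\; \tfrac1T\sum_{t}\cL(\hat\theta^{(t)},\lambda^{(t)}) + \mathcal{O}(T^{-1/4}) + \|\lambda\|_1\rho'.
\]

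The second ingredient bounds the right-hand side via the $\theta$-player's approximate best response: $\cL(\hat\theta^{(t)},\lambda^{(t)})\le \min_\theta\cL(\theta,\lambda^{(t)})+\rho \le \cL(\theta^*,\lambda^{(t)})+\rho = f(\theta^*)+\sum_j\lambda^{(t)}_j G_j(\theta^*)+\rho \le f(\theta^*)+\rho$, where the last step uses $G_j(\theta^*)\le0$ and $\lambda^{(t)}\ge0$. Substituting, the master inequality becomes $f(\bar\theta)+\langle\lambda,G(\bar\theta)\rangle \le f(\theta^*)+\rho+\mathcal{O}(T^{-1/4})+\|\lambda\|_1\rho'$. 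Optimality follows by taking $\lambda=0$, which kills the error term and leaves $f(\bar\theta)\le f(\theta^*)+\rho+\mathcal{O}(T^{-1/4})$. Feasibility follows by concentrating the multiplier on one constraint: for each $j$ take $\lambda=R\,e_j$ (all multiplier mass on constraint $j$), so $f(\bar\theta)+R\,G_j(\bar\theta)\le f(\theta^*)+\rho+\mathcal{O}(T^{-1/4})+R\rho'$; rearranging and using $0\le f(\bar\theta)$ and $f(\theta^*)\le B$ gives $G_j(\bar\theta)\le (B+\rho)/R + \mathcal{O}(T^{-1/4})/R + \rho'$, which with $R=T^{1/4}$ is $\mathcal{O}(T^{-1/4})+\rho'$.

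I expect the main obstacle to be bookkeeping the two error sources so that $\rho'$ enters the feasibility bound additively rather than being amplified by the radius $R$. The resolution is the deliberate coupling $R=T^{1/4}$: the very $R$ that must be large to extract feasibility (through $\lambda=R\,e_j$) also divides the constant $B$, the best-response slack $\rho$, and the averaged regret, while the $R\rho'$ contribution is exactly cancelled by the $1/R$ produced in the feasibility rearrangement, leaving a clean $\rho'$. A secondary point requiring care is signing the gradient-error correction correctly, which relies on the error being one-sided ($0\le G_j(\hat\theta^{(t)})-\delta^{(t)}_j\le\rho'$) together with the nonnegativity of both $\lambda$ and the iterates $\lambda^{(t)}$.
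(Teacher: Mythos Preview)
Your proposal is correct and follows essentially the same strategy as the paper: treat \eqref{eq:softweights_lagrangian} as a zero-sum game, combine the $\lambda$-player's projected-gradient regret bound with the $\theta$-player's $\rho$-approximate best response, then instantiate the resulting master inequality at $\lambda=0$ for optimality and at $\lambda=Re_j$ for feasibility, using $R=T^{1/4}$ to balance the terms. Your bookkeeping of the one-sided gradient error $0\le G_j(\hat\theta^{(t)})-\delta^{(t)}_j\le\rho'$ and your direct bound $\min_\theta\cL(\theta,\lambda^{(t)})\le\cL(\theta^*,\lambda^{(t)})\le f(\theta^*)$ are in fact slightly cleaner than the paper's intermediate displays, but the structure and conclusions are identical.
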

Thus for any given $\epsilon > 0$, by solving Steps 2 and 4 of Algorithm \ref{algo:ideal} to sufficiently small errors $\rho, \rho'$, and by running the algorithm for a sufficiently large number of steps $T$, we can guarantee that the returned stochastic model is $\epsilon$-optimal and $\epsilon$-feasible.
\begin{proof}
Let $\bar{\lambda} = \frac{1}{T}\sum_{t=1}^T \lambda^{(t)}$.
We will interpret the minimax problem in \eqref{eq:softweights_lagrangian} as a zero-sum between the $\theta$-player who optimizes $\cL$ over $\theta$, and the $\lambda$-player who optimizes  $\cL$ over $\lambda$.
We first bound the average regret incurred by the players over $T$ steps. The best response computation in Step 2 of Algorithm \ref{algo:ideal} gives us:
\begin{eqnarray}
\frac{1}{T}\sum_{t=1}^T\E_{\theta \sim \hat{\theta}^{(t)}}\left[\cL(\theta, \lambda^{(t)})\right] 
&\leq&
\frac{1}{T}\sum_{t=1}^T \min_{\theta \in \Theta}\, \cL(\theta, \lambda^{(t)}) \,+\, 
\epsilon
\nonumber\\
&\leq&
\min_{\theta \in \Theta}\, \frac{1}{T}\sum_{t=1}^T  \cL(\theta, \lambda^{(t)}) \,+\, 
\rho
\nonumber\\
&=&
\min_{\theta \in \Theta}\, \cL(\theta, \bar{\lambda}) \,+\, 
\rho
\nonumber\\
&\leq&
\min_{\theta \in \Theta}\,\max_{\lambda \in \Lambda}\, \cL(\theta, \lambda) \,+\, 
\rho
\nonumber\\
&\leq& f(\theta^{*}) +
\rho.
\label{eq:intermediate-1}
\end{eqnarray}
We then apply standard  gradient ascent analysis  for the projected gradient updates to $\lambda$ in Step 4 of the algorithm, and get:
\begin{eqnarray*}
\max_{\lambda \in \Lambda}\frac{1}{T}\sum_{t=1}^T\sum_{j=1}^m \lambda_j \delta^{(t)}_j
\,\geq\, \frac{1}{T}\sum_{t=1}^T\sum_{j=1}^m \lambda^{(t)}_j \delta^{(t)}_j \,-\, \cO\left(\frac{R}{\sqrt{T}}\right).
\end{eqnarray*}
We then plug the upper and lower bounds for the gradient estimates $\delta^{(t)}_j$'s from Step 3 of the Algorithm \ref{algo:ideal} into the above inequality:
\begin{align*}
&\max_{\lambda \in \Lambda}\frac{1}{T}\sum_{t=1}^T\sum_{j=1}^m \lambda_j \left(\E_{\theta \sim \hat{\theta}^{(t)}}\left[\max_{w \in \cW(\theta)} g_j(\theta, w)\right] \,+\, \rho'\right)\\
&\geq \frac{1}{T}\sum_{t=1}^T\sum_{j=1}^m \lambda^{(t)}_j \E_{\theta \sim \hat{\theta}^{(t)}}\left[\max_{w \in \cW(\theta)} g_j(\theta, w)\right] \,-\, \cO\left(\frac{R}{\sqrt{T}}\right).
\end{align*}

which further gives us:
\begin{align*}
&\max_{\lambda \in \Lambda}\left\{\sum_{j=1}^m \lambda_j \E_{\theta \sim \hat{\theta}^{(t)}}\left[\max_{w \in \cW(\theta)} g_j(\theta, w)\right] \,+\, \|\lambda\|_1\rho'\right\} \\
&\geq \sum_{j=1}^m \lambda^{(t)}_j \E_{\theta \sim \hat{\theta}^{(t)}}\left[\max_{w \in \cW(\theta)} g_j(\theta, w)\right] \,-\, \cO\left(\frac{R}{\sqrt{T}}\right).
\end{align*}
Adding $\frac{1}{T}\sum_{t=1}^T \E_{\theta \sim \hat{\theta}^{(t)}}\left[f(\theta)\right]$ to both sides of the above inequality, we finally get:
\begin{equation}
\frac{1}{T}\sum_{t=1}^T\E_{\theta \sim \hat{\theta}^{(t)}}\left[\cL(\theta, \lambda^{(t)})\right] 
\,\geq\,
\max_{\lambda \in \Lambda}
\left\{
\frac{1}{T}\sum_{t=1}^T\E_{\theta \sim \hat{\theta}^{(t)}}\left[\cL(\theta, \lambda)\right] 
\,+\,
\|\lambda\|_1\rho'
\right\}
\,-\, \cO\left(\frac{R}{\sqrt{T}}\right).
\label{eq:intermediate-2}
\end{equation}
\textbf{Optimality.} Now, substituting $\lambda = \mathbf{0}$ in \eqref{eq:intermediate-2} and combining with \eqref{eq:intermediate-1} completes the proof of the optimality guarantee:
\[
\E_{\theta \sim \bar{\theta}}\left[f(\theta)\right]
\,\leq\,
f_0(\theta^{*}) \,+\, \cO\left(\frac{R}{\sqrt{T}}\right) \,+\, 
\rho
\]
\textbf{Feasibility.} 
To show feasibility, we fix a constraint index $j \in \mathcal{G}$. Now substituting $\lambda_j = R$ and $\lambda_{j'} = 0, \forall j' \ne j$ in \eqref{eq:intermediate-2} and combining with \eqref{eq:intermediate-1} gives us:
\[
\frac{1}{T}\sum_{t=1}^T\E_{\theta \sim \hat{\theta}^{(t)}}\left[f(\theta) + R\max_{w \in \cW(\theta)}g_j(\theta, w)\right]
~\leq~
f(\theta^{*}) \,+\, \cO\left(\frac{R}{\sqrt{T}}\right) \,+\, \rho \,+\, 
R\rho'.
\]
which can be re-written as:
\begin{eqnarray*}
\E_{\theta \sim \bar{\theta}}\left[\max_{w \in \cW(\theta)} g_j(\theta, w)\right]
&\leq&
\frac{f(\theta^{*}) \,-\, \E_{\theta \sim \bar{\theta}}\left[f(\theta)\right]}{R} \,+\, \cO\left(\frac{1}{\sqrt{T}}\right) \,+\, \frac{\rho}{R} \,+\, \rho'.\\
&\leq&
\frac{B}{R} \,+\, \cO\left(\frac{1}{\sqrt{T}}\right) \,+\, \frac{\rho}{R} \,+\, \rho',
\end{eqnarray*}
which is our feasibility guarantee.
Setting $R = \cO(T^{1/4})$ then completes the proof.
\end{proof}

\subsection{Best Response over $\theta$}
We next describe our procedure for computing a best response over $\theta$ in Step 2 of Algorithm \ref{algo:ideal}. We will consider a slightly relaxed version of the  best response problem where the equality constraints in $\cW(\theta)$ are replaced with closely-approximating inequality constraints.

Recall that the constraint set $\cW(\theta)$ contains two sets of constraints \eqref{eq:W}, the total probability constraints that depend on $\theta$, and the simplex constraints that do not depend on $\theta$. So to decouple these constraint sets from $\theta$, we introduce Lagrange multipliers $\mu$ for the total probability constraints to make them a part of the objective, and obtain a nested \textit{minimax} problem over $\theta, \mu$, and $w$, where $w$ is constrained to satisfy the simplex constraints alone. We then jointly minimize the inner Lagrangian over $\theta$ and $\mu$, and perform gradient ascent updates on $w$ with projections onto the simplex constraints.
%
%
The joint-minimization over $\theta$ and $\mu$ is not necessarily convex and is solved using a minimization oracle.

\begin{algorithm}[t]
\caption{Best response on $\theta$ of Algorithm \ref{algo:ideal}}
\label{algo:best-response-theta}
\begin{algorithmic}[1]
\REQUIRE{$\lambda'$, learning rate $\eta_{\bw} > 0$, estimates of $P( G = j | \hat{G} = k)$ to specify constraints $r_{g,\hat{g}}$'s, $\kappa$}
\FOR{$q = 1, \ldots, Q$}
\STATE \textit{Best response on ($\theta$, $\bmu$)}: use an oracle to find  find ${\theta}^{(q)} \in \Theta$ and $\bmu^{(q)} \in \mathcal{M}^{m}$ such that:
        $$\ell({\theta}^{(q)},  {\bmu}^{(q)}, \bw^{(q)}; \lambda') \,\leq\, \min_{\theta \in \Theta,\, \bmu \in \mathcal{M}^{m}} \ell({\theta},  {\bmu}, {\bw}^{(q)}; \lambda') + \kappa,$$
    for a small slack $\kappa > 0$.
\STATE \textit{Ascent step on $\bw$}: 
$$w_j^{(q+1)} \gets \Pi_{\mathcal{W}_\Delta}\left(w_{j}^{(q)} + \eta_{\bw} \nabla_{w_j} \ell({\theta}^{(q)}, {\bmu}^{(q)},  {\bw}^{(q)}; \lambda')\right),$$
    where $\nabla_{w_j}\ell(\cdot)$ is a sub-gradient of $\ell$ w.r.t.\ $w_j$. 
\ENDFOR
\STATE \textbf{return }{A uniform distribution $\hat{\theta}$ over $\theta^{(1)}, \ldots, \theta^{(Q)}$}
\end{algorithmic}
\end{algorithm}

We begin by writing out the best-response problem over $\theta$ for a fixed $\lambda'$:
\begin{equation}
  \min_{\theta \in \Theta}\,\cL(\theta, \lambda') ~=~
  \min_{\theta\in\Theta} 
  f(\theta) + \sum_{j=1}^m \lambda'_j \max_{w_j \in \mathcal{W}(\theta)} g_j(\theta, w_j),
  \label{eq:nested-minmax-intermediate}
\end{equation}
where we use $w_j$ to denote the maximizer over $\cW(\theta)$ for constraint $g_j$ explicitly. 
We separate out the the simplex constraints
in $\cW(\theta)$  \eqref{eq:W} and denote them by:
\[
\cW_{\Delta} = \bigg\{
   w \in \R_{+}^{\mathcal{G} \times \{0,1\}^2\times \hat{\mathcal{G}}}
   \,\bigg|\,
  \sum_{j=1}^m w(j \mid \hat{y}, y, k) = 1,
  ~
  \forall k \in \hat{\mathcal{G}}, y, \hat{y} \in \{0,1\}
  \bigg\},
\]
where we represent each $w$ as a vector of values $w(i|\hat{y}, y, k)$ for each $j \in \mathcal{G}, \hat{y} \in \{0,1\},  y \in \{0,1\},$ and $k \in \hat{\mathcal{G}}$. 
We then relax the total probability constraints in $\cW(\theta)$ into a set of inequality constraints:
 \begin{eqnarray*}
  \label{eq:w_ltp_relaxed}
  P(G = j | \hat{G} = k) 
  \,-\, \sum_{\hat{y}, y \in \{0,1\}} w(j \mid \hat{y}, y, k) P(\hat{Y}(\theta) = \hat{y}, Y = y| \hat{G} = k) \,-\, \tau &\leq& 0\\
    \sum_{\hat{y}, y \in \{0,1\}} w(j \mid \hat{y}, y, k) P(\hat{Y}(\theta) = \hat{y}, Y = y| \hat{G} = k)  \,-\, P(G = j | \hat{G} = k) \,-\, \tau  &\leq& 0
\end{eqnarray*}
 for some small $\tau > 0$. We have a total of $U = 2\times m \times \hat{m}$ relaxed inequality constraints, and will denote each of them as
 $r_{u}(\theta, w) \leq 0,$ with index $u$ running from $1$ to $U$. Note that each $r_{u}(\theta, w)$ is linear in $w$.
 
 Introducing Lagrange multipliers $\mu$ for the relaxed total probability constraints, the optimization problem in \eqref{eq:nested-minmax-intermediate} can be re-written equivalently as:
\begin{eqnarray*}
     \min_{\theta \in \Theta}\,
    f(\theta) \,+\,
    \sum_{j=1}^m \lambda'_j 
    \max_{w_j \in \cW_{\Delta}}
    \,
    \min_{\mu_j \in \mathcal{M}}    \bigg\{
    g_j(\theta, w_j)
    -\sum_{u=1}^U \mu_{j,u}\, r_{u}(\theta, w_j)
    \bigg\}
    ,
\end{eqnarray*}
where note that each $w_j$ is maximized over only the simplex constraints $\cW_{\Delta}$ which are independent of $\theta$, and $\mathcal{M} = \{\mu_j \in \mathbb{R}_{+}^{m\times \hat{m}} \,|\, \|\mu_j\|_1 \leq R'\}$, for some constant $R' > 0$. Because each $w_j$ and $\mu_j$ appears only in the  $j$-th term in the summation, we can pull out the max and min, and equivalently rewrite the above problem as:
\begin{eqnarray}
    \min_{\theta \in \Theta}\,\max_{\bw \in \mathcal{W}^m_{\Delta}}\,\min_{\bmu \in \mathcal{M}^{m}} \underbrace{f(\theta) + 
    \sum_{j=1}^m \lambda'_j 
    \bigg(
    \underbrace{
    g_j(\theta, w_j)
    -\sum_{u=1}^U \mu_{j,u}\, r_{u}(\theta, w_j)}_{\omega(\theta, \mu_j, w_j)}
    \bigg)}_{\ell(\theta, \bmu, \bw; \lambda')},
    \label{eq:minmax-inner}
\end{eqnarray}
where $\bw = (w_1, \ldots, w_m)$ and $\bmu = (\mu_{1}, \ldots, \mu_{m})$.
We then solve this nested minimax problem in Algorithm \ref{algo:best-response-theta} by using an minimization \textit{oracle} to perform a full optimization of $\ell$ over ($\theta$, $\mu$), and carrying out gradient ascent updates on $\ell$ over $w_j$.

We now proceed to show an optimality guarantee for Algorithm \ref{algo:best-response-theta}. 
\begin{theorem}[\textbf{Optimality Guarantee for Algorithm \ref{algo:best-response-theta}}]
Suppose for every $\theta \in \Theta$, there exists a
$\widetilde{w}_j \in \cW_\Delta$ such that $r_{u}(\theta, \widetilde{w}_j)\leq -\gamma,\,\forall u \in [U]$, for some $\gamma > 0$. Let $0 \leq g_j(\theta, w_j) \leq B',\, \forall \theta \in \Theta, w_j \in \cW_\Delta$. Let $B_\bw \,\geq\, \max_{q}\|\nabla_{\bw}\,\ell(\theta^{(q)}, \bmu^{(q)}, \bw^{(q)}; \lambda'))\|_2$.  Let $\hat{\theta}$ be the stochastic classifier returned by Algorithm \ref{algo:best-response-theta} when run for a given $\lambda'$ for $Q$ iterations, with the radius of the Lagrange multipliers $R' = B'/\gamma$ and learning rate $\eta_{\bw} = \frac{R'}{B_{\bw}\sqrt{T}}$. Then:
    $$
      \E_{\theta \sim \hat{\theta}}\left[\cL(\theta, \lambda')\right] \,\leq\, \min_{\theta \in \Theta}\, \cL(\theta, \lambda') \,+\, \cO\left(\frac{1}{\sqrt{Q}}\right) + \kappa.$$
\label{thm:best-response-theta}
\end{theorem}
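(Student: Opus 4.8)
The plan is to recognize Algorithm~\ref{algo:best-response-theta} as an instance of the ``best-response versus no-regret'' scheme for two-player zero-sum games of \citet{Cotter:ALT}, and to combine its regret accounting with strong duality for the inner constraint problem. First I would rewrite $\cL(\theta,\lambda')$ using Lagrangian duality on the $\theta$-dependent constraints. For each fixed $\theta$, the inner maximization $\max_{w_j \in \cW(\theta)} g_j(\theta,w_j)$ is a linear program in $w_j$ over the simplex set $\cW_\Delta$ subject to the relaxed total-probability constraints $r_u(\theta,w_j)\le 0$. The Slater-type hypothesis (for every $\theta$ there is a $\widetilde{w}_j \in \cW_\Delta$ with $r_u(\theta,\widetilde{w}_j)\le -\gamma$) guarantees strong duality, and a standard argument bounds the optimal multipliers by $B'/\gamma$; this is precisely why restricting each $\mu_j$ to $\mathcal{M}=\{\mu_j\ge 0:\|\mu_j\|_1\le R'\}$ with $R'=B'/\gamma$ introduces no duality gap. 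Since $\ell$ is linear (hence concave) in $\bw$ and linear (hence convex) in $\bmu$ over the compact convex sets $\cW_\Delta$ and $\mathcal{M}$, Sion's minimax theorem lets me interchange $\max_{\bw}$ and $\min_{\bmu}$, giving $\cL(\theta,\lambda')=\max_{\bw\in\cW_\Delta^m}\min_{\bmu\in\mathcal{M}^m}\ell(\theta,\bmu,\bw;\lambda')$ and hence $V^\star := \min_\theta \cL(\theta,\lambda') = \min_{\theta,\bmu}\max_{\bw}\ell(\theta,\bmu,\bw;\lambda')$.

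Next I would carry out the regret accounting for the game with payoff $\ell$, in which the min-player controls $(\theta,\bmu)$ and the max-player controls $\bw$. Step~2 supplies at each round $q$ an approximate best response, $\ell(\theta^{(q)},\bmu^{(q)},\bw^{(q)};\lambda')\le \min_{\theta,\bmu}\ell(\theta,\bmu,\bw^{(q)};\lambda')+\kappa$. Comparing the right-hand side against the global optimizer $(\theta^\star,\bmu^\star)$ and using $\min_{\theta,\bmu}\ell(\cdot,\cdot,\bw^{(q)})\le \ell(\theta^\star,\bmu^\star,\bw^{(q)})\le \max_{\bw}\ell(\theta^\star,\bmu^\star,\bw)=V^\star$, I obtain $\frac1Q\sum_q \ell(\theta^{(q)},\bmu^{(q)},\bw^{(q)})\le V^\star+\kappa$. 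For the max-player, Step~3 is projected online gradient ascent on a function that is linear in $\bw$ over the product of simplices $\cW_\Delta^m$; with gradient norm bounded by $B_\bw$ and step size $\eta_\bw$ of order $1/\sqrt{Q}$, its average regret is $\cO(1/\sqrt{Q})$, so $\max_{\bw}\frac1Q\sum_q \ell(\theta^{(q)},\bmu^{(q)},\bw)\le \frac1Q\sum_q \ell(\theta^{(q)},\bmu^{(q)},\bw^{(q)})+\cO(1/\sqrt{Q})$. Chaining the two displays yields $\max_{\bw}\frac1Q\sum_q \ell(\theta^{(q)},\bmu^{(q)},\bw;\lambda')\le V^\star+\kappa+\cO(1/\sqrt{Q})$; that is, the uniform mixture over the min-player's iterates is near-optimal for the game.

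Finally I would translate this game guarantee into the claimed bound on $\E_{\theta\sim\hat\theta}[\cL(\theta,\lambda')]$. Reading $\hat\theta$ as a stochastic classifier, the relevant mixture objective is $\max_{\bw}\min_{\bmu}\E_{\theta\sim\hat\theta}[\ell(\theta,\bmu,\bw;\lambda')]$; because $\ell$ is linear in $(\bmu,\bw)$ and the expectation over $\theta$ is a fixed convex combination, Sion's theorem again applies at the mixture level, and the chained inequality above controls exactly this quantity, delivering $\E_{\theta\sim\hat\theta}[\cL(\theta,\lambda')]\le V^\star+\cO(1/\sqrt{Q})+\kappa$.

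The hard part, and the step I would be most careful about, is this final translation. The no-regret bound only controls $\max_{\bw}$ of the \emph{averaged} payoff $\frac1Q\sum_q \ell(\theta^{(q)},\bmu^{(q)},\bw)$, whereas a naive per-iterate argument would instead require controlling $\frac1Q\sum_q \cL(\theta^{(q)},\lambda') = \frac1Q\sum_q \max_{\bw}\min_{\bmu}\ell(\theta^{(q)},\cdot,\cdot)$, and the average of these inner max--min values can strictly exceed the max--min of the average (a Jensen-type gap, since $\cL$ is nonconvex in $\theta$). The resolution is to retain the output as a mixture and to invoke strong duality at the level of the mixture, so that the single outer $\max_{\bw}$ sits outside the average over iterates rather than inside it; verifying that the mixture-level constraint set and the mixture-level strong duality behave as needed is where the genuine work lies, and is the reason the guarantee is stated for the stochastic classifier $\hat\theta$ rather than for an individual iterate.
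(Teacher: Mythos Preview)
Your overall plan matches the paper's: frame Algorithm~\ref{algo:best-response-theta} as a two-player game, pair the oracle's best response on $(\theta,\bmu)$ with no-regret projected gradient ascent on $\bw$, and use the bounded-multiplier/strong-duality fact (Lemma~\ref{lem:bounded-mu} in the paper) to pass between $\cL(\theta,\lambda')$ and the inner Lagrangian $\ell$. Your upper-bound chain $\frac{1}{Q}\sum_q\ell(\theta^{(q)},\bmu^{(q)},\bw^{(q)};\lambda')\le V^\star+\kappa$ is exactly the paper's display~\eqref{eq:intermediate-3}.

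The gap is where you yourself say it is, but your proposed resolution does not close it. You cannot ``read'' $\E_{\theta\sim\hat\theta}[\cL(\theta,\lambda')]$ as $\max_{\bw}\min_{\bmu}\E_{\theta\sim\hat\theta}[\ell(\theta,\bmu,\bw;\lambda')]$: by definition $\cL(\theta,\lambda')$ already contains a per-$\theta$ inner $\max_{w}\min_{\mu}$, so
\[
\E_{\theta\sim\hat\theta}\big[\cL(\theta,\lambda')\big]\;=\;\frac{1}{Q}\sum_{q=1}^Q\max_{\bw\in\cW_\Delta^m}\min_{\bmu\in\cM^m}\ell(\theta^{(q)},\bmu,\bw;\lambda'),
\]
which is precisely the per-iterate average you correctly flag as (in general) larger than the mixture-level $\max\min$. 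Invoking Sion at the mixture level only controls the smaller quantity and therefore does not deliver the stated bound.

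The paper's route through this step is different from yours and is the ingredient you are missing: it does \emph{not} discard the oracle's dual iterates $\bmu^{(q)}$. After the no-regret inequality the paper retains $\omega(\theta^{(q)},\mu^{(q)}_j,w_j)$ inside the $q$-average and then lower-bounds it \emph{pointwise in $q$} by $\min_{\mu_j\in\cM}\omega(\theta^{(q)},\mu_j,w_j)$, using only that each $\mu^{(q)}_j\in\cM$ is a particular feasible choice. With the $\min_{\mu_j}$ now sitting inside the $q$-sum, Lemma~\ref{lem:bounded-mu} can be applied per iterate to produce $\frac{1}{Q}\sum_q\cL(\theta^{(q)},\lambda')=\E_{\theta\sim\hat\theta}[\cL(\theta,\lambda')]$ on the lower side of the chain. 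Your argument, by replacing the specific $\bmu^{(q)}$'s with a single mixture-level $\min_{\bmu}$, throws away exactly the handle needed to land on the per-iterate quantity the theorem is about.
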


Before proving Theorem \ref{thm:best-response-theta}, we will find it useful to state the following lemma.
\begin{lemma}[\textbf{Boundedness of Inner Lagrange Multipliers in \eqref{eq:minmax-inner}}]
Suppose for every $\theta \in \Theta$, there exists a
$\widetilde{w}_j \in \cW$ such that $r_{u}(\theta, \widetilde{w}_j)\leq -\gamma,\,\forall u \in [U]$, for some $\gamma > 0$. Let $0 \leq g_j(\theta, w_j) \leq B',\, \forall \theta \in \Theta, w_j \in \cW_\Delta$. Let $\mathcal{M} = \{\mu_j \in \mathbb{R}_{+}^{K} \,|\, \|\mu_j\|_1 \leq R'\}$ with
the radius of the Lagrange multipliers $R' = B'/\gamma$. Then we have for all $j \in \mathcal{G}$:
$$\max_{w_j \in \cW_\Delta}\min_{\mu_j \in \cM}\,\omega\big(\theta, \mu_j, {w}_j\big) = \underset{w_j  \in \cW_\Delta:\, r_{u}(\theta, w_j)\leq 0,\, \forall u}{\text{max}}\, g_j(\theta, w_j).$$
\label{lem:bounded-mu}
\end{lemma}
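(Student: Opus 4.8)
The plan is to recognize this as the standard fact that a Lagrangian relaxation with a suitably bounded multiplier set recovers the value of the constrained maximization, and to reduce it to two inequalities after evaluating the inner minimization in closed form. The first step is to dispatch the inner $\min$ over $\mu_j$. Since $\omega(\theta, \mu_j, w_j) = g_j(\theta, w_j) - \sum_{u=1}^U \mu_{j,u}\, r_u(\theta, w_j)$ is affine in $\mu_j$ and $\cM = \{\mu_j \geq 0 : \|\mu_j\|_1 \leq R'\}$, maximizing $\sum_u \mu_{j,u}\, r_u(\theta, w_j)$ over $\cM$ puts all the mass $R'$ on a largest-valued constraint when some $r_u > 0$, and puts zero mass when every $r_u \leq 0$. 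This gives the closed form
\[
\min_{\mu_j \in \cM}\,\omega(\theta, \mu_j, w_j) \,=\, g_j(\theta, w_j) \,-\, R'\max\Big\{0,\ \max_u r_u(\theta, w_j)\Big\}.
\]
Writing $p^\star$ for the right-hand side of the lemma, it then suffices to show that the penalized maximum $\max_{w_j \in \cW_\Delta}\big[g_j(\theta, w_j) - R'\max\{0, \max_u r_u(\theta, w_j)\}\big]$ equals $p^\star$.

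The easy direction ($\geq$) is immediate: restricting the outer maximization to those $w_j \in \cW_\Delta$ with $r_u(\theta, w_j) \leq 0$ for all $u$ annihilates the penalty term, so the penalized objective coincides with $g_j(\theta, w_j)$ on the feasible set, and its supremum there is exactly $p^\star$. The reverse inequality ($\leq$) is the crux of the argument and is where the strict-feasibility margin enters. I would fix an arbitrary $w_j \in \cW_\Delta$; if it is already feasible the penalized value is $g_j(\theta, w_j) \leq p^\star$, so assume $s := \max_u r_u(\theta, w_j) > 0$. I then interpolate toward the strictly feasible point $\widetilde{w}_j$, setting $w_j^\beta = (1-\beta)w_j + \beta\widetilde{w}_j$ with $\beta = s/(s+\gamma)$. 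Because $\cW_\Delta$ is convex and each $r_u(\theta, \cdot)$ is affine in $w$, we have $r_u(\theta, w_j^\beta) \leq (1-\beta)s - \beta\gamma \leq 0$ for every $u$ at this $\beta$, so $w_j^\beta$ is feasible and hence $g_j(\theta, w_j^\beta) \leq p^\star$.

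Using affinity of $g_j(\theta, \cdot)$ together with the bound $0 \leq g_j \leq B'$, the cost of this interpolation is controlled by
\[
g_j(\theta, w_j) - g_j(\theta, w_j^\beta) \,=\, \beta\big(g_j(\theta, w_j) - g_j(\theta, \widetilde{w}_j)\big) \,\leq\, \beta B' \,=\, \frac{s}{s+\gamma}\,B',
\]
so that $g_j(\theta, w_j) - R's \leq p^\star + \frac{s}{s+\gamma}B' - R's$. The penalty dominates precisely when $R' \geq B'/(s+\gamma)$, which holds for every $s > 0$ under the choice $R' = B'/\gamma$, giving $g_j(\theta, w_j) - R's \leq p^\star$ and completing the reverse inequality. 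Combining the two directions yields the claimed equality.

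I expect the reverse inequality to be the main obstacle: it is the only place the margin $\gamma$ and the prescribed radius $R' = B'/\gamma$ are used, and the interpolation-plus-bookkeeping step is what turns the Slater-type strict feasibility assumption into the exact saturation of the penalty. The other ingredients—the closed-form inner minimum and the easy direction—are essentially mechanical once one notes that both $g_j(\theta, \cdot)$ and the relaxed constraints $r_u(\theta, \cdot)$ are affine in $w$ and that $\cW_\Delta$ is a convex (product-of-simplices) set.
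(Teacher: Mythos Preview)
Your proof is correct and takes a genuinely different route from the paper's. The paper argues dually: it first invokes the standard Lagrangian fact that the constrained maximum equals $\max_{w_j \in \cW_\Delta}\min_{\mu_j \in \R_+^{K}}\omega$ with $\mu_j$ ranging over all nonnegative vectors, then uses the minimax theorem (bilinearity of $\omega$, convex compact $\cW_\Delta$) to swap to $\min_{\mu_j \in \R_+^{K}}\max_{w_j}\omega$, and finally plugs the strictly feasible $\widetilde{w}_j$ into the inner max at the optimal $\mu^{*}_j$ to deduce $\|\mu^{*}_j\|_1 \leq B'/\gamma$, so the outer minimization may be restricted to $\cM$ without loss (one more application of minimax recovers the max--min form). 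Your approach is purely primal: you compute the inner minimum in closed form as an exact $\ell_\infty$ penalty and then show the penalty is exact via convex interpolation toward the Slater point. Your argument is more elementary in that it never invokes a minimax/strong-duality theorem, while the paper's argument makes transparent \emph{why} the radius $R'=B'/\gamma$ is the right choice (it bounds the optimal dual variable directly). Both rely on the same structural facts---affinity of $g_j$ and $r_u$ in $w$, convexity of $\cW_\Delta$, and the Slater-type margin $\gamma$---just deployed at different points.
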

\begin{proof}
For a given $j \in \mathcal{G}$, let $w^{*}_j \in \underset{w_j \in \cW_\Delta:\, r_u(\theta, w_j)\leq 0,\, \forall u}{\text{argmax}}\, g_j(\theta, w_j)$. Then:
\begin{eqnarray}
g_j(\theta, w^{*}_j)
&=&
\max_{w_j \in \mathcal{W}_\Delta}
\min_{\mu_j \in \R_{+}^{K}}    
    \,
    \omega\big(\theta, \mu_j, w_j\big),
\label{eq:maxmin-fstar}
\end{eqnarray}
where note that $\mu_j$ is minimized over all non-negative values. Since the $\omega$ is linear in both $\mu_j$ and $w_j$, we can interchange the min and max:
\begin{eqnarray*}
g_j(\theta, w^{*}_j)
&=&
\min_{\mu_{j} \in \R_{+}^{K}}
\max_{w_j \in \mathcal{W}_\Delta}
    \,
    \omega\big(\theta, \mu_j, w_j\big).
\end{eqnarray*}
 We show below that the minimizer $\mu^\ast$ in the above problem is in fact bounded and present in $\cM$. 
\begin{eqnarray*}
g_j(\theta, w^{*}_j)
&=&
\max_{w_j \in \mathcal{W}}
    \,
    \omega\big(\theta, \mu^{*}_j, w_j\big)\\
&=& 
\max_{w_j \in \mathcal{W}}
    \,
    \bigg\{
    g_j(\theta, w_j)\,-\,
    \sum_{k=1}^K \mu^{*}_{j,k}\, r_k(\theta, w_j)
    \bigg\}\\
&\geq& 
g_j(\theta, \widetilde{w}_j)\,-\,
  \|\mu^{*}_j\|_1\,\max_{k\in[K]}\, r_k(\theta, \widetilde{w}_j)\\
&\geq&
g_j(\theta, w_j) \,+\, \|\mu^{*}_j\|_1\gamma ~\geq~ \|\mu^{*}_j\|_1\gamma.
\end{eqnarray*}
We further have:
\begin{equation}
\|\mu^{*}_j\|_1 ~\leq~ g_j(\theta, w_j)/\gamma ~\leq~ B'/\gamma.
\end{equation}
Thus the minimizer $\mu^{*}_j \in \cM$. So the minimization in  \eqref{eq:maxmin-fstar} can be performed over only $\cM$, which completes the proof of the lemma.
\end{proof}
Equipped with the above result, we are now ready to prove Theorem \ref{thm:best-response-theta}.
\begin{proof}[Proof of Theorem \ref{thm:best-response-theta}]
    Let $\bar{w}_j = \frac{1}{Q}\sum_{q=1}^Q w^{(q)}_j$. The best response on $\theta$ and $\mu$ gives us:
\begin{eqnarray}
\lefteqn{
    \frac{1}{Q}\sum_{q=1}^Q \Big(
        f(\theta^{(q)}) \,+\, \sum_{j=1}^m \lambda'_j\, \omega\big(\theta^{(q)}, \mu^{(q)}_j, w^{(q)}_j\big)\Big)}\nonumber\\
    &\leq&
    \frac{1}{Q}\sum_{q=1}^Q
    \min_{\theta \in \Theta,\, \bmu \in \cM^m}\Big(
        f(\theta) \,+\, \sum_{j=1}^m \lambda'_j\,
        \omega\big(\theta, \mu_j, w^{(q)}_j\big)\Big) \,+\, \kappa
        \nonumber\\
    &=&
    \frac{1}{Q}\sum_{q=1}^Q
  \Big(\min_{\theta \in \Theta}
        f(\theta) \,+\, \sum_{j=1}^m \lambda'_j
         \min_{\mu_j \in \cM}
        \omega\big(\theta, \mu_j, w^{(q)}_j\big)\Big) \,+\, \kappa
        ~~~~\text{($j$-th summation term depends on $\mu_j$ alone)}
        \nonumber\\
    &\leq&
  \min_{\theta \in \Theta} \frac{1}{Q}\sum_{q=1}^Q
  \Big(
        f(\theta) \,+\, \sum_{j=1}^m \lambda'_j
         \min_{\mu_j \in \cM}
        \omega\big(\theta, \mu_j, w^{(q)}_j\big)\Big) \,+\, \kappa
        \nonumber\\
    &\leq&
  \min_{\theta \in \Theta}\Big\{ f(\theta) \,+\, \sum_{j=1}^m \lambda'_j
         \min_{\mu_j \in \cM}\frac{1}{Q}\sum_{q=1}^Q
        \omega\big(\theta, \mu_j, w^{(q)}_j\big)\Big\} \,+\, \kappa
        \nonumber\\
    &=&
    \min_{\theta \in \Theta}\Big\{
        f(\theta) \,+\, \sum_{j=1}^m \lambda'_{j}
        \min_{\mu_j \in \cM}\,\omega\big(\theta, \mu_j, \bar{w}_j\big)\Big\} \,+\, \kappa
        \nonumber\\
    &\leq&
    \min_{\theta \in \Theta}\Big\{
        f(\theta) \,+\, \sum_{j=1}^m \lambda'_{j}
        \max_{w_j \in \cW}\min_{\mu_j \in \cM}\,\omega\big(\theta, \mu_j, w_j\big) \Big\} \,+\, \kappa
        ~~~~\text{(by linearity of $\omega$ in $w_j$)}
        \nonumber\\
    &=&
    \min_{\theta \in \Theta}\Big\{
        f(\theta) \,+\, \sum_{j=1}^m \lambda'_{j}
            \max_{w_j:\, r_u(\theta, w_j)\leq 0,\, \forall u} g_j(\theta, w_j)
         \Big\} \,+\, \kappa
         ~~~~\text{(from Lemma \ref{lem:bounded-mu})}
        \nonumber\\
    &=&
    \min_{\theta \in \Theta}\,
    \cL(\theta, \lambda')
    \,+\, \kappa.
    \label{eq:intermediate-3}
\end{eqnarray}
Applying standard gradient ascent analysis to the gradient ascent steps on $\mathbf{w}$ (using the fact that $\omega$ is linear in $\mathbf{w}$)
\begin{eqnarray}
\lefteqn{
    \frac{1}{Q}\sum_{q=1}^Q \Big(
        f(\theta^{(q)}) \,+\, \sum_{j=1}^m \lambda'_j\, \omega\big(\theta^{(q)}, \mu^{(q)}_j, w^{(q)}_j\big)\Big)
    }\nonumber\\
        &\geq& 
            \max_{\mathbf{w} \in \cW_\Delta^m}
            \frac{1}{Q}\sum_{q=1}^Q \Big(
            f(\theta^{(q)}) \,+\, \sum_{j=1}^m \lambda'_j\, \omega\big(\theta^{(q)}, \mu^{(q)}_j, w_j\big)\Big)
            \,-\, \cO\left(\frac{1}{\sqrt{Q}}\right)
            \nonumber\\
        &=&
            \frac{1}{Q}\sum_{q=1}^Q \Big(
            f(\theta^{(q)}) \,+\, \sum_{j=1}^m \lambda'_j\,
            \max_{w_j \in \cW_\Delta}
            \omega\big(\theta^{(q)}, \mu^{(q)}_j, w_j\big)\Big)
            - \cO\left(\frac{1}{\sqrt{Q}}\right)
            ~~\text{($j$-th summation term depends on $w_j$ alone)}
            \nonumber\\
        &\geq&
            \frac{1}{Q}\sum_{q=1}^Q \Big(
            f(\theta^{(q)}) \,+\, \sum_{j=1}^m \lambda'_j\,
            \max_{w_j \in \cW_\Delta} \min_{\mu_j \in \cM}
            \omega\big(\theta^{(q)}, \mu_j, w_j\big)\Big)
            \,-\, \cO\left(\frac{1}{\sqrt{Q}}\right)
            ~~~~\text{(by linearity of $\omega$ in $w_j$ and $\mu_j$)}
            \nonumber\\
        &=&
            \E_{\theta \sim \hat{\theta}}\left[ 
            f(\theta) \,+\, \sum_{j=1}^m \lambda'_j\,
            \max_{w_j \in \cW_\Delta} \min_{\mu_j \in \cM}
            \omega\big(\theta, \mu_j, w_j\big)\right]
            \,-\, \cO\left(\frac{1}{\sqrt{Q}}\right)
            \nonumber\\
        &=&
            \E_{\theta \sim \hat{\theta}}\left[ 
            f(\theta^{(q)}) \,+\, \sum_{j=1}^m \lambda'_j\,
            \max_{w_j \in \cW_\Delta:\, r_u(\theta, w_j)\leq 0,\, \forall u} g_j(\theta, w_j)\right]
            \,-\, \cO\left(\frac{1}{\sqrt{Q}}\right)
            ~~~~\text{(from Lemma \ref{lem:bounded-mu})}
            \nonumber\\
        &=&
            \E_{\theta \sim \hat{\theta}}\left[ \cL(\theta, \lambda')\right]
            \,-\, \cO\left(\frac{1}{\sqrt{Q}}\right).
        \label{eq:intermediate-4}
\end{eqnarray}
Combining \eqref{eq:intermediate-3} and \eqref{eq:intermediate-4} completes the proof.
\end{proof}


\begin{figure}[!ht]
\vspace{-8pt}
\begin{algorithm}[H]
\caption{\textit{Practical} Algorithm}
\label{algo:heuristic}
\begin{algorithmic}[1]
\REQUIRE learning rates $\eta_\theta > 0$, $\eta_\lambda > 0$, estimates of \\ $P( G = j | \hat{G} = k)$ to specify $\mathcal{W}(\theta)$
\FOR{$t = 1, \ldots, T$}
\STATE Solve for $w$ given $\theta$ using linear programming or a gradient method:\\
$w^{(t)} \gets \max_{w \in \mathcal{W}(\theta^{(t)})} \sum_{j=1}^m \lambda_j^{(t)} g_j( \theta^{(t)}, w )$
\vskip 0.05in
%
%
\STATE \textit{Descent step on $\theta$:} \\

 $ \theta^{(t+1)} \gets \theta^{(t)} - \eta_\theta \delta_\theta^{(t)}$, where \\
$\delta_\theta^{(t)} = \nabla_\theta \left(f_0(\theta^{(t)}) + \sum_{j=1}^m \lambda_j^{(t)} g_j\left(\theta^{(t)}, w^{(t+1)}\right)\right)$
\vskip 0.05in
%
%
\STATE \textit{Ascent step on $\lambda$}:\\

$\tilde{\lambda}_j^{(t+1)} \gets \lambda^{(t)}_j + \eta_{\lambda} g_j\left(\theta^{(t+1)}, w^{(t+1)}\right) \;\; \forall j \in \mathcal{G}$\\
$\lambda^{(t+1)}  \gets \Pi_{\Lambda}(\tilde{\lambda}^{(t+1)}),$\\
%
%
\ENDFOR
\STATE \textbf{return} {$\theta^{(t^{*})}$ where $t^{*}$ denotes the \textit{best} iterate that satisfies the constraints in (\ref{eq:softweights}) with the lowest objective.}
\end{algorithmic}
\end{algorithm}
\vspace{-20pt}
\end{figure}

\section{Discussion on the \textit{Practical} algorithm}
\label{app:practical}
Here we provide the details of the \textit{practical} Algorithm \ref{algo:heuristic} to solve problem (\ref{eq:softweights_lagrangian}). We also further discuss how we arrive at Algorithm \ref{algo:heuristic}. Recall that in the minimax problem in \eqref{eq:softweights_lagrangian}, restated below, each of the $m$ constraints contain a max over $w$:
\begin{eqnarray*}
  \min_{\theta\in\Theta} \max_{\lambda \in \Lambda} 
  f(\theta) + \sum_{j=1}^m \lambda_j \max_{w \in \mathcal{W}(\theta)} g_j(\theta, w).
\end{eqnarray*}
We show below that this is equivalent to a minimax problem where the sum over $j$ and max over $w$ are swapped:
\begin{lemma}
The minimax problem in \eqref{eq:softweights_lagrangian} is equivalent to:
\begin{equation}
    \min_{\theta\in\Theta}\max_{\lambda\in\Lambda}\max_{w \in \mathcal{W}(\theta)}\,
  f(\theta) +  \sum_{j=1}^m
  \lambda_j g_j(\theta, w).
  \label{eq:softweights_max_outside}
\end{equation}
\end{lemma}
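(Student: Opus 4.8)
The plan is to prove the equivalence at the level of the optimal value by first eliminating the inner maximizations over $\lambda$ and then exploiting the specific geometry of $\Lambda = \{\lambda \in \R_{+}^{m} : \|\lambda\|_1 \leq R\}$. The guiding observation is that for any coefficient vector $(b_1,\dots,b_m)$, the linear map $\lambda \mapsto \sum_{j=1}^m \lambda_j b_j$ is maximized over $\Lambda$ at a vertex, so that $\max_{\lambda \in \Lambda} \sum_{j=1}^m \lambda_j b_j = R\,\max\{0,\, \max_j b_j\}$ (put the whole budget on the largest positive coefficient, or take $\lambda = 0$ if all are nonpositive). This single fact drives both directions.

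First I would fix $\theta$ and write $a_j(\theta) = \max_{w \in \cW(\theta)} g_j(\theta, w)$. In the left-hand problem \eqref{eq:softweights_lagrangian}, the quantities $a_j(\theta)$ are constants with respect to $\lambda$, and $f(\theta)$ does not depend on $\lambda$, so applying the vertex fact gives the inner value $f(\theta) + R\,\max\{0,\, \max_j a_j(\theta)\}$. Next, for the right-hand problem \eqref{eq:softweights_max_outside}, I would commute the two maximizations over $\lambda$ and over $w$ (which always commute, being suprema over a product domain), apply the vertex fact for each fixed $w$ to obtain $R\,\max\{0,\, \max_j g_j(\theta, w)\}$, and then take the maximum over $w$. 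Since $x \mapsto R\,\max\{0,x\}$ is nondecreasing, and the maxima over $j$ and over $w$ interchange, $\max_{w} \max_j g_j(\theta, w) = \max_j \max_w g_j(\theta, w) = \max_j a_j(\theta)$, so the inner value is again $f(\theta) + R\,\max\{0,\, \max_j a_j(\theta)\}$. The two inner values coincide for every $\theta$, hence so do the outer minima, which establishes the equivalence (same optimal value and same minimizing set in $\theta$).

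The main obstacle — and the point worth flagging explicitly — is that this is genuinely a statement about the min-max value rather than a term-by-term identity. One should resist the temptation to claim the pointwise equality $\sum_j \lambda_j \max_{w} g_j(\theta,w) = \max_{w}\sum_j \lambda_j g_j(\theta,w)$ for a fixed $\lambda$: this fails in general, because $\cW(\theta)$ couples the per-group slices of $w$ through the normalization constraint $\sum_{j} w(j \mid \hat y, y, k) = 1$, so no single $w$ need simultaneously maximize every $g_j$. What rescues the equivalence is precisely that maximizing over the $\ell_1$-ball $\Lambda$ concentrates all mass on a single coordinate $j^{*}$, reducing the comparison to one constraint at a time, where the inner and outer maximizers trivially agree. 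I would therefore present the vertex characterization of $\max_{\lambda \in \Lambda}$ as the load-bearing step and treat the interchange of maxima together with the monotonicity of $x \mapsto R\,\max\{0,x\}$ as routine bookkeeping.
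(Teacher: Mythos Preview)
Your proposal is correct and follows essentially the same approach as the paper: both arguments exploit the $\ell_1$-ball geometry of $\Lambda$ to reduce the $\lambda$-maximization to placing all mass on a single coordinate (the paper via the decomposition $\lambda = \beta\nu$ with $\nu \in \Delta_m$, you via the vertex formula $R\max\{0,\max_j b_j\}$), and then swap $\max_j$ with $\max_w$. Your explicit remark about why the pointwise identity $\sum_j \lambda_j \max_w g_j = \max_w \sum_j \lambda_j g_j$ fails for general $\lambda$ is a nice addition not present in the paper.
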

\begin{proof}
Recall that the space of Lagrange multipliers $\Lambda = \{\lambda \in \R_{+}^{m}\,|\, \|\lambda\|_{1} \leq R\},$ for  $R>0$. So the above maximization over $\Lambda$ can be re-written in terms of a maximization over the $m$-dimensional simplex $\Delta_m$ and a scalar $\beta \in [0, R]$:
\begin{eqnarray*}
\lefteqn{
  \min_{\theta\in\Theta} \max_{\beta \in [0,R],\,\nu \in \Delta_m} 
  f(\theta) + \beta\sum_{j=1}^m
  \nu_j \max_{w \in \mathcal{W}(\theta)} g_j(\theta, w)}\\
&=&
  \min_{\theta\in\Theta} \max_{\beta \in [0,R]} 
  f(\theta) + \beta\max_{\nu \in \Delta_m} \sum_{j=1}^m
  \nu_j \max_{w \in \mathcal{W}(\theta)} g_j(\theta, w)\\
 &=&
  \min_{\theta\in\Theta} \max_{\beta \in [0,R]} 
  f(\theta) + \beta\max_{j\in \mathcal{G}} \max_{w \in \mathcal{W}(\theta)} g_j(\theta, w)\\
  &=&
  \min_{\theta\in\Theta} \max_{\beta \in [0,R]} 
  f(\theta) + \beta \max_{w \in \mathcal{W}(\theta)}\max_{j\in \mathcal{G}} g_j(\theta, w)\\
 &=&
  \min_{\theta\in\Theta} \max_{\beta \in [0,R]} 
  f(\theta) + \beta \max_{w \in \mathcal{W}(\theta)}\max_{\nu \in \Delta_m} \sum_{j=1}^m
  \nu_j g_j(\theta, w)\\
&=&
  \min_{\theta\in\Theta}
  f(\theta) + \max_{\beta\in[0,R],\,\nu \in \Delta_m}\max_{w \in \mathcal{W}(\theta)} \sum_{j=1}^m
  \beta\nu_j g_j(\theta, w)\\
&=&
  \min_{\theta\in\Theta}
  f(\theta) + \max_{\lambda\in\Lambda}\max_{w \in \mathcal{W}(\theta)} \sum_{j=1}^m
  \lambda_j g_j(\theta, w),
\end{eqnarray*}
which completes the proof.
\end{proof}
The practical algorithm outlined in Algorithm \ref{algo:heuristic} seeks to solve the re-written minimax problem in \eqref{eq:softweights_max_outside}, and is similar in structure to the ideal algorithm in Algorithm \ref{algo:ideal}, in that it has two high-level steps: an approximate best response over $\theta$ and gradient ascent updates on $\lambda$. However, the algorithm works with deterministic classifiers $\theta^{(t)}$, and uses a simple heuristic to approximate the best response step. Specifically, for the best response step, the algorithm finds the maximizer of the Lagrangian over $w$ for a fixed $\theta^{(t)}$ by e.g. using linear programming:
$$w^{(t)} \gets \max_{w \in \mathcal{W}(\theta^{(t)})} \sum_{j=1}^m \lambda_j^{(t)} g_j( \theta^{(t)}, w ),$$ 
uses the maximizer $w^{(t)}$ to approximate the gradient of the Lagrangian at $\theta^{(t)}$:
$$
\delta_\theta^{(t)} = \nabla_\theta \Big(f_0(\theta^{(t)}) + \sum_{j=1}^m \lambda_j^{(t)} f_j\left(\theta^{(t)}, w^{(t+1)}\right)\Big)
$$
and performs a single gradient update on $\theta$:
 $$ \theta^{(t+1)} \gets \theta^{(t)} - \eta_\theta \delta_\theta^{(t)}.
 $$
The gradient ascent step on $\lambda$ is the same as the ideal algorithm, except that it is simpler to implement as the iterates $\theta^{(t)}$ are deterministic:
$$\tilde{\lambda}_j^{(t+1)} \gets \lambda^{(t)}_j + \eta_{\lambda} f_j\Big(\theta^{(t+1)}, w^{(t+1)}\Big) \;\; \forall j \in \mathcal{G};$$
$$\lambda^{(t+1)}  \gets \Pi_{\Lambda}(\tilde{\lambda}^{(t+1)}).$$

\section{Additional experiment details and results}\label{app:experiments}
We provide more details on the experimental setup as well as further results.
\subsection{Additional experimental setup details}\label{app:experiment_details}

This section contains further details on the experimental setup, including the datasets used and hyperparameters tuned.  All categorical features in each dataset were binarized into one-hot vectors. All numerical features were bucketized into 4 quantiles, and further binarized into one-hot vectors. All code that we used for pre-processing the datasets from their publicly-downloadable versions can be found at \url{https://github.com/wenshuoguo/robust-fairness-code}. 
  
For the na{\"i}ve approach, we solve the constrained optimization problem (\ref{eq:naiveproxy}) with respect to the noisy groups $\hat{G}$. For comparison, we also report the results of the unconstrained optimization problem and the constrained optimization problem (\ref{eq:orig_short}) when the true groups $G$ are known. For the DRO problem (\ref{eq:dro}), we estimate the bound $\gamma_j = P(\hat{G} \neq G | G = j)$ in each case study. For the soft group assignments approach, we implement the \textit{practical} algorithm (Algorithm \ref{algo:heuristic}). 

In the experiments, we replace all expectations in the objective and constraints with finite-sample empirical versions. So that the constraints will be convex and differentiable, we replace all indicator functions with hinge upper bounds, as in \citet{Davenport:2010} and \citet{Eban:2017}. We use a linear model: $\phi(X;\theta) = \theta^T X$. The noisy protected groups $\hat{G}$ are included as a feature in the model, demonstrating that conditional independence between $\hat{G}$ and the model $\phi(X;\theta)$ is not required here, unlike some prior work~\citep{Awasthi:2020}. Aside from being used to estimate the noise model $P(G = k | \hat{G} = j)$ for the soft group assignments approach\footnote{If $P(G = k | \hat{G} = j)$ is estimated from an auxiliary dataset with a different distribution than test, this could lead to generalization issues for satisfying the true group constraints on test. In our experiments, we lump those generalization issues in with any distributional differences between train and test.}, the true groups $G$ are never used in the training or validation process. 

Each dataset was split into train/validation/test sets with proportions 0.6/0.2/0.2. 
For each algorithm, we chose the \textit{best} iterate $\theta^{(t^{*})}$ out of $T$ iterates on the train set, where we define \textit{best} as the iterate that achieves the lowest objective value while satisfying all constraints. We select the hyperparameters that achieve the best performance on the validation set (details in Appendix \ref{app:experiments}).
We repeat this procedure for ten random train/validation/test splits and record the mean and standard errors for all metrics\footnote{When we report the ``maximum'' constraint violation, we use the mean and standard error of the constraint violation for the group $j$ with the maximum mean constraint violation.}. 


 \subsubsection{Adult dataset}
 
 For the first case study, we used the Adult dataset from UCI~\citep{Dua:2019}, which includes 48,842 examples. The features used were \textit{age}, \textit{workclass}, \textit{fnlwgt}, \textit{education}, \textit{education\_num}, \textit{marital\_status}, \textit{occupation}, \textit{relationship}, \textit{race}, \textit{gender}, \textit{capital\_gain}, \textit{capital\_loss}, \textit{hours\_per\_week}, and \textit{native\_country}. Detailed descriptions of what these features represent are provided by UCI \citep{Dua:2019}. The label was whether or not \textit{income\_bracket} was above \$50,000. 
The true protected groups were given by the \textit{race} feature, and we combined all examples with race other than ``white'' or ``black'' into a group of race ``other.'' When training with the noisy group labels, we did \textit{not} include the true \textit{race} as a feature in the model, but included the noisy race labels as a feature in the model instead. We set $\alpha = 0.05$ as the constraint slack.

The constraint violation that we report in Figure \ref{fig:adult_constraints} is taken over a test dataset with $n$ examples $(X_1,Y_1,G_1),...,(X_n,Y_n,G_n)$, and is given by:
$$ \max_{j \in \mathcal{G}} \quad \frac{\sum_{i=1}^n \Ind(\hat{Y}(\theta)_i=1, Y_i=1)}{\sum_{i=1}^n\Ind(Y_i=1)} - \frac{\sum_{i=1}^n \Ind(\hat{Y}(\theta)_i=1, Y_i=1, G_i=j)}{\sum_{i=1}^n\Ind(Y_i=1, G_i=j)} - \alpha,$$
where $\hat{Y}(\theta)_i = \Ind(\phi(\theta; X_i) > 0)$.

Section \ref{app:tpr_sa} shows how we specifically enforce equality of opportunity using the soft assignments approach, and Section \ref{app:tpr_dro} shows how we enforce equality of opportunity using DRO.

 \subsubsection{Credit dataset}
 For the second case study, we used default of credit card clients dataset from UCI ~\cite{Dua:2019} collected by a company in Taiwan \cite{Yeh:2009}, which contains 30000 examples and 24 features. The features used were \textit{amount\_of\_the\_given\_credit}, \textit{gender}, \textit{education}, \textit{education}, \textit{marital\_status},  \textit{age}, \textit{history\_of\_past\_payment}, \textit{amount\_of\_bill\_statement}, \textit{amount\_of\_previous\_payment}. Detailed descriptions of what these features represent are provided by UCI \citep{Dua:2019}. The label was whether or not \textit{default} was true. 
The true protected groups were given by the \textit{education} feature, and we combined all examples with education level other than ``graduate school'' or ``university'' into a group of education level ``high school and others''. When training with the noisy group labels, we did \textit{not} include the true \textit{education} as a feature in the model, but included the noisy education level labels as a feature in the model instead. We set $\alpha = 0.03$ as the constraint slack.

The constraint violation that we report in Figure \ref{fig:adult_constraints} is taken over a test dataset with $n$ examples $(X_1,Y_1,G_1),...,(X_n,Y_n,G_n)$, and is given by:
$$ \max_{j \in \mathcal{G}} \quad \max( \Delta_j^{\text{TPR}}, \Delta_j^{\text{FPR}})$$
where 
$$\Delta_j^{\text{TPR}} = \frac{\sum_{i=1}^n \Ind(\hat{Y}(\theta)_i=1, Y_i=1)}{\sum_{i=1}^n\Ind(Y_i=1)} - \frac{\sum_{i=1}^n \Ind(\hat{Y}(\theta)_i=1, Y_i=1, G_i=j)}{\sum_{i=1}^n\Ind(Y_i=1, G_i=j)} - \alpha$$
and 
$$\Delta_j^{\text{FPR}} =  \frac{\sum_{i=1}^n \Ind(\hat{Y}(\theta)_i=1, Y_i=0, G_i=j)}{\sum_{i=1}^n\Ind(Y_i=0, G_i=j)} - \frac{\sum_{i=1}^n \Ind(\hat{Y}(\theta)_i=1, Y_i=0)}{\sum_{i=1}^n\Ind(Y_i=0)} - \alpha$$
and $\hat{Y}(\theta)_i = \Ind(\phi(\theta; X_i) > 0)$.

Section \ref{app:tpr_sa} shows how we specifically enforce equalized odds using the soft assignments approach, and Section \ref{app:tpr_dro} shows how we enforce equalized odds using DRO.

\subsubsection{Optimization code}
For all case studies, we performed experiments comparing the na{\"i}ve approach, the DRO approach (Section \ref{sec:dro}) and the soft group assignments approach (Section \ref{sec:softweights}). We also compared these to the baselines of optimizing without constraints and optimizing with constraints with respect to the true groups.
All optimization code was written in Python and TensorFlow \footnote{Abadi, M. et al. TensorFlow: Large-scale machine learning on heterogeneous systems,
2015. tensorflow.org.}. All gradient steps were implemented using TensorFlow's Adam optimizer \footnote{\url{https://www.tensorflow.org/api_docs/python/tf/compat/v1/train/AdamOptimizer}}, though all experiments can also be reproduced using simple gradient descent without momentum. We computed full gradients over all datasets, but minibatching can also be used for very large datasets. Implementations for all approaches are included in the attached code. Training time was less than 10 minutes per model.

\begin{table}[!ht]
\caption{Hyperparameters tuned for each approach}
\label{table: hparams}
\vskip 0.15in
\begin{center}
\begin{small}
\begin{sc}
\begin{tabular}{llll}
\toprule
Hparam & Values tried & Relevant approaches & Description \\
\midrule
$\eta_\theta$ & \{0.001,0.01,0.1\} & all approaches & learning rate for $\theta$ \\
$\eta_\lambda$ & \{0.25,0.5,1.0,2.0\} & all except unconstrained& learning rate for $\lambda$\\
$\eta_{\tilde{p}_j}$    & \{0.001, 0.01, 0.1\} & DRO & learning rate for $\tilde{p}_j$ \\
$\eta_w$    & \{0.001, 0.01, 0.1\} & soft assignments & learning rate using \\
&&&gradient methods for $w$ \\
\bottomrule
\end{tabular}
\end{sc}
\end{small}
\end{center}
\vskip -0.1in
\end{table}

\subsubsection{Hyperparameters}
The hyperparameters for each approach were chosen to achieve the best performance on the validation set on average over 10 random train/validation/test splits, where ``best'' is defined as the set of hyperparameters that achieved the lowest error rate while satisfying all constraints relevant to the approach. The final hyperparameter values selected for each method were neither the largest nor smallest of all values tried. A list of all hyperparameters tuned and the values tried is given in Table \ref{table: hparams}. 

For the na{\"i}ve approach, the constraints used when selecting the hyperparameter values on the validation set were the constraints with respect to the noisy group labels given in Equation (\ref{eq:naiveproxy}). For the DRO approach and the soft group assignments approach, the respective robust constraints were used when selecting hyperparameter values on the validation set. Specifically, for the DRO approach, the constraints used were those defined in Equation (\ref{eq:dro}), and for the soft group assignments approach, the constraints used were those defined in Equation (\ref{eq:softweights}). For the unconstrained baseline, no constraints were taken into account when selecting the best hyperparameter values. For the baseline constrained with access to the true group labels, the true group constraints were used when selecting the best hyperparameter values.

Hinge relaxations of all constraints were used during training to achieve convexity. Since the hinge relaxation is an upper bound on the real constraints, the hinge-relaxed constraints may require some additional slack to maintain feasibility. This positive slack $\beta$ was added to the original slack $\alpha$ when training with the hinge-relaxed constraints, and the amount of slack $\beta$ was chosen so that the relevant hinge-relaxed constraints were satisfied on the training set.

All approaches ran for 750 iterations over the full dataset.
 
\subsection{Additional experiment results}\label{app:experiment_results}

This section provides additional experiment results. All results reported here and in the main paper are on the test set (averaged over 10 random train/validation/test splits).

\subsubsection{Case study 1 (Adult)}
This section provides additional experiment results for case study 1 on the Adult dataset. 

Figure \ref{fig:adult_proxy_constraint_volations} that the na{\"i}ve approach, DRO approach, and soft assignments approaches all satisfied the fairness constraints for the noisy groups on the test set. 

Figure \ref{fig:adult_robust_constraint_volations} confirms that the DRO approach and the soft assignments approaches both managed to satisfy their respective robust constraints on the test set on average. For the DRO approach, the constraints measured in Figure \ref{fig:adult_robust_constraint_volations} come from Equation (\ref{eq:dro}), and for the soft assignments approach, the constraints measured in Figure \ref{fig:adult_robust_constraint_volations} come from Equation (\ref{eq:softweights}). We provide the exact error rate values and maximum violations on the true groups for the Adult dataset in Table \ref{table:adult_exact_vals}.

\begin{figure*}[!ht]
\vskip 0.2in
\begin{center}
\begin{tabular}{ccc}
\includegraphics[width=0.3\textwidth]{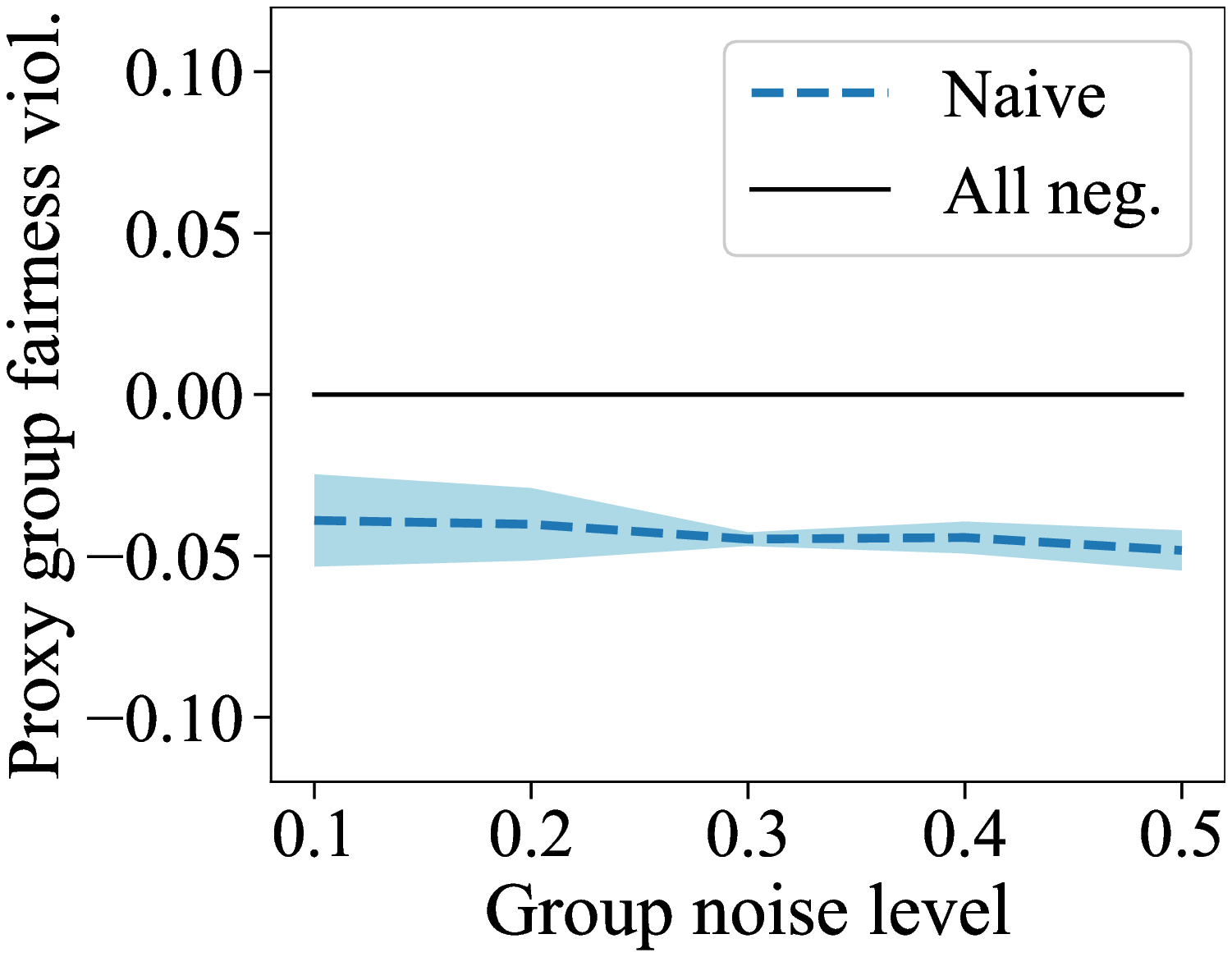} & \includegraphics[width=0.3\textwidth]{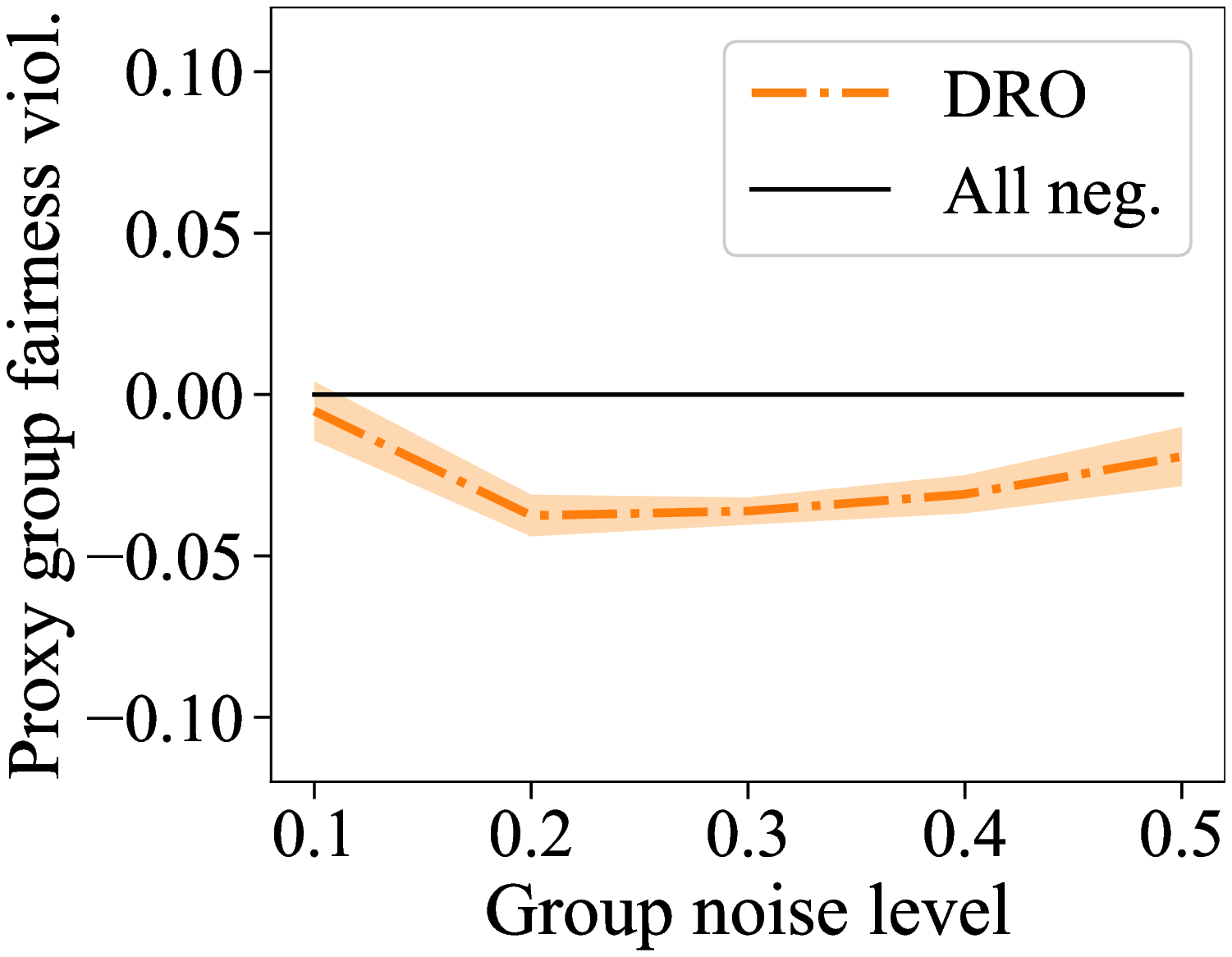} & \includegraphics[width=0.3\textwidth]{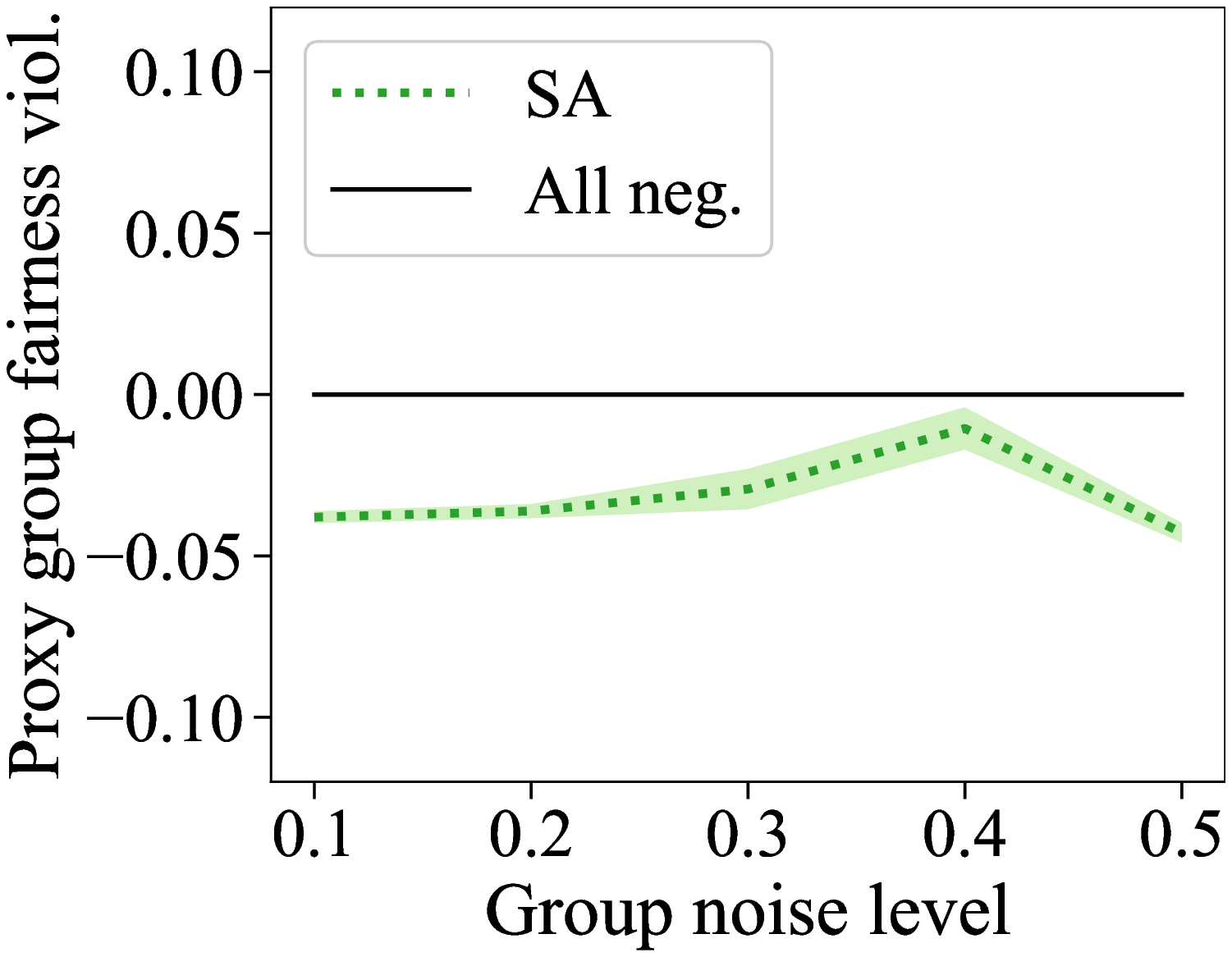} \\
\end{tabular}
\caption{Maximum fairness constraint violations with respect to the noisy groups $\hat{G}$ on the test set for different group noise levels $\gamma$ on the Adult dataset. For each noise level, we plot the mean and standard error over 10 random train/val/test splits. The black solid line illustrates a maximum constraint violation of 0. While the na{\"i}ve approach (\textit{left}) has increasingly higher fairness constraints with respect to the true groups as the noise increases, it always manages to satisfy the constraints with respect to the noisy groups $\hat{G}$}
\label{fig:adult_proxy_constraint_volations} 
\end{center}
\vskip -0.2in
\end{figure*}

\begin{figure*}[!ht]
\vskip 0.2in
\begin{center}
\begin{tabular}{cc} \includegraphics[width=0.4\textwidth]{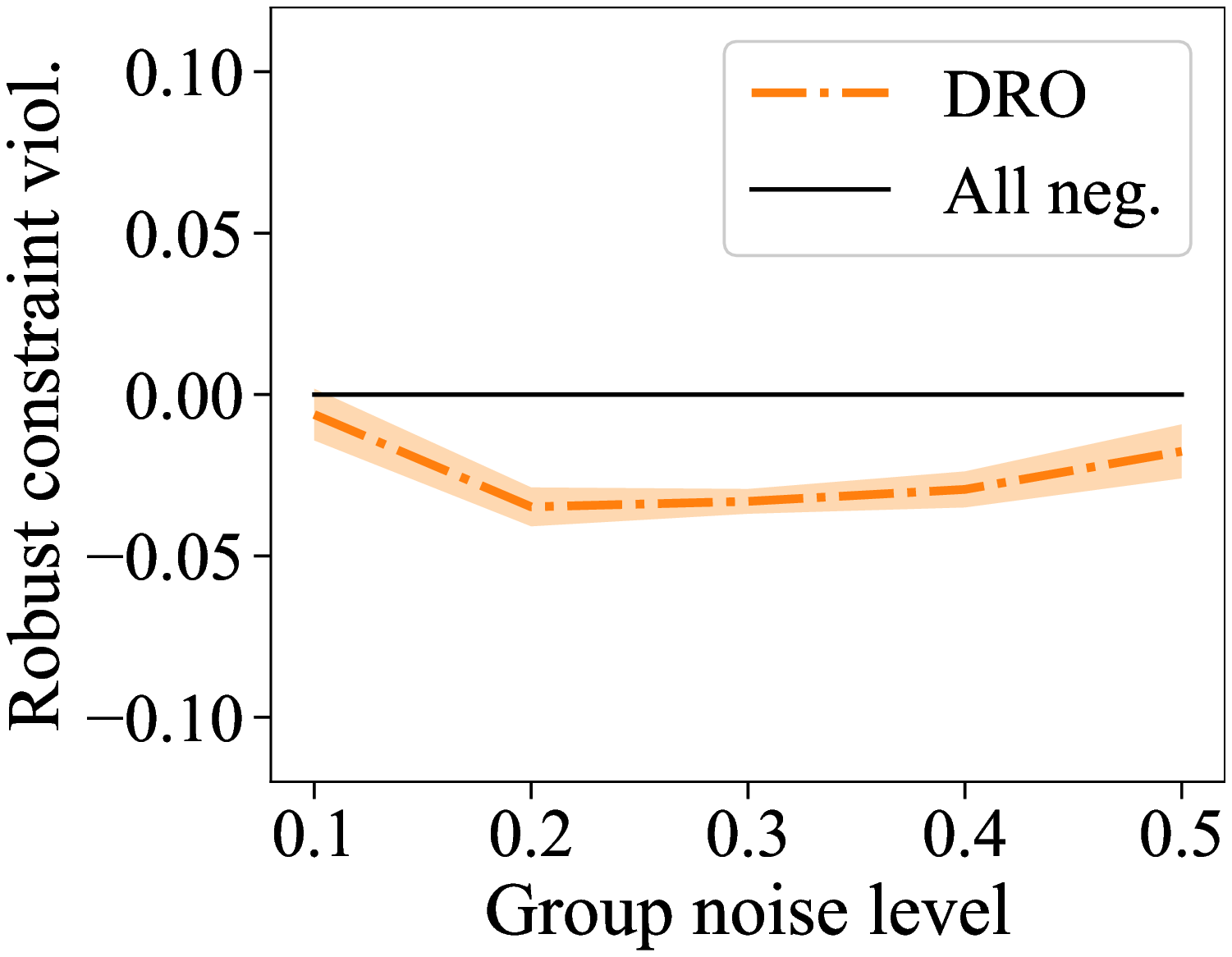} & \includegraphics[width=0.4\textwidth]{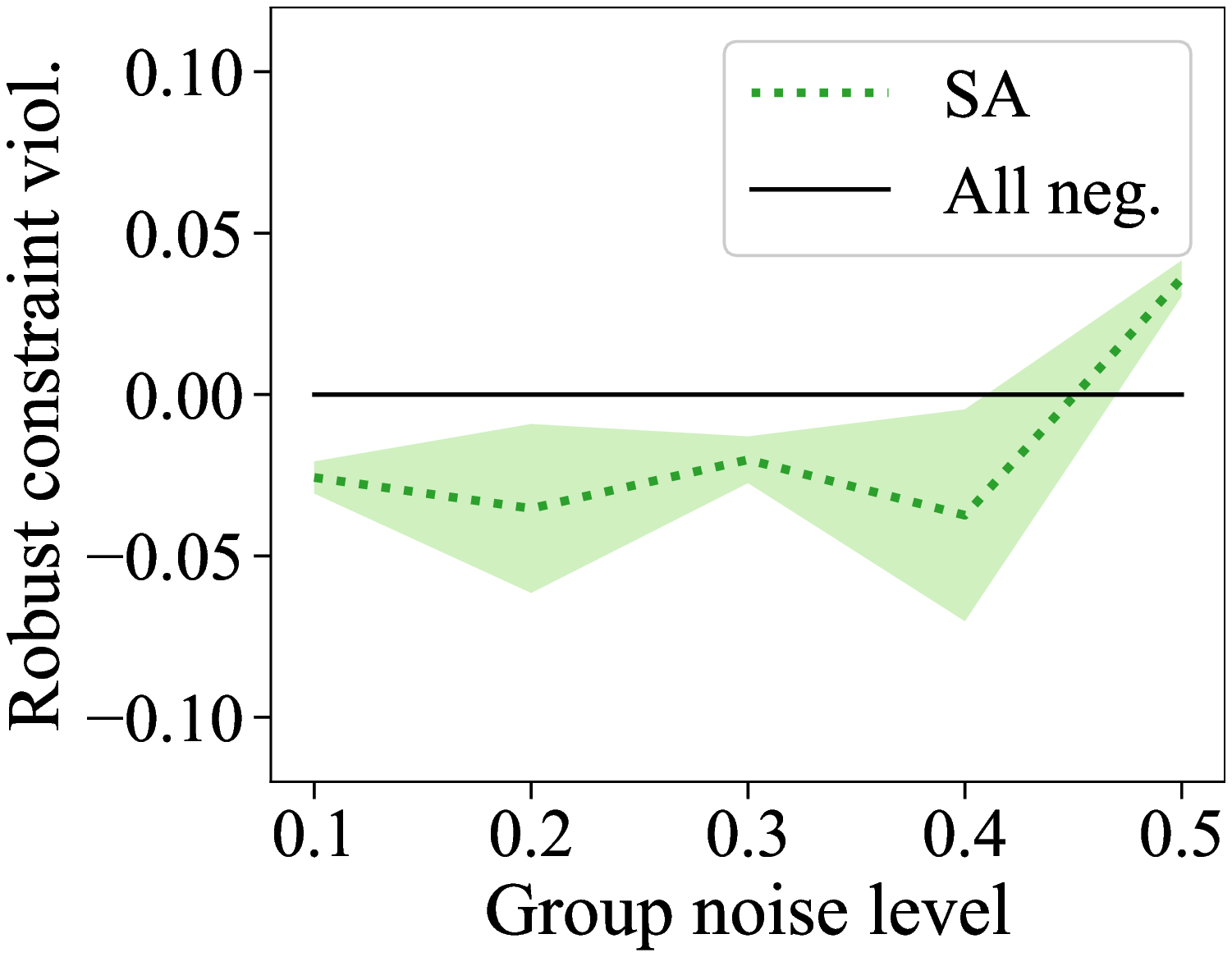} \\
\end{tabular}
\caption{Maximum robust constraint violations on the test set for different group noise levels $P(\hat{G} \neq G)$ on the Adult dataset. For each noise level, we plot the mean and standard error over 10 random train/val/test splits. The black dotted line illustrates a maximum constraint violation of 0. Both the DRO approach (\textit{left}) and the soft group assignments approach (\textit{right}) managed to satisfy their respective robust constraints on the test set on average for all noise levels.}
\label{fig:adult_robust_constraint_volations} 
\end{center}
\vskip -0.2in
\end{figure*}

\begin{table}[!ht]
\caption{Error rate and fairness constraint violations on the true groups for the Adult dataset (mean and standard error over 10 train/test/splits).}
\label{table:adult_exact_vals}
\begin{center}
\begin{small}
\begin{tabular}{{ p{0.80cm}|p{2cm}p{2cm}|p{2cm}p{2cm}|p{2cm}p{2cm}}}
\toprule 
&\multicolumn{2}{c|}{DRO} &\multicolumn{2}{c|}{Soft Assignments} \\
Noise & Error rate & Max $G$ Viol.  & Error rate & Max $G$ Viol.  \\

\midrule \midrule
0.1 & 0.152 $\pm$  0.001 & 0.002 $\pm$  0.019 & 0.148 $\pm$  0.001 & -0.048 $\pm$  0.002 \\

0.2 & 0.200 $\pm$  0.002 & -0.045 $\pm$  0.003 & 0.157 $\pm$ 0.003 & -0.048 $\pm$  0.002 \\

0.3 &0.216 $\pm$ 0.010 & -0.044 $\pm$  0.004& 0.158 $\pm$ 0.005 & 0.002 $\pm$  0.030 \\

0.4 & 0.209 $\pm$  0.006 & -0.019 $\pm$ 0.031 & 0.188 $\pm$  0.003 & -0.016 $\pm$  0.016 \\

0.5 & 0.219 $\pm$  0.012& -0.030 $\pm$  0.032 & 0.218 $\pm$  0.002 & 0.004 $\pm$  0.006 \\
\bottomrule
\end{tabular}
\end{small}
\end{center}
\vskip -5pt
\end{table}

\subsubsection{Case study 2 (Credit)}


This section provides additional experiment results for case study 2 on the Credit dataset. 

Figure \ref{fig:credit_constraints_tpr_fpr} shows the constraint violations with respect to the true groups on test separated into TPR violations and FPR violations. For all noise levels, there were higher TPR violations than FPR violations. However, this does not mean that the FPR constraint was meaningless -- the FPR constraint still ensured that the TPR constraints weren't satisfied by simply adding false positives. 

Figure \ref{fig:credit_proxy_constraint_volations} confirms that the na{\"i}ve approach, DRO approach, and soft assignments approaches all satisfied the fairness constraints for the noisy groups on the test set. 

Figure \ref{fig:credit_robust_constraint_volations} confirms that the DRO approach and the soft assignments approaches both managed to satisfy their respective robust constraints on the test set on average. For the DRO approach, the constraints measured in Figure \ref{fig:credit_robust_constraint_volations} come from Equation (\ref{eq:dro}), and for the soft assignments approach, the constraints measured in Figure \ref{fig:credit_robust_constraint_volations} come from Equation (\ref{eq:softweights}). 

We provide the exact error rate values and maximum violations on the true groups for the Credit dataset in Table \ref{table:credit_exact_vals}.

\begin{figure}[!ht]
\begin{center}
\centerline{\begin{tabular}{ccc}
 \includegraphics[width=0.31\textwidth]{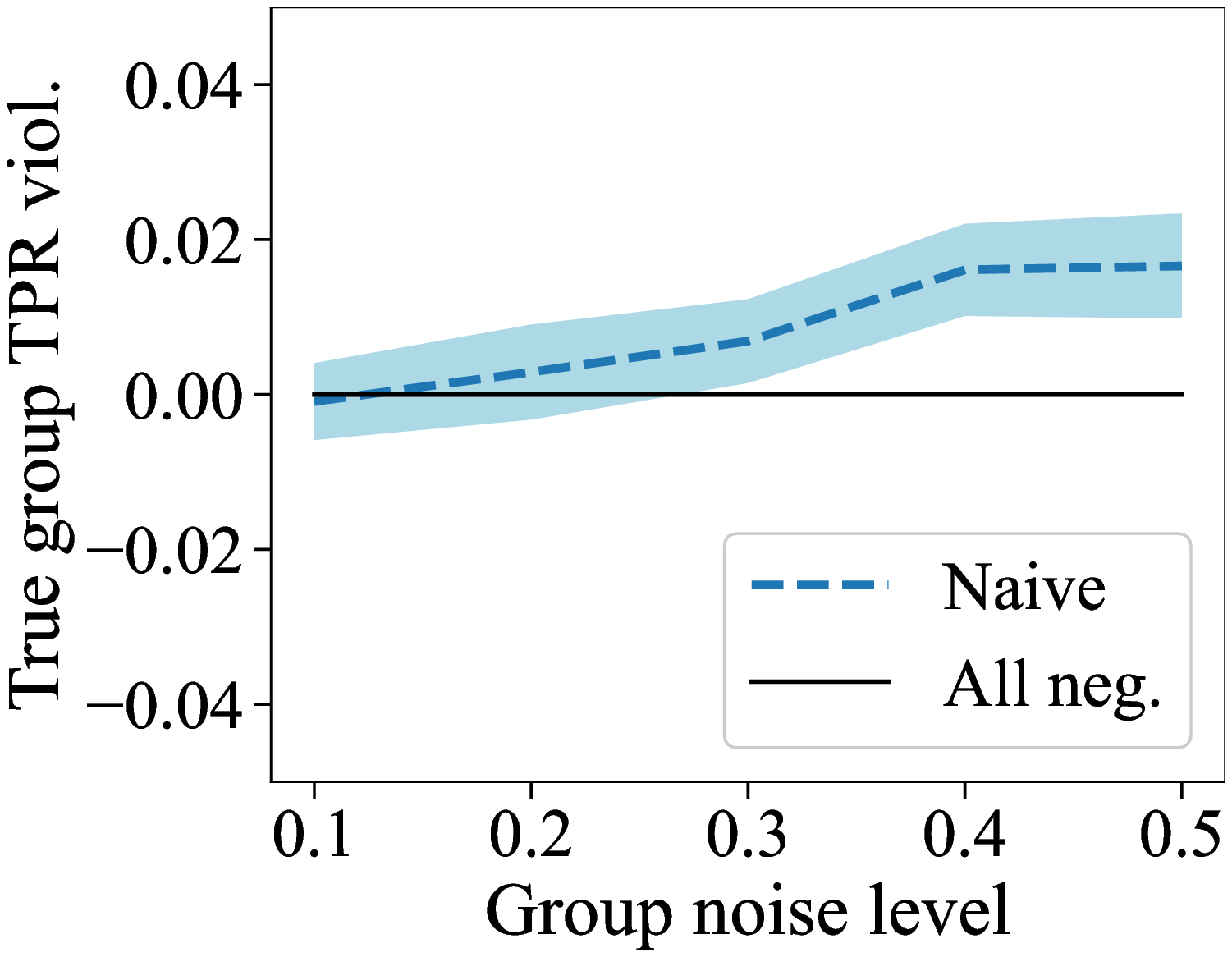} &
  \includegraphics[width=0.31\textwidth]{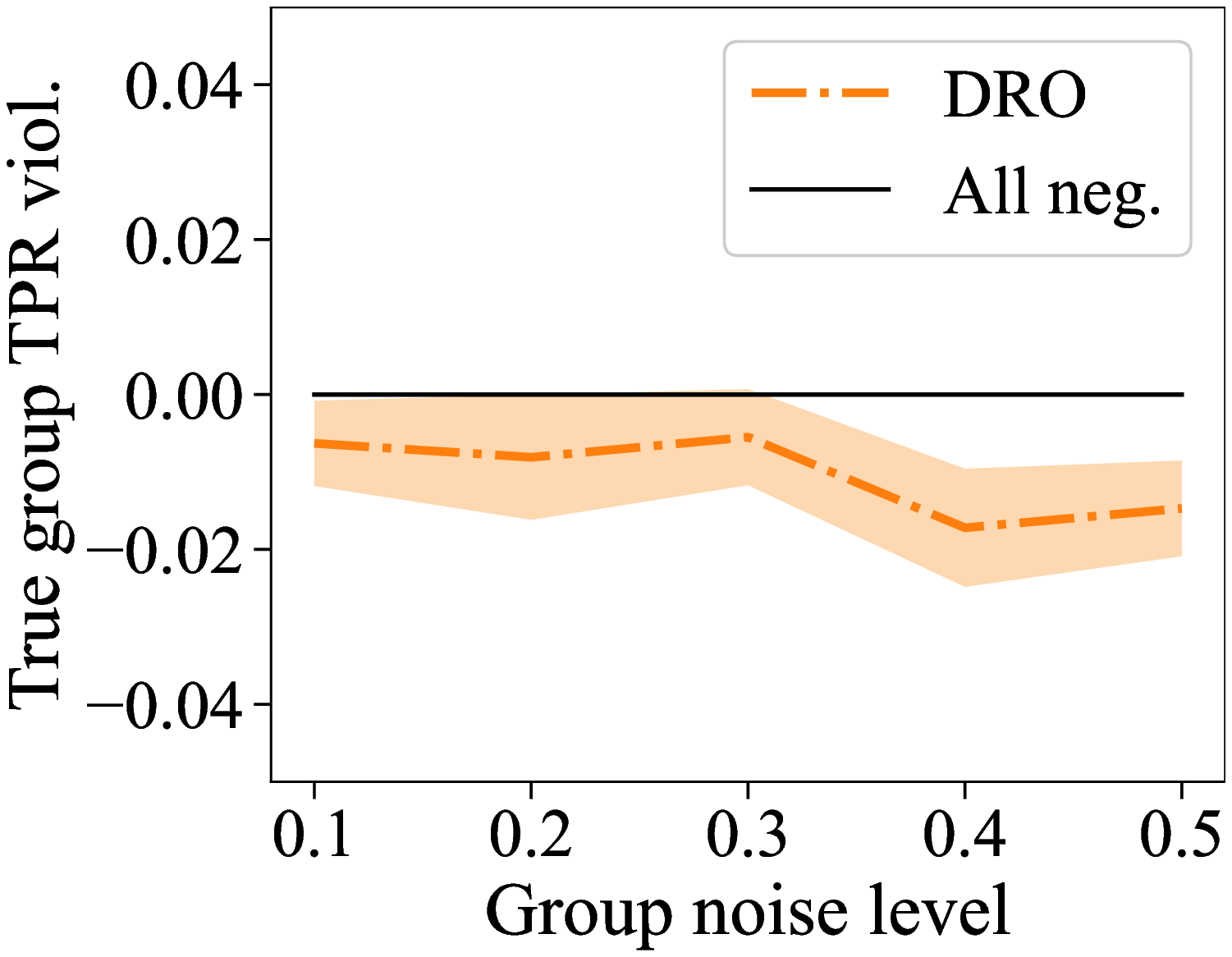} & \includegraphics[width=0.31\textwidth]{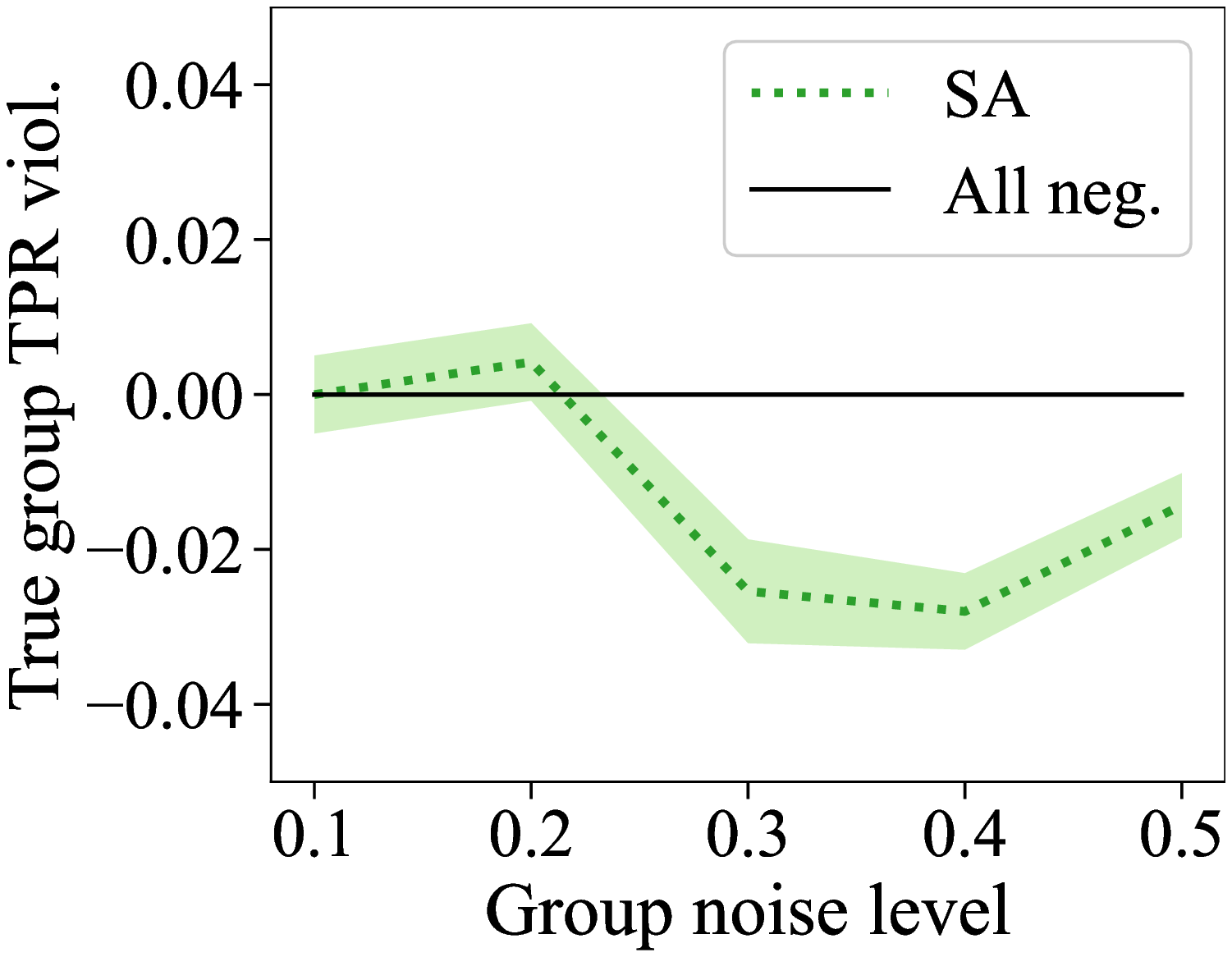} \\
  \includegraphics[width=0.31\textwidth]{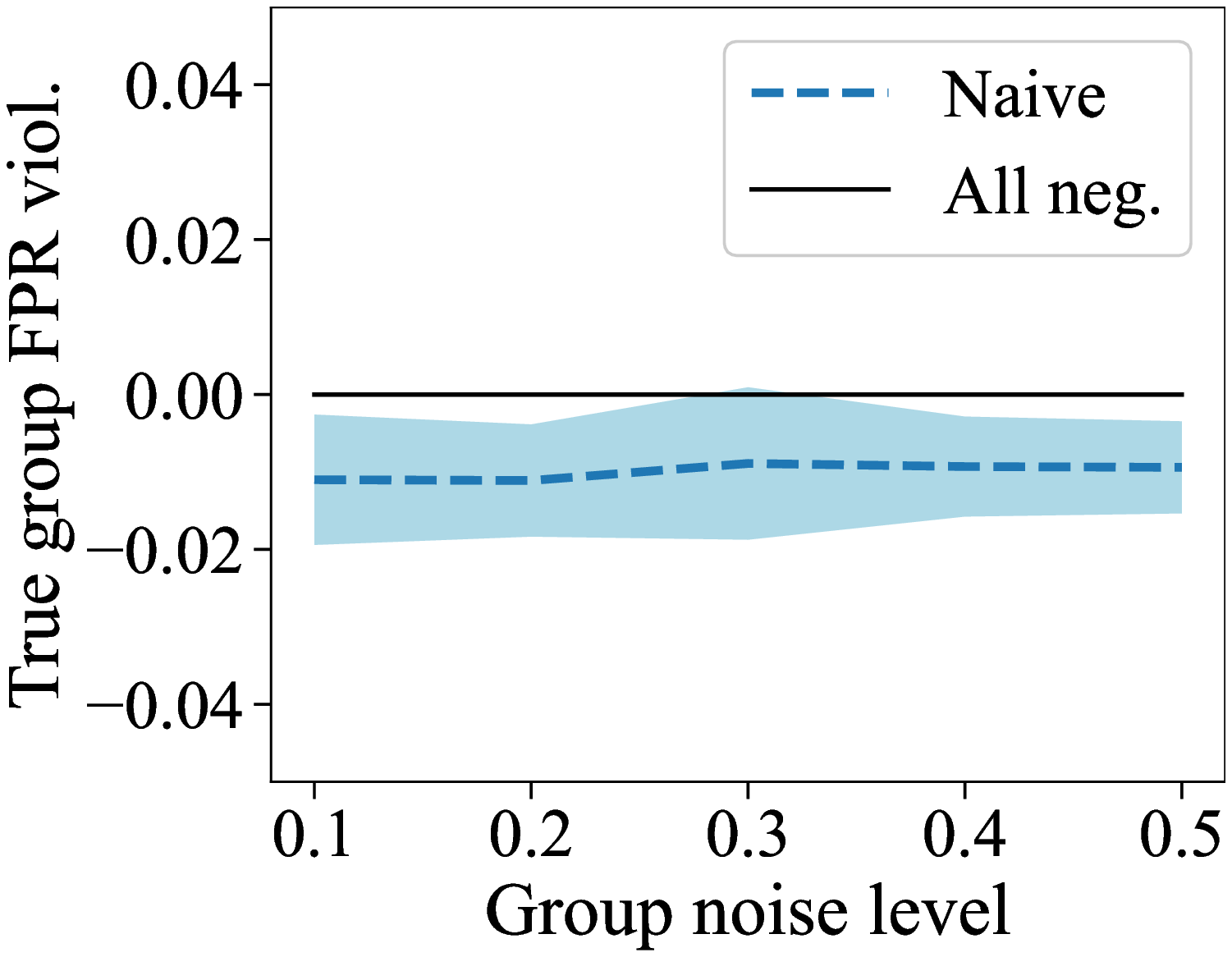} &
  \includegraphics[width=0.31\textwidth]{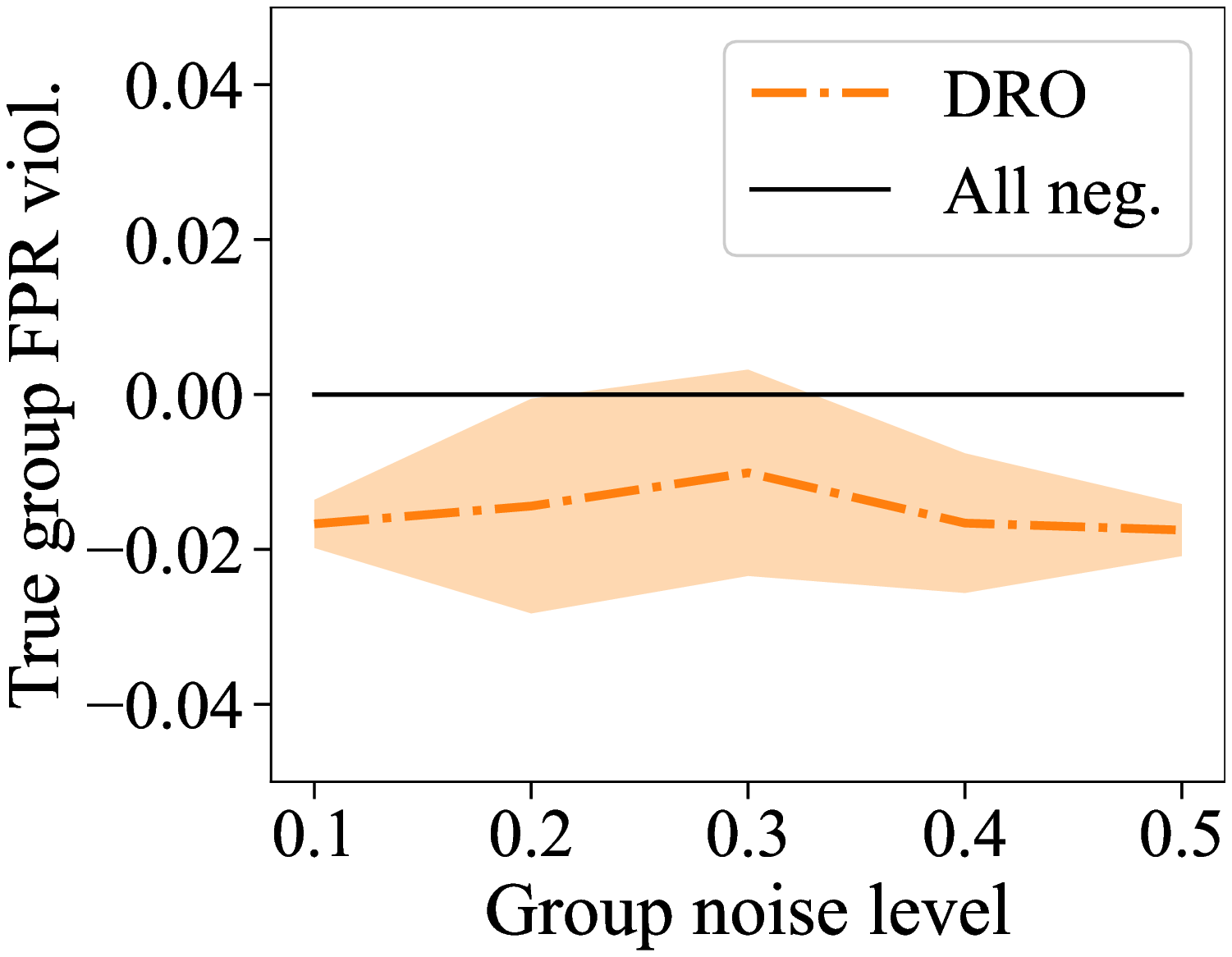} & \includegraphics[width=0.31\textwidth]{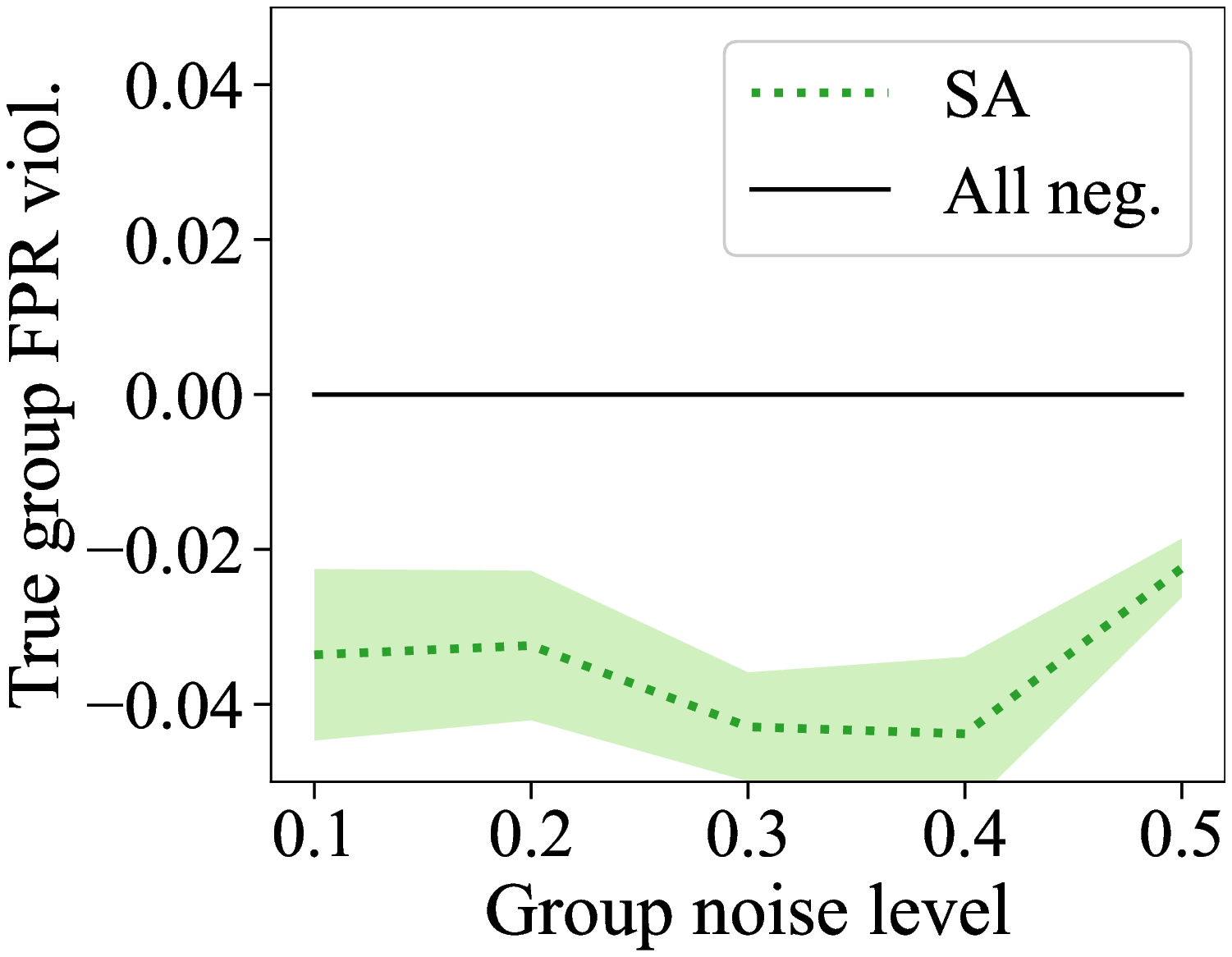} \\
\end{tabular}}
\caption{Case study 2 (Credit): Maximum true group TPR (top) and FPR (bottom) constraint violations for the Naive, DRO, and soft assignments (SA) approaches on test set for different group noise levels $\gamma$ on the Credit dataset (mean and standard error over 10 train/val/test splits). The black solid line represents the performance of the trivial ``all negatives'' classifier, which has constraint violations of 0. A negative violation indicates satisfaction of the fairness constraints on the true groups.}
\label{fig:credit_constraints_tpr_fpr} 
\end{center}
\end{figure}

\begin{figure*}[!ht]
\vskip 0.2in
\begin{center}
\begin{tabular}{ccc}
\includegraphics[width=0.3\textwidth]{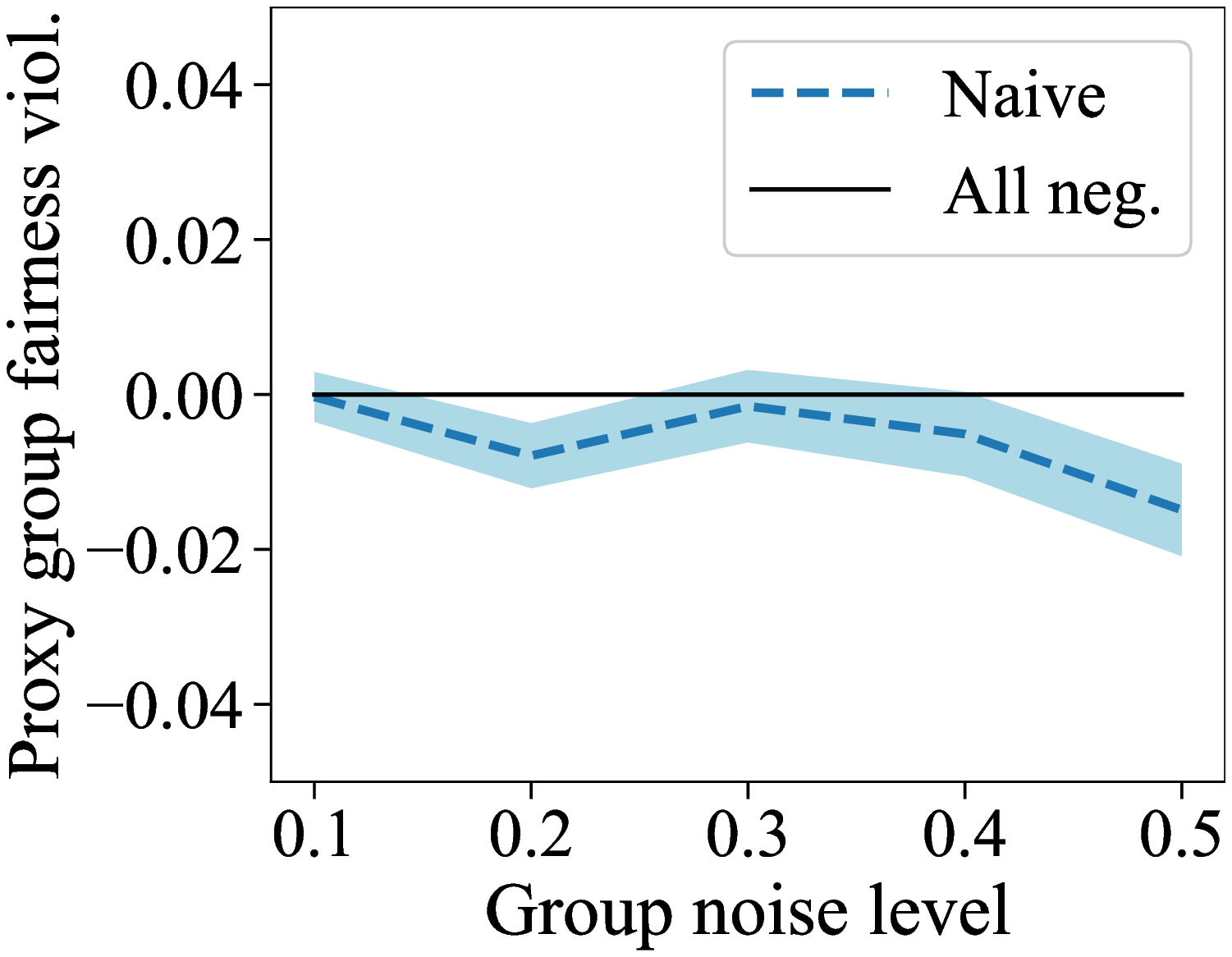} & \includegraphics[width=0.3\textwidth]{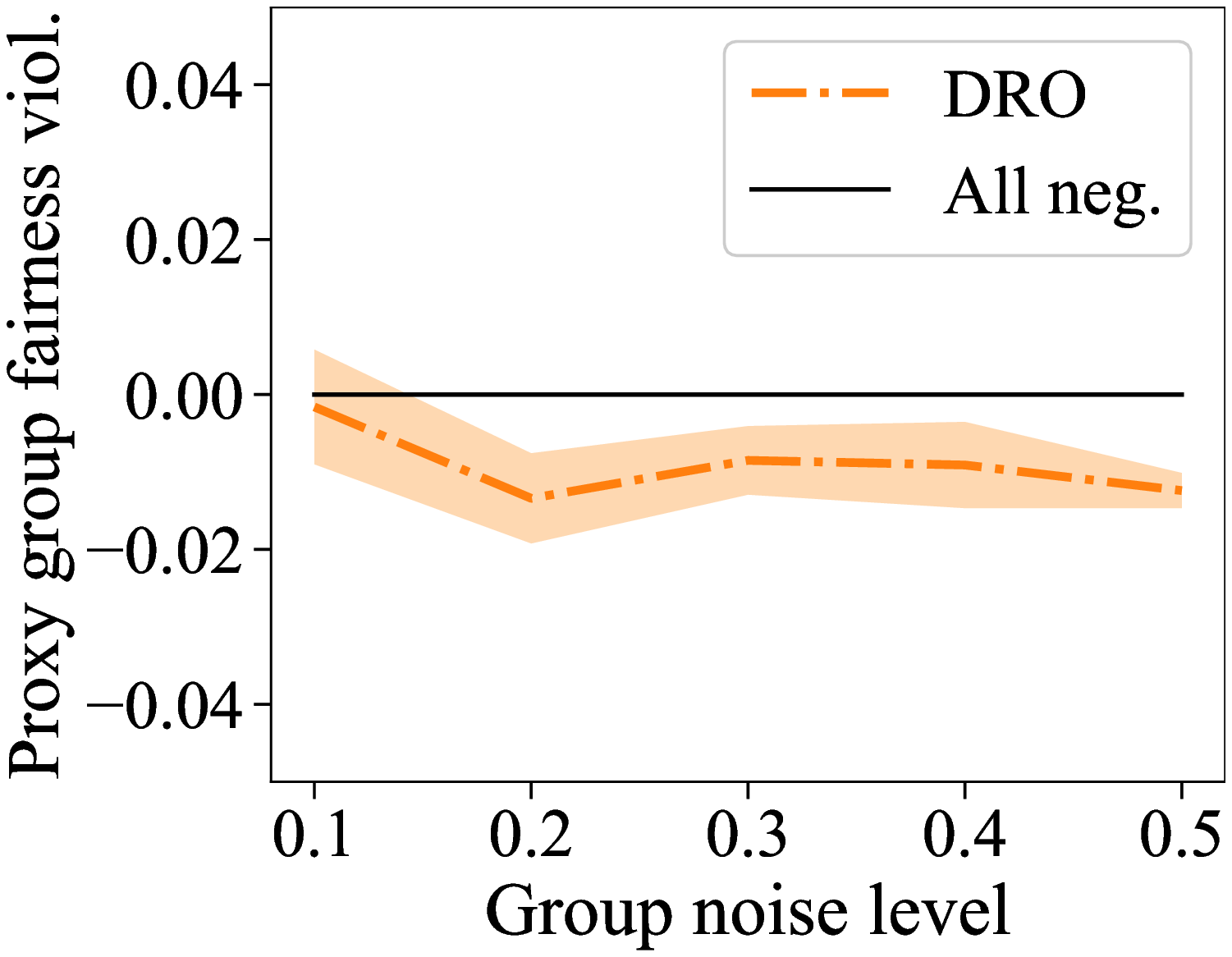} & \includegraphics[width=0.3\textwidth]{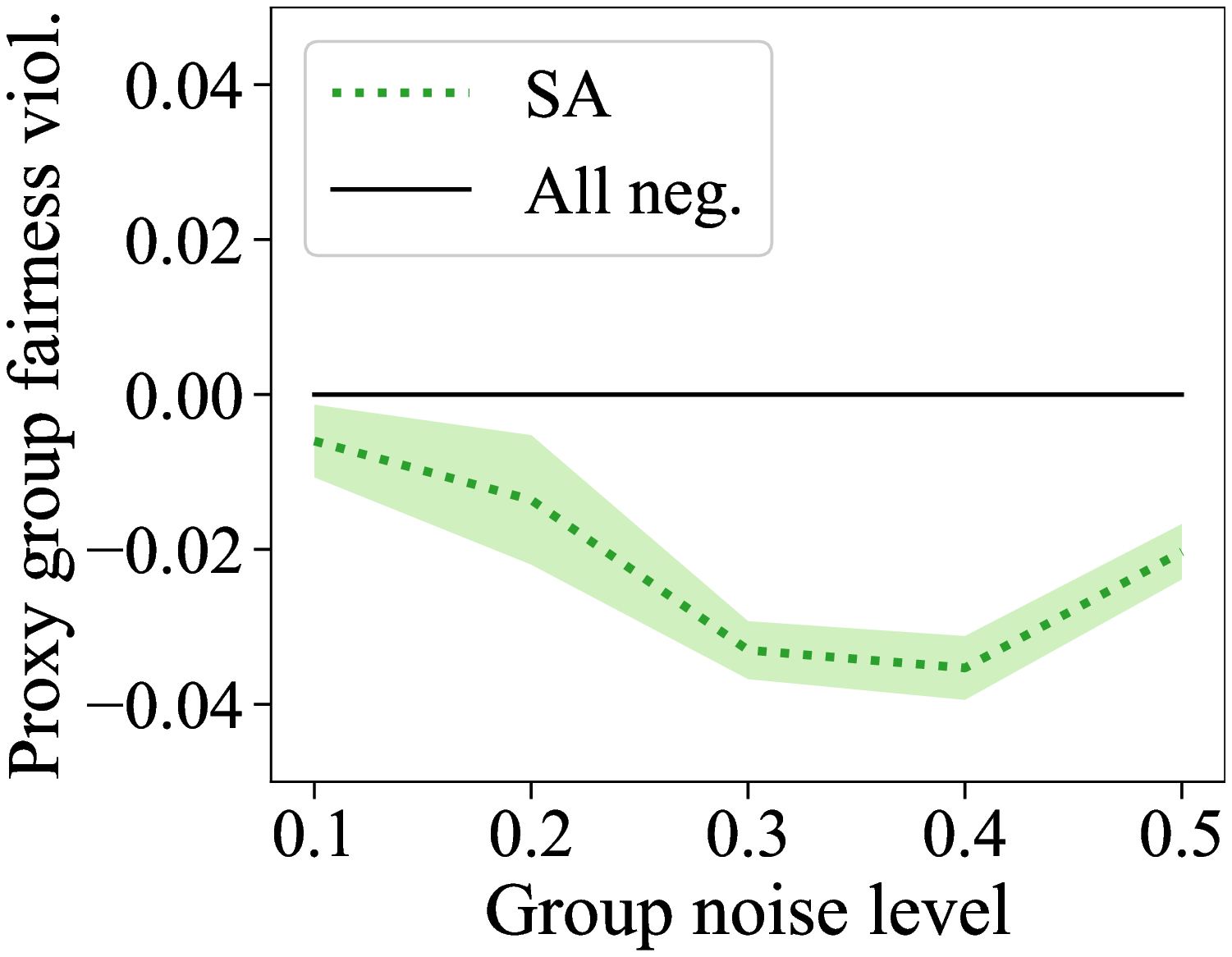} \\
\end{tabular}
\caption{Maximum fairness constraint violations with respect to the noisy groups $\hat{G}$ on the test set for different group noise levels $\gamma$ on the Credit dataset. For each noise level, we plot the mean and standard error over 10 random train/val/test splits. The black solid line illustrates a maximum constraint violation of 0. While the na{\"i}ve approach (\textit{left}) has increasingly higher fairness constraints with respect to the true groups as the noise increases, it always manages to satisfy the constraints with respect to the noisy groups $\hat{G}$}
\label{fig:credit_proxy_constraint_volations} 
\end{center}
\vskip -0.2in
\end{figure*}

\begin{figure*}[!ht]
\vskip 0.2in
\begin{center}
\begin{tabular}{cc} \includegraphics[width=0.4\textwidth]{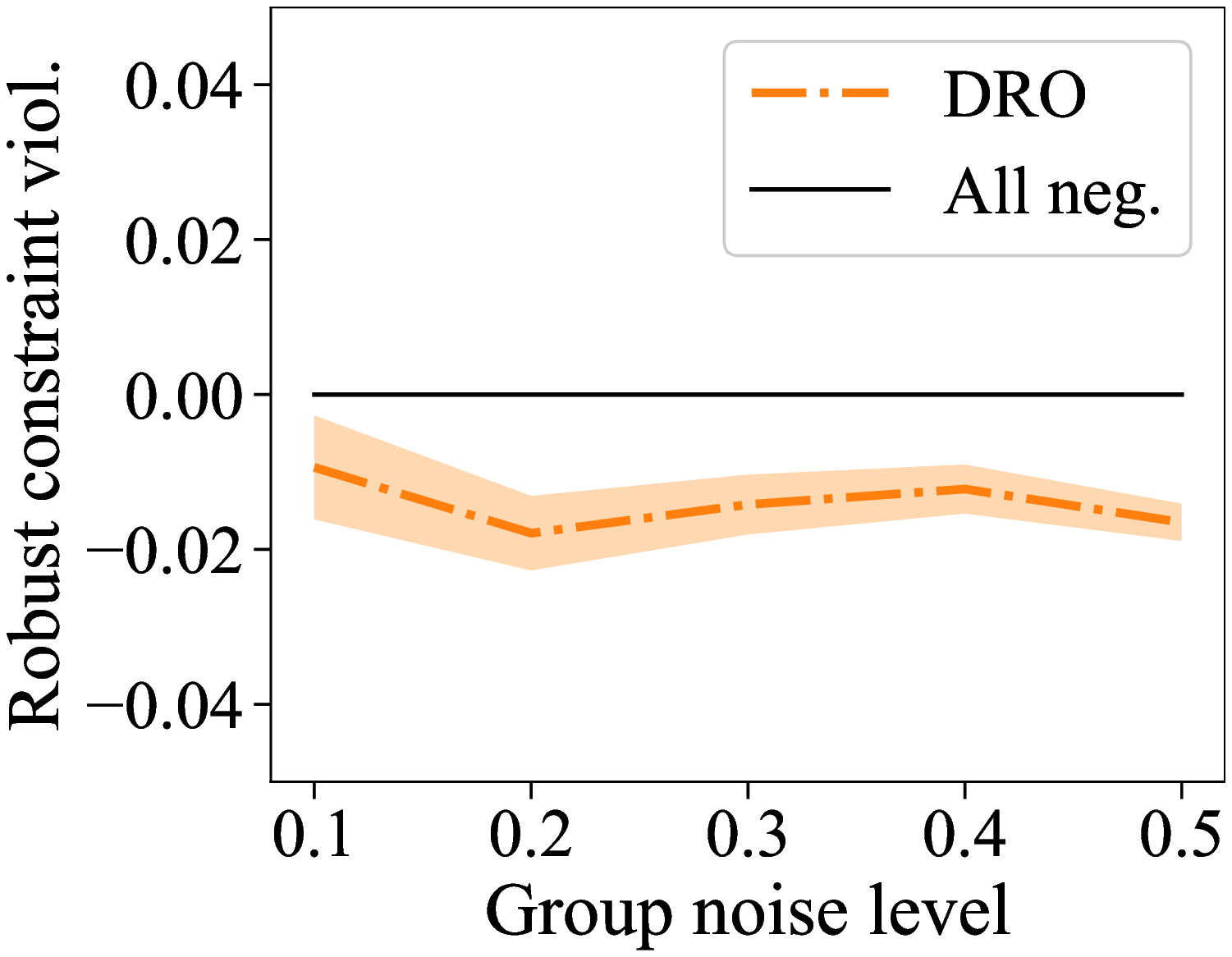} & \includegraphics[width=0.4\textwidth]{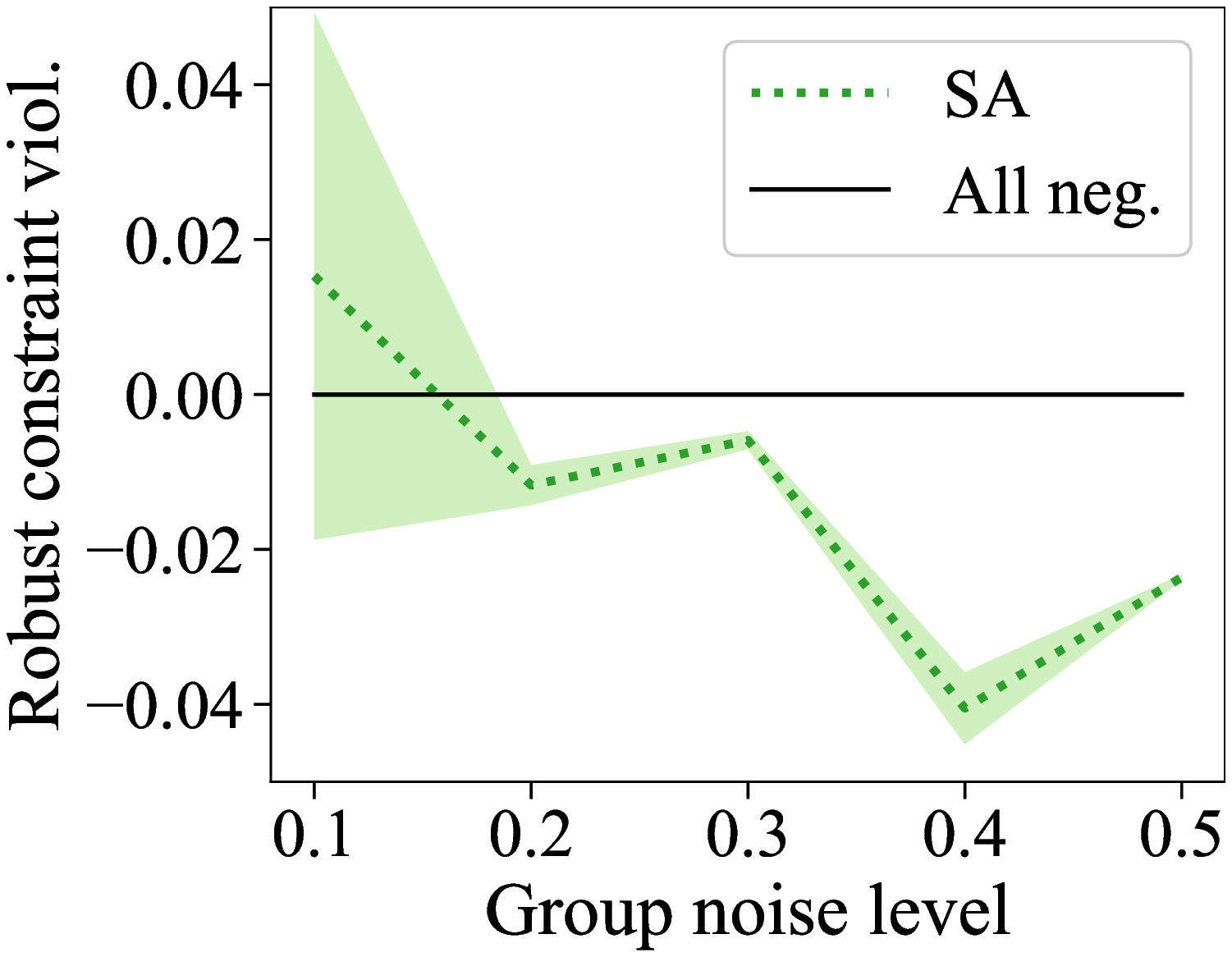} \\
\end{tabular}
\caption{Maximum robust constraint violations on the test set for different group noise levels $P(\hat{G} \neq G)$ on the Credit dataset. For each noise level, we plot the mean and standard error over 10 random train/val/test splits. The black dotted line illustrates a maximum constraint violation of 0. Both the DRO approach (\textit{left}) and the soft group assignments approach (\textit{right}) managed to satisfy their respective robust constraints on the test set on average for all noise levels.}
\label{fig:credit_robust_constraint_volations} 
\end{center}
\vskip -0.2in
\end{figure*}

\begin{table}[!ht]
\caption{Error rate and fairness constraint violations on the true groups for the Credit dataset (mean and standard error over 10 train/test/splits).}
\label{table:credit_exact_vals}
\begin{center}
\begin{small}
\begin{tabular}{{ p{0.80cm}|p{2cm}p{2cm}|p{2cm}p{2cm}|p{2cm}p{2cm}}}
\toprule 
&\multicolumn{2}{c|}{DRO} &\multicolumn{2}{c|}{Soft Assignments} \\
Noise & Error rate & Max $G$ Viol.  & Error rate & Max $G$ Viol.  \\

\midrule \midrule
0.1 & 0.206 $\pm$  0.003 & -0.006 $\pm$  0.006 & 0.182 $\pm$  0.002 & 0.000 $\pm$  0.005 \\

0.2 & 0.209 $\pm$  0.002 & -0.008 $\pm$  0.008 & 0.182 $\pm$ 0.001 & 0.004  $\pm$  0.005 \\

0.3 &0.212 $\pm$ 0.002 & -0.006 $\pm$  0.006 & 0.198 $\pm$ 0.001 & -0.025 $\pm$ 0.007 \\

0.4 & 0.210 $\pm$  0.002 & -0.017 $\pm$ 0.008 & 0.213 $\pm$  0.001 & -0.028 $\pm$  0.005 \\

0.5 & 0.211 $\pm$  0.003& -0.015 $\pm$ 0.006 & 0.211 $\pm$ 0.001 & -0.014 $\pm$ 0.004 \\
\bottomrule
\end{tabular}
\end{small}
\end{center}
\end{table}

\end{document}